\newlength{\textwidthtwo}
\providecommand{\norm}[1]{\lVert#1\rVert}
\renewcommand\Im{\operatorname{Im}}
\newcommand{\R}{\mathbb{R}}
\newcommand{\ul}{\textbf}
\newcommand{\www}{\mbox{\boldmath$\omega$}}
\newcommand{\desda}{\Leftrightarrow}
\DeclareMathAlphabet\gothic{U}{euf}{m}{n}
\begin{document}
\title{A Multi-Orientation Analysis Approach to Retinal Vessel Tracking}


\author{Erik Bekkers         \and
        Remco Duits          \and
        Tos Berendschot      \and
        Bart ter Haar Romeny
}


\institute{E.J. Bekkers \and R. Duits \and B.M. ter Haar Romeny
              \at
              Eindhoven University of Technology \\
              Department of Biomedical Engineering \\
              P.O. Box 513, NL-5600 MB Eindhoven, The Netherlands\\
              \email{e.j.bekkers@tue.nl}
           \and
           T.T.J.M. Berendschot
              \at
              University Eye Clinic Maastricht,
              Maastricht, the Netherlands
}

\date{Received: date / Accepted: date}

\maketitle

\begin{abstract}
This paper presents a method for retinal vasculature extraction based on biologically inspired multi-orientation analysis. We apply multi-orientation analysis via so-called invertible orientation scores, modeling the cortical columns in the visual system of higher mammals. This allows us to generically deal with many hitherto complex problems inherent to vessel tracking, such as crossings, bifurcations, parallel vessels, vessels of varying widths and vessels with high curvature. Our approach applies tracking in invertible orientation scores via a novel geometrical principle for curve optimization in the Euclidean motion group SE(2). The method runs fully automatically and provides a detailed model of the retinal vasculature, which is crucial as a sound basis for further quantitative analysis of the retina, especially in screening applications.
\keywords{Gabor wavelets \and Oriented wavelets \and Orientation scores \and Vessel tracking \and Retina \and Retinal vasculature}
\end{abstract}

\begin{figure*}[!ht]
        \centering
        \begin{subfigure}[b]{0.23\textwidth}
                \centering
                \includegraphics[width=\textwidth]{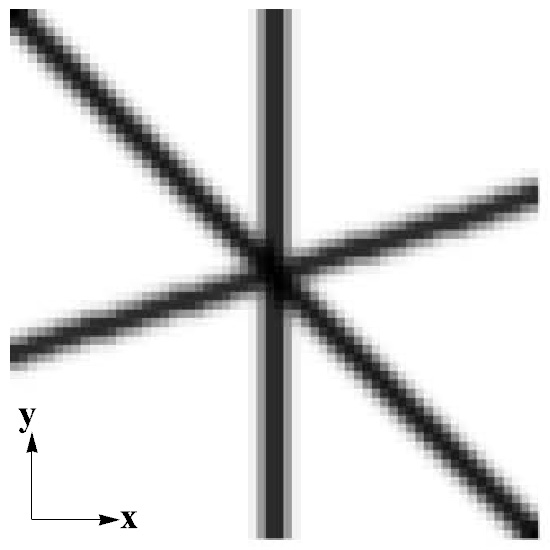}
                \caption{Image}
        \end{subfigure}
        \begin{subfigure}[b]{0.23\textwidth}
                \centering
                \includegraphics[width=\textwidth]{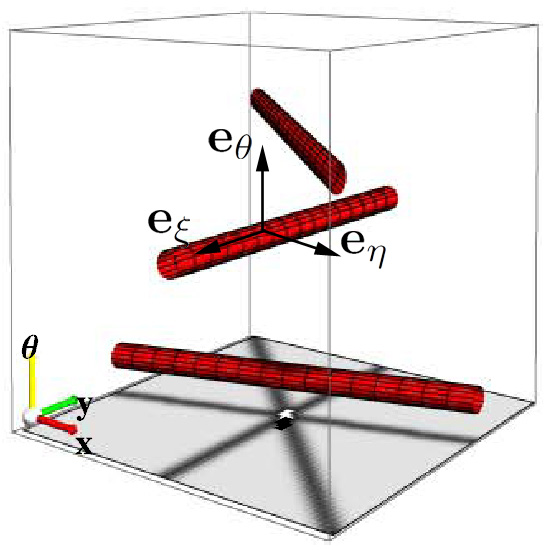}
                \caption{Orientation score}
                \label{fig:OrientationScores:b}
        \end{subfigure}
        \begin{subfigure}[b]{0.23\textwidth}
                \centering
                \includegraphics[width=\textwidth]{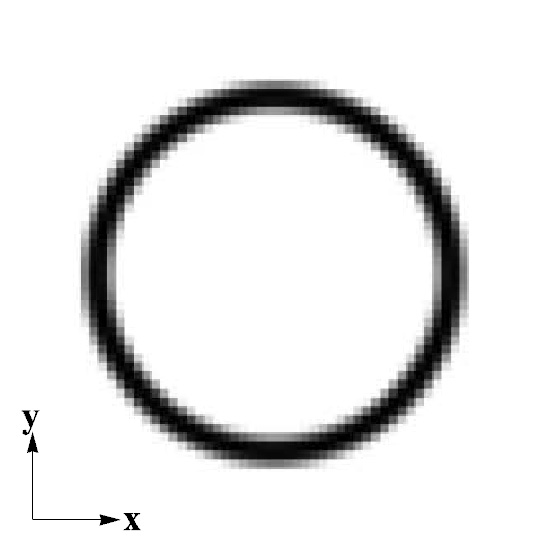}
                \caption{Image}
                \label{fig:OrientationScores:c}
        \end{subfigure}
        \begin{subfigure}[b]{0.23\textwidth}
                \centering
                \includegraphics[width=\textwidth]{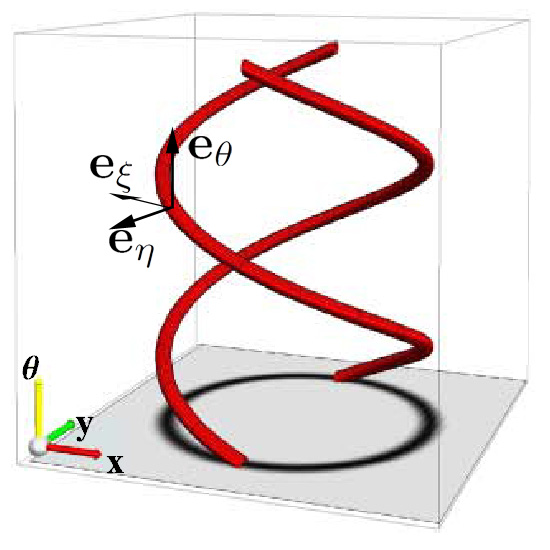}
                \caption{Orientation score}
                \label{fig:OrientationScores:d}
        \end{subfigure}
        \\
        \begin{subfigure}[b]{0.45\textwidth}
                \centering
                \includegraphics[width=\textwidth]{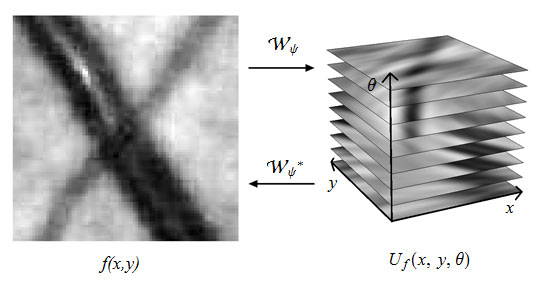}
                \caption{Transforms}
                \label{fig:OrientationScores:e}
        \end{subfigure}
        ~~
        \begin{subfigure}[b]{0.45\textwidth}
                \centering
                \includegraphics[width=\textwidth]{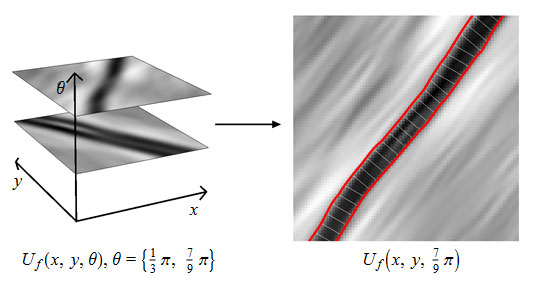}
                \caption{Edge tracking in OS}
                \label{fig:OrientationScores:f}
        \end{subfigure}
        \caption{(a,b): Crossing lines in the image domain (a) are clearly unwrapped in the orientation score domain (b). (c,d): The domain of an orientation score is curved and the orientation dimension is 2$\pi$-periodic. (e) An invertible orientation score transformation $W_\psi$ neatly distributes pixel information from an image $f$ over a set of orientations in an orientation score $U_f$. An orientation score transformation is invertible if there exists a well-posed inverse transformation $W_\psi^{-1} = W_\psi^*$ that allows exact reconstruction from the orientation score domain. (f) Disentanglement of crossing structures in the score domain allows vessel tracking through crossings.}
        \label{fig:OrientationScores}
\end{figure*}

\section{Introduction}
\label{sec:Introduction}
The retinal vasculature is the only part of the body's circulatory system that can be observed non-invasively by optical means. A large variety of diseases affect the vasculature in a way that may cause geometrical and functional changes. Retinal images are therefore not only suitable for investigation of ocular diseases such as glaucoma and age-related macular degeneration (AMD), but also for systemic diseases such as diabetes, hypertension and arteriosclerosis. This makes the retinal vasculature a rewarding and well researched subject and and a growing number of image processing techniques are developed to segment and analyze the retinal vasculature \cite{Abramoff2010,Patton2006}.

\paragraph{Retinal vessel tracking}
Typically there are two types of methods for vessel extraction: pixel classification methods \cite{Krause2013,Philipsen2012,Budai2009,Odstrcilik2009} and vessel tracking methods \cite{Al-Diri2009,Can1999,Yin2012,Grisan2004,Espona2007,Poletti2011,Chutatape1998}. The first type of method classifies pixels as either being part of a vessel or background, resulting in a pixel map in which white pixels represent blood vessels. Of the pixel classification methods, the approach by Krause et al. \cite{Krause2013} is most similar to ours as both methods rely on a transformation to a higher dimensional domain. In their work they applied vessel detection based on the local Radon transform, of which we will show later in this article that this is a special case of an orientation score transform based on cake wavelets. The other type of method, vessel tracking, is based on recursively expanding a model of the vasculature from a set of seed points. One advantage of vessel tracking over pixel classification is that it guarantees connectedness of vessel segments, whereas in pixel classification methods this is not necessarily the case. For further quantitative analysis of the vasculature, tracking algorithms are preferred because they intrinsically provide geometrical and topological information. For example, vessel widths, curvatures, segment lengths, bifurcation density and other features can relatively easily be extracted from the generated vessel models.

Several different approaches to vessel tracking can be found in literature. There are methods based on active contours \cite{Al-Diri2009,Espona2007}, matched filters \cite{Can1999,Grisan2004,Chutatape1998}, and probabilistic models \cite{Yin2012} among others \cite{Yin2012,Poletti2011}. The majority of papers on vessel tracking report limitations regarding tracking blood vessels through crossings, bifurcations and/or more complex situations. In this paper we aim at solving these problems by means of orientation analysis via so called \emph{orientation scores}, which are objects in which image information is neatly organized based on position and orientation \cite{Duits2005}. We propose two new tracking algorithms that act directly on the domain of an orientation score, and we show that these methods are highly capable of dealing with the aforementioned problems. Afterwards, we will extend one of the orientation score based algorithms to a vasculature tracking algorithm, which is capable of constructing models of the complete retinal vasculature.

\paragraph{Orientation scores}
Inspired by the cortical orientation columns in the primary visual cortext \cite{Hubel2009}, Duits et al. developed a mathematical framework for image processing and analysis based on \emph{2D orientation scores} \cite{Duits2007}. Similar to the perceptual organization of orientation in the visual cortex, a 2D orientation score is an object that maps 2D positions and orientation angles $(x,y,\theta)$ to complex scalars. Instead of assigning an orientation to each position and thereby extending the codomain, we extend the domain of the image where our modeling can deal with multiple orientations per position. When constructing an orientation score it is crucial one does not tamper the data evidence before tracking takes place. Therefore we consider \emph{invertible} orientation scores that provide a full comprehensive overview of how the image is decomposed out of local (multiple) orientations. In invertible orientation scores, all orientated structures are disentangled, see Fig.~\ref{fig:OrientationScores}.

\paragraph{Paper structure}
The article is structured as follows: First, in Section~\ref{sec:OrientationScores}, theory about orientation scores is provided. In Section~\ref{sec:VesselTrackingUsingOrientationScores}, two vessel tracking approaches based on orientation scores are described:
\begin{itemize}
    \item the ETOS-algorithm: an all-scale approach based on a new class of wavelets, the so-called \emph{cake wavelets}
    \item the CTOS-algorithm: a multi-scale approach based on the classical Gabor wavelets
\end{itemize}
Both tracking methods rely on a novel generic geometrical principle for curve optimization within the Euclidean motion group, which is explained and mathematically underpinned in Appendix \ref{app:optimalpaths}. We will show that ETOS generally works with different types of orientation scores, however with best performance on invertible orientation scores based on cake wavelets \cite{Duits2005,Duits2007a} (in comparison to non-invertible orientation scores based on Gabor wavelets). The second approach requires a multi-scale and orientation decomposition. The two approaches are described in Section~\ref{sec:Methods}, and evaluated in Section~\ref{sec:validation}. It will turn out that ETOS based on cake wavelets has several advantages over CTOS based on Gabor wavelets. We have validated ETOS more extensively by comparing it to the state of the art in retinal vessel tracking \cite{Al-Diri2009,Bankhead2012,Xu2011} using the publicly availabe REVIEW database \cite{Al-Diri2009}. In Section~\ref{sec:vasculatureTrackingMethods} we describe our vasculature tracking algorithm, composed of proper initialization, junction detection and junction resolver algorithms. In Section~\ref{sec:vasculatureResults} the correctness of topology of the models is evaluated using images of the HRFI-database \cite{Budai2011}. General conclusions can be found in Section~\ref{conclusion}.

\section{Orientation scores}
\label{sec:OrientationScores}

Orientation detection and encoding is a common subject in image processing. E.g., Frangi et al. \cite{Frangi1998} detect retinal blood vessels by calculating a vesselness value, obtained by eigenvalue analysis of the Hessian matrix. In \cite{Soares2006} this is done by taking the maximum modulus over a set of oriented Gabor wavelet responses. Besides the presence of local orientations, the orientation value(s) may be relevant as well. For instance in vessel tracking and certain other image enhancement techniques like coherence enhanced diffusion \cite{Weickert1999} and orientation channel smoothing \cite{Felsberg2006}

The most commonly used methods to detect orientations are capable of detecting only one orientation per position. However, by using oriented wavelets and steerable filters \cite{Freeman1991,Simoncelli1994,Perona1991,Granlund1995} orientation confidence measures can be extracted for any given orientation, thus allowing for the detection of multiple orientations per position. Oriented wavelets allow for a transformation from an image to an orientation score, where each locally present combination of position and orientation is mapped to a single value \cite{Kalitzin1999,Duits2007,Duits2007a}, see Fig.~\ref{fig:OrientationScores}.

In his pioneering paper, Kalitzin \cite{Kalitzin1999} proposed a specific wavelet, given by
\begin{equation}
\label{eq:kalitzin}
\psi(x,y)=\cfrac{1}{\sqrt{2\pi}} \sum\limits_{n=1}^N \cfrac{\overline{z}^n}{\sqrt{n!}}e^{-\cfrac{|z|^2}{2}}, z = x + i y,
\end{equation}
that, by approximation, guaranteed invertibility from the orientation scores back to the original image, without loss of information. These wavelets belong to a specific class of so called proper wavelets (wavelets that allow well-posed reconstruction \cite{Duits2007,Duits2007a}), and are found by expansion in eigenfunctions of the harmonic oscillator. The advantage of this expansion is that this steerable basis is Fourier invariant, allowing to control the wavelet shape in both the spatial and Fourier domain. The disadvantages of such kernels $\psi$ are however that 1) their series do not converge in $\mathbb{L}_2$ and truncation of the pointwise converging series heavily affects the shape and induces undesirable oscillations \cite[p.140-142]{Duits2005}; 2) the wavelets $\psi$ explode in the radial direction along its orientation, e.g. for the case in (\ref{eq:kalitzin}) it explodes with $\sim (8\pi)^{1/4} \sqrt{r} (1-O(r^{-2})) $ \cite[App.7.3]{Duits2005}; 3) does not allow approximate reconstruction by integration over $S^1$ only.

In Kalitzin's paper the well-posedness of the reconstruction was not quantified. Analysis of the well-posedness was done by Duits using the function $M_\psi$, which will be explained in more detail in Section 2.1. The function $M_\psi$ indicates how well spatial frequencies in an image are preserved after a transformation, and it can be seen as a measure for stability of the inverse transformation \cite{Duits2007a,Fuehr2005,Duits2005,Duits2007}. Within these papers a new class of proper wavelets called \emph{cake wavelets} are presented. These are oriented wavelets, able to capture all image scales without any bias to a specific scale. This property is crucial in our vessel edge tracking approach since it reduces the need for a multi-scale approach, as will become more clear in the next sections.

In contrast, the Gabor wavelet is another oriented wavelet, which is widely used in the field of image processing because of its capability to detect oriented features at a certain scale. An example of segmenting crossing lines using Gabor wavelet based orientation scores is given in \cite{Chen2000}. The property to tune the Gabor wavelet to capture features at a specific scale can be very useful, but the scale selection also implies exclusion of other scales. The single-scale Gabor wavelet transformation causes information from the original image to be lost and the transformation is therefore non-invertible. In order to introduce invertibility, one has to use a multi-scale approach \cite{Fischer2007}, which is also computationally more expensive.

Since no information should be lost during the orientation score transformation $W_\psi : f \rightarrow U_f$ (see Fig.~\ref{fig:OrientationScores:e}), the notion of invertibility of an orientation score transformation is essential. The invertibility allows us to relate operators on images to operators on orientation scores, and vice versa. Using invertible orientation scores, one can employ the automatic disentanglement of local orientations involved in a crossing (cf. Fig.~\ref{fig:OrientationScores:f}). For line/contour \emph{enhancement}, this has lead to a generic crossing preserving diffusion method \cite{Franken2009,Duits2010a,Franken2008} which outperformed related diffusions acting directly on the image domain. For \emph{tracking} of crossing blood vessels a similar advantage can be employed, as we will show in this article.

\subsection{Construction of orientation scores}
\label{sec:ConstructionOfOrientationScores}
Consider a 2D image $f$ as a function $f : \mathbb{R}^2\rightarrow \mathbb{R}$, with compact support on the image domain $\Omega = [0,X] \times [0,Y]$, with image dimensions $X,Y \in \mathbb{R^+}$, and which we assume to be square integrable, i.e. $f \in \mathbb{L}_2 (\mathbb{R}^2)$. An orientation score, constructed from image $f$, is defined as a function $U_f : \mathbb{R}^2 \times S^1 \rightarrow \mathbb{C}$ and depends on two variables ($\mathbf{x},\theta$),
where $\mathbf{x}=(x_1,x_2) \in \mathbb{R}^2$ denotes position and $\theta \in [0,2\pi]$ denotes the orientation variable.

An orientation score $U_f:=W_\psi f$ of a function $f$ can be constructed by means of convolution with some anisotropic wavelet $\psi$ via
\begin{equation}
\label{eq:ostransform}
U_f(\mathbf{x},\theta) = (\check{\overline{\psi}}_\theta *  f)(\mathbf{x}) =
\int_{\mathbb{R}^2}\overline{\psi(\mathbf{R}_\theta^{-1}(\mathbf{y}-\mathbf{x}))}f(\mathbf{y})d\mathbf{y},
\end{equation}
where $\psi \in \mathbb{L}_1(\mathbb{R}^2)\bigcap\mathbb{L}_2(\mathbb{R}^2)$ is the convolution kernel with orientation $\theta = 0$, i.e. aligned with the vertical axis in our convention, and where $W_\psi$ denotes the transformation between image $f$ and orientation score $U_f$. The overline denotes complex conjugate, $\check{\psi}_\theta(\mathbf{x}) =  \psi_\theta(\mathbf{-x})$ and the rotation matrix $\mathbf{R}_\theta$ is given by
\begin{equation}
\mathbf{R}_\theta =
\left(
\begin{array}{cc}
\cos\theta & -\sin\theta \\
\sin\theta & \cos\theta
\end{array}
\right),
\end{equation}
see Fig.~\ref{fig:OrientationScores:e}.
Exact reconstruction\footnote{The reconstruction formula can easily be verified using the convolution theorem, $\mathcal{F}[f*g] = \cfrac{1}{2\pi}\mathcal{F}[f]\mathcal{F}[g]$, and the fact that $\mathcal{F}\left[\check{\overline{\psi}}_\theta\right] = \overline{\mathcal{F}[\psi_\theta]}$} from the orientation scores constructed by (\ref{eq:ostransform}) is given by
\begin{equation}
\label{eq:reconstruction}
\begin{array}{ll}
f &= W_\psi^* W_\psi f\\
  &= \mathcal{F}^{-1}
    \left[
        M_\psi^{-1}\mathcal{F}
        \left[
            \mathbf{x} \mapsto \cfrac{1}{2\pi} \int_{0}^{2\pi} (\psi_\theta * U_f(\cdot,\theta)) (\mathbf{x}) d\theta
        \right]
    \right],
\end{array}
\end{equation}
where $\mathcal{F}$ is the unitary Fourier transform on $\mathbb{R}^2$, where $W_\psi^*$ denotes the adjoint wavelet transformation (see \cite{Duits2005} for details), and $M_\psi:\mathbb{R}^2\rightarrow
\mathbb{R}^+$ is calculated by
\begin{equation}
M_\psi = 2\pi \int_0^{2\pi}\overline{\mathcal{F}[\psi_\theta]}\mathcal{F}[\psi_\theta]d\theta
= \int_0^{2\pi}\vert\mathcal{F}[\psi_\theta]\vert^2d\theta.
\end{equation}
\begin{figure}[t]
\centering
\includegraphics[width=\textwidthtwo]{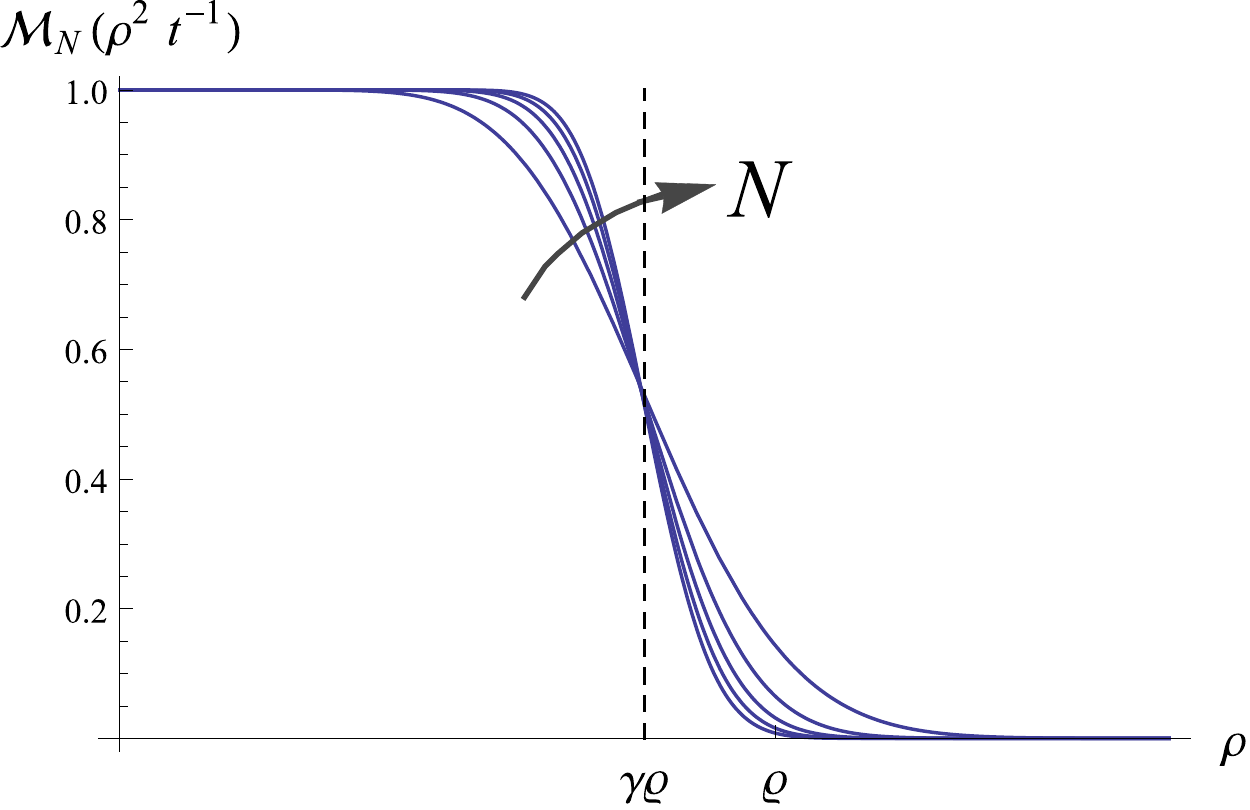}
\caption{Plots of $\mathcal{M}_N\left(\rho^2/ t\right)$, with $t = \cfrac{2 (\gamma \varrho)^2}{1+2N}$ for $N =$ 5, 10, 15, 20, 25}
\label{fig:Mn}
\end{figure}
The function $M_\psi$ provides a stability measure of the inverse transformation. Theoretically, reconstruction is well-posed, as long as
\begin{equation}
\label{eq:conditionmpsi}
0 < \delta < M_\psi(\www) < M < \infty,
\end{equation}
where $\delta$ is arbitrarily small, since then the condition number of $W_{\psi}$ is bounded by $M \: \delta^{-1}$, see \cite[Thm.~20]{Duits2005}. If we do not restrict ourselves to band-limited/disk-limited functions, this requirement (Eq.~(\ref{eq:conditionmpsi})) bites the assumption $\psi \in \mathbb{L}_1(\mathbb{R}^2)\bigcap\mathbb{L}_2(\mathbb{R}^2)$ since it implies $\mathcal{F}\psi$ is a continuous function vanishing at infinity (see e.g. \cite{Rudin1973}) and so is $M_\psi$. In that case we have to resort to distributional wavelet transforms\footnote{This is comparable to the construction of the unitary Fourier transform $F:\mathbb{L}_2(\mathbb{R}^2)\rightarrow\mathbb{L}_2(\mathbb{R}^2)$ whose kernel $k(\www,\mathbf{x}) = e^{-i \www \cdot \mathbf{x}}$ is also not square integrable.} whose closure is again a unitary map from $\mathbb{L}^2(\mathbb{R}^2)$ into a reproducing kernel subspace of $\mathbb{L}_2(\mathbb{R}^{2}\times S^{1})$, for details see Appendix \ref{app:distributionalostrafo}.

In practice, to prevent numerical problems, it is best to aim at $M_\psi(\www) \approx 1$ for $\Vert\www\Vert < \varrho$, where $\varrho$ is the Nyquist frequency of the discretely sampled image, meaning that all relevant frequency components within a ball of radius $\varrho$ are preserved in the same way. Because of the discontinuity at $\Vert\www\Vert = \varrho$, which causes practical problems with the discrete inverse Fourier transform, we will use wavelets $\psi$, with $M_\psi(\www) = \mathcal{M}_{N}\left(\rho^2 t^{-1}\right)$, $N \in \mathbb{N}$, $t > 0$ and $\rho = \Vert\www\Vert$ where
\begin{equation}
\mathcal{M}_N\left(\rho^2 t^{-1}\right) = e^{-\cfrac{\rho^2}{t}}\sum_{k=0}^{N}\cfrac{\left(\rho^2 t^{-1}\right)^{k}}{k!} \leq 1,
\label{eq:Mn}
\end{equation}
where $t$ denotes a scale parameter. To fix the inflection point close to the Nyquist frequency, say at $\rho = \gamma \varrho$ with $0\ll\gamma<1$, we set $t = \cfrac{2 (\gamma \varrho)^2}{1+2N}$ (to fix the bending point: $\cfrac{d^2}{d\rho^2}\mathcal{M}_N(\rho^2 t^{-1})|_{\rho = \gamma \varrho}=0$, see Fig.~\ref{fig:Mn}).
The function $\mathcal{M}_N$ basically is a Gaussian function at scale $t$, multiplied with the Taylor series of its inverse up to a finite order 2N to ensure a slower decay. The function $\mathcal{M}_N$ smoothly approximates 1 on the domain $\rho \in [0,\varrho]$, see Fig.~\ref{fig:Mn}. A wavelet $\psi: \mathbb{R}^2 \rightarrow \mathbb{C}$ with such a $M_\psi$ will be called a proper wavelet.

The methods presented in this paper are best described by a moving frame of reference that lives in the tangent bundle ($T(\mathbb{R}^2 \times S^1$)) above the domain $\mathbb{R}^2 \times S^1$ of an orientation score. This moving frame of reference is given by the map
\begin{equation}
\begin{array}{c}
\mathbb{R}^{2}\times S^{1} \ni g=(x,y,\theta) \\ \mapsto \\(\cos(\theta)\mathbf{e}_x+\sin(\theta)\mathbf{e}_y,-\sin(\theta)\mathbf{e}_x+\cos(\theta)\mathbf{e}_y,\mathbf{e}_\theta) \\\in T_g(\mathbb{R}^{2}\times S^{1})\times T_g(\mathbb{R}^{2}\times S^{1}) \times T_g(\mathbb{R}^{2}\times S^{1}),
\end{array}
\end{equation}
with $\mathbf{e}_x = (1,0,0)$ and $\mathbf{e}_y = (0,1,0)$, $\mathbf{e}_{\theta}=(0,0,1)$ tangent vectors (formally attached at $g$).
To simplify the notation we introduce coordinates $\xi = x\cos(\theta) + y\sin(\theta)$, $\eta = -x\sin(\theta) + y\cos(\theta)$
 and tangent vectors
\begin{equation}
\label{eq:frameOfReference}
\begin{array}{lll}
\mathbf{e}_\xi &= \cos(\theta)\mathbf{e}_x +\sin(\theta)\mathbf{e}_y &= (\cos(\theta),\sin(\theta),0)\\
\mathbf{e}_\eta &= -\sin(\theta)\mathbf{e}_x + \cos(\theta)\mathbf{e}_y &= (-\sin(\theta),\cos(\theta),0)\\
\mathbf{e}_\theta &= (0,0,1).&
\end{array}
\end{equation}
As a result we have that at a given point in the orientation score $(x,y,\theta)$, the tangent vector $\mathbf{e}_\xi$ points in the spatial direction aligned with the orientation score kernel used at layer $\theta$, see Fig.~\ref{fig:OrientationScores:b} and \ref{fig:OrientationScores:d}. We will often rely on the notation in Eq.~(\ref{eq:frameOfReference}) in the remainder of this article.

\subsection{The domain of orientation scores: SE(2)}
The domain of an orientation score is the set $\mathbb{R}^2 \times S^1$. However, from Fig.~\ref{fig:OrientationScores:d} one can recognize a curved geometry on the domain of orientation scores. This is reflected in the fact that $\mathbf{e}_\xi$ and $\mathbf{e}_\eta$ vary with $\theta$. This is modeled by imposing a group structure on the set $\mathbb{R}^2 \times S^1$. This group structure comes from rigid body motions $g = (\mathbf{x},\mathbf{R}_\theta)$ acting on $\mathbb{R}^2 \times S^1$ via
\begin{equation}
\label{action}
g\cdot (\mathbf{x}',\theta') = (\mathbf{R}_\theta \mathbf{x}' + \mathbf{x}, \theta + \theta').
\end{equation}
Note that $(\mathbf{x},\theta) = (x, \mathbf{R}_\theta)(\mathbf{0},0)$ which allows us to uniquely identify
$$
\mathbb{R}^2 \times S^1 \ni (\mathbf{x},\theta) \leftrightarrow (\mathbf{x},\mathbf{R}_\theta) = g \in \mathbb{R}^2 \rtimes SO(2),
$$
i.e. to identify the space of positions and orientations with the rigid body motion group $SE(2) = \mathbb{R}^2 \rtimes SO(2)$. As the combination of two rigid body motions is again a rigid body motion, $SE(2)$ is equipped with the group product:
\begin{equation}\label{product}
g \cdot g' = (\mathbf{x},\mathbf{R}_\theta) (\mathbf{x}',\mathbf{R}_{\theta'}) = (\mathbf{R}_\theta \mathbf{x}' + \mathbf{x}, \mathbf{R}_{\theta+\theta'}),
\end{equation}
which is consistent with Eq.~(\ref{action}). The moving frame of reference (\ref{eq:frameOfReference}) corresponds to the so-called left-invariant vector fileds in SE(2). For details see \cite[Fig.~2.5a]{Franken2008}.

\subsection{Cake wavelets}
\label{sec:CakeWavelets}
Cake wavelets are constructed from the Fourier domain. By using polar coordinates, the Fourier domain can be uniformly divided into $N_o$ equally wide samples ("pieces of cake") in the angular direction, see Fig.~\ref{fig:BSplines}. The spatial wavelet is given by
\begin{equation}
\psi^{cake}(\mathbf{x})=\mathcal{F}^{-1}[\tilde{\psi}^{cake}](\mathbf{x})G_{\sigma_s}(\mathbf{x}),
\label{eq:windowing}
\end{equation}
where $G_{\sigma_s}$ is a Gaussian window, with $0 < 1 \ll \sigma_s$, that is used to avoid long tails in the spatial domain. Note that multiplication with a large window in the spatial domain corresponds to a convolution with a small window in the Fourier domain, such that $M_\psi$ is hardly affected with $\sigma_s$ sufficiently large. Function $\tilde{\psi}^{cake}$ is given by
\begin{equation}
\label{eq:fcake}
\tilde{\psi}^{cake}(\www) =
    B_k\left(\cfrac{(\varphi\mod2\pi)-\pi/2}{s_\theta}\right)\mathcal{M}_N(\rho),
\end{equation}
with $\www = (\rho\cos\varphi,\rho\sin\varphi)$ and where $s_\theta = 2\pi N_o^{-1}$ is the angular resolution in radians. The function $\mathcal{M}_n$
specifies the radial function in the Fourier domain given by
(\ref{eq:Mn}). $B_k$ denotes the $k$th order B-spline given by

\begin{equation}
\begin{array}{l}
B_k(x) = (B_{k-1} * B_0)(x),\\
B_0(x) =
\left\{ \begin{array}{ll}
    1 & \mbox{if $-1/2 < x < +1/2$}\\
    0 & \mbox{otherwise}\end{array} \right.
    .
\end{array}
\end{equation}
Orientation scores constructed from an image $f$ using cake wavelets are denoted by $U_f^{cake}$. Fig.~\ref{fig:orientationScoreFiltering}c demonstrates a typical cake wavelet based orientation score for a certain orientation.

\begin{figure}[t]
\begin{center}
\includegraphics[width=\textwidthtwo]{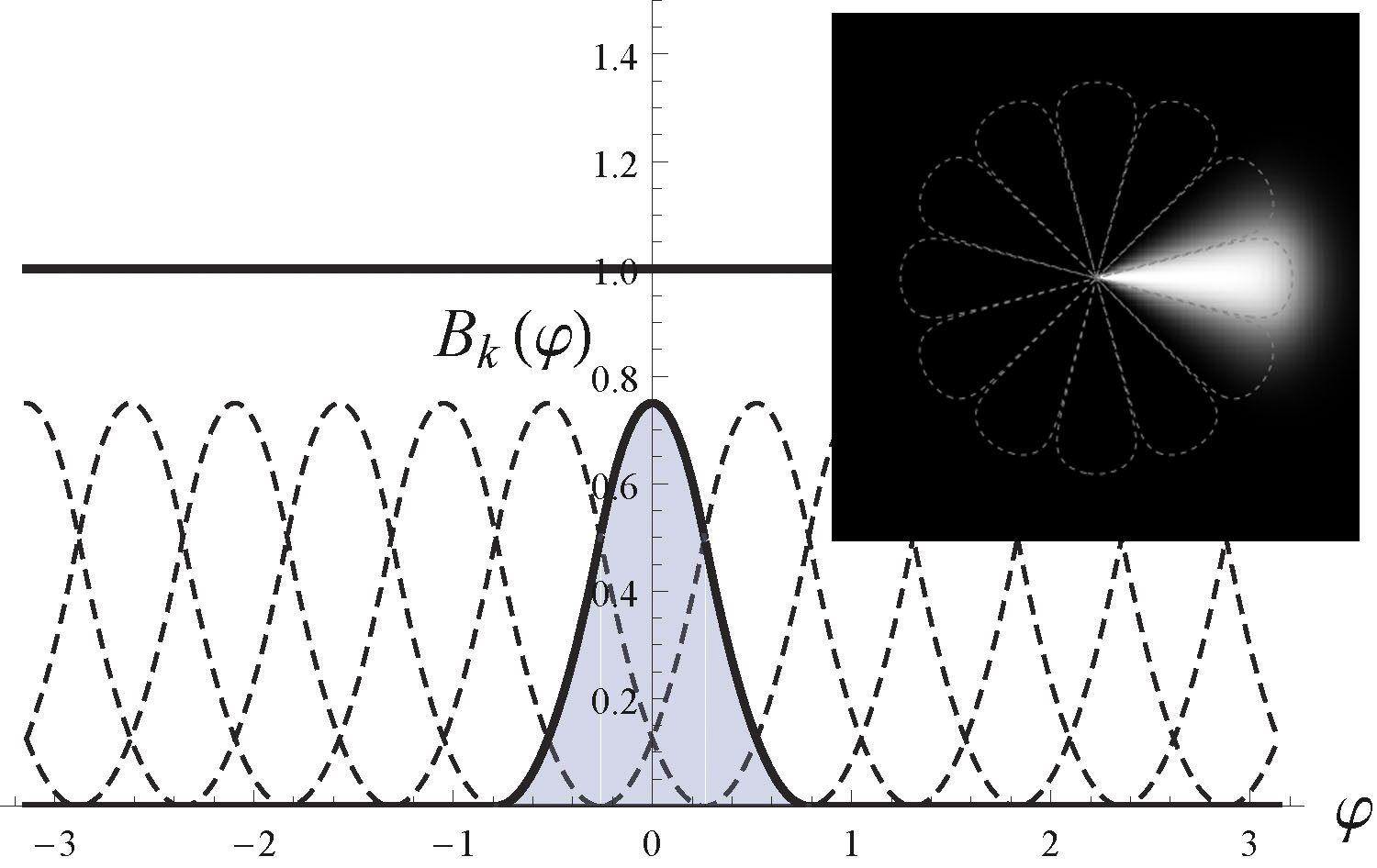}
\end{center}
\caption{The use of B-splines in the construction of cake wavelets. Plot showing quadratic B-splines (k=2), the sum of all shifted B-splines add up to 1. The image in the upper right corner illustrates a Fourier cake wavelet $\tilde{\psi}^{cake}(\www)$ constructed using quadratic B-splines and $\mathcal{M}_N$ with $N=60$, according to Eq.~(\ref{eq:fcake}).}
\label{fig:BSplines}
\end{figure}

\begin{figure*}[!ht]
        \centering
        \begin{subfigure}[t]{0.19\textwidth}
                \centering
                \includegraphics[width=\textwidth]{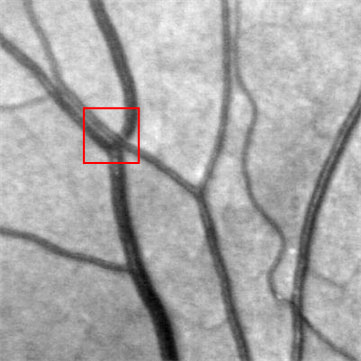}
                \caption{Image selection}
        \end{subfigure}
        \begin{subfigure}[t]{0.19\textwidth}
                \centering
                \includegraphics[width=\textwidth]{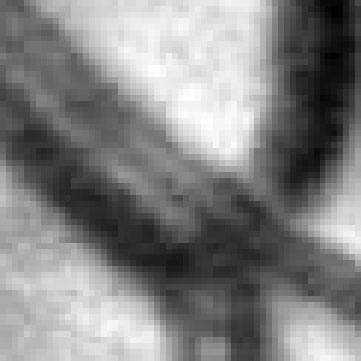}
                \caption{Zoomed image $f(\cdot)$}
        \end{subfigure}
        \begin{subfigure}[t]{0.19\textwidth}
                \centering
                \includegraphics[width=\textwidth]{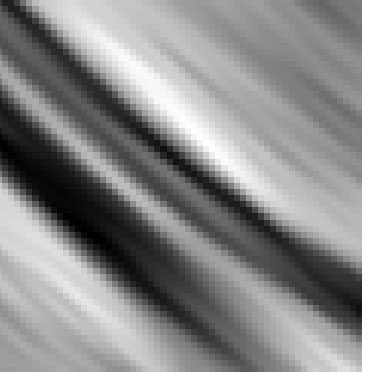}
                \caption{$U_f^{cake}(\cdot,\theta_v)$}
                \label{fig:orientationScoreFiltering:c}
        \end{subfigure}
        \begin{subfigure}[t]{0.19\textwidth}
                \centering
                \includegraphics[width=\textwidth]{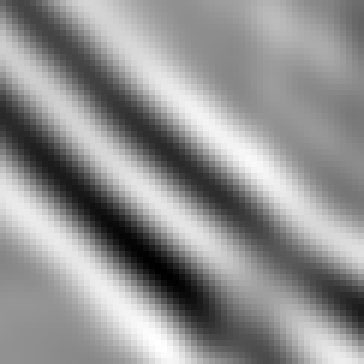}
                \caption{$U_{f,a_1}^{Gabor}(\cdot,\theta_v)$}
        \end{subfigure}
        \begin{subfigure}[t]{0.19\textwidth}
                \centering
                \includegraphics[width=\textwidth]{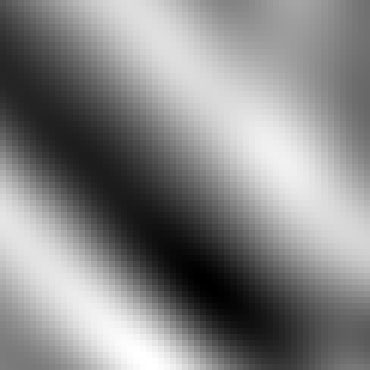}
                \caption{$U_{f,a_3}^{Gabor}(\cdot,\theta_v)$}
        \end{subfigure}
        \caption{Parallel blood vessels and orientation scores. (a) A selection of a fundus image and (b) a close-up view. (c-d) Slices of orientation scores constructed from (b) using cake wavelets and Gabor wavelets at scale $a_1=3*10/(2\pi)$ and $a_3=3*30/(2\pi)$ respectively. The slices correspond to the orientation of the two parallel blood vessels, $\theta_v$.}
        \label{fig:orientationScoreFiltering}
\end{figure*}

\begin{figure*}[!t]
        \centering
        \begin{subfigure}[b]{0.24\textwidth}
                \centering
                \includegraphics[width=\textwidth]{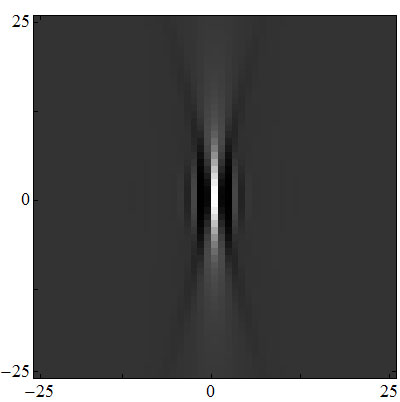}
        \end{subfigure}
        \begin{subfigure}[b]{0.24\textwidth}
                \centering
                \includegraphics[width=\textwidth]{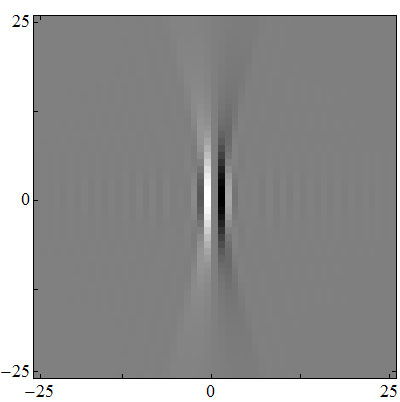}
        \end{subfigure}
        \begin{subfigure}[b]{0.25\textwidth}
                \centering
                \includegraphics[width=\textwidth]{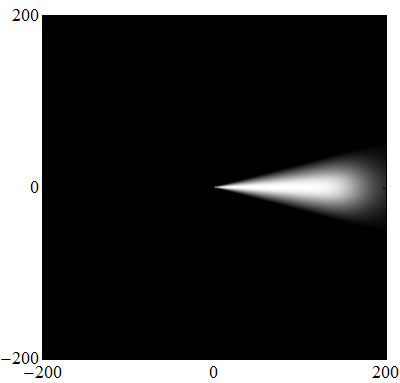}
        \end{subfigure}
        \begin{subfigure}[b]{0.25\textwidth}
                \centering
                \includegraphics[width=\textwidth]{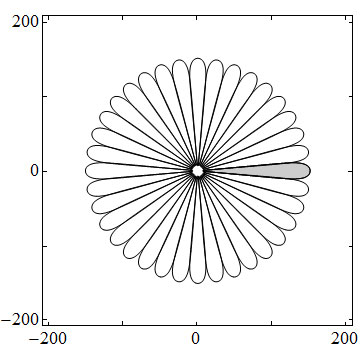}
        \end{subfigure}
        \\
        \begin{subfigure}[b]{0.24\textwidth}
                \centering
                \includegraphics[width=\textwidth]{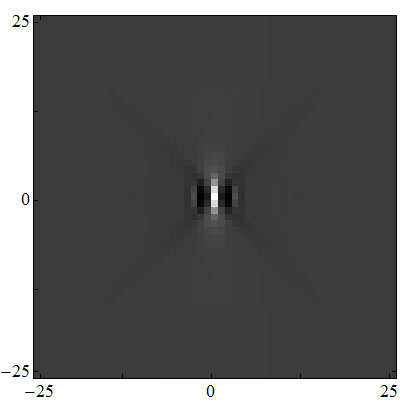}
        \end{subfigure}
        \begin{subfigure}[b]{0.24\textwidth}
                \centering
                \includegraphics[width=\textwidth]{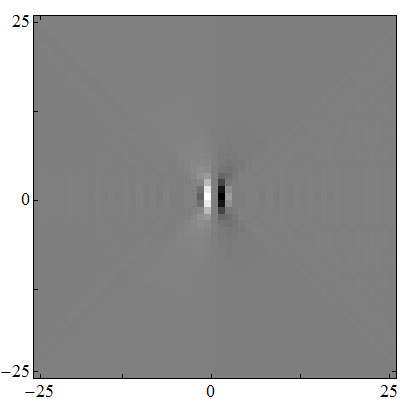}
        \end{subfigure}
        \begin{subfigure}[b]{0.25\textwidth}
                \centering
                \includegraphics[width=\textwidth]{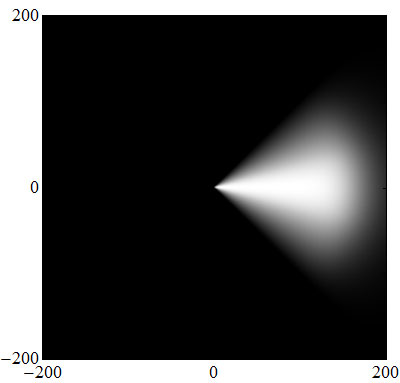}
        \end{subfigure}
        \begin{subfigure}[b]{0.25\textwidth}
                \centering
                \includegraphics[width=\textwidth]{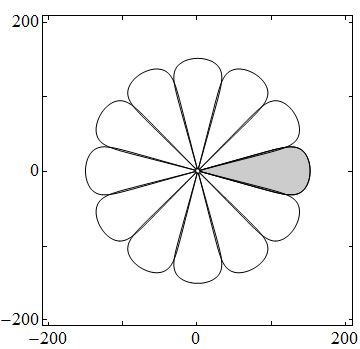}
        \end{subfigure}
        \\
        \begin{subfigure}[b]{0.24\textwidth}
                \centering
                \includegraphics[width=\textwidth]{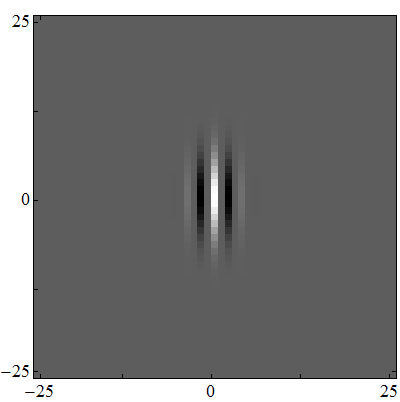}
        \end{subfigure}
        \begin{subfigure}[b]{0.24\textwidth}
                \centering
                \includegraphics[width=\textwidth]{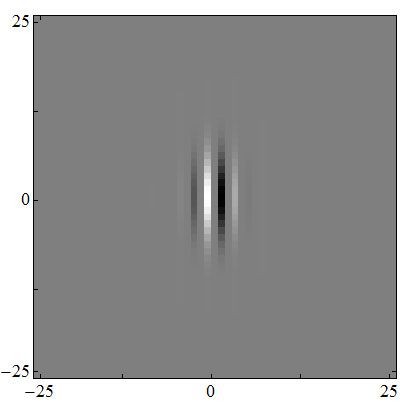}
        \end{subfigure}
        \begin{subfigure}[b]{0.25\textwidth}
                \centering
                \includegraphics[width=\textwidth]{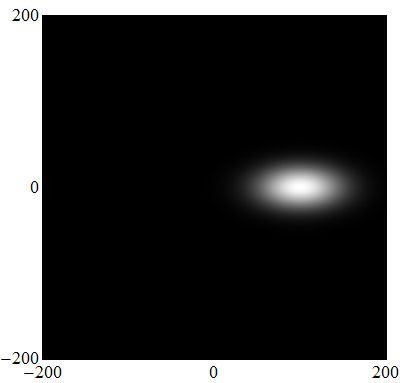}
        \end{subfigure}
        \begin{subfigure}[b]{0.25\textwidth}
                \centering
                \includegraphics[width=\textwidth]{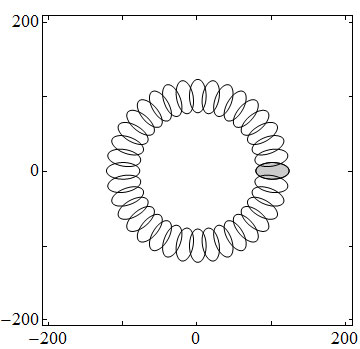}
        \end{subfigure}
\caption{Overview of the wavelets used in this paper. From left to right: the real and imaginary parts of the wavelet in the spatial domain (zoomed by a factor of 8 for the sake of visualization), the wavelet in the Fourier domain and an illustration of the Fourier domain coverage by the filters where contours are drawn at 80\% of the filter maximum.
Note that this last figure also gives an impression of $M_{\psi}$. The top row shows the cake wavelet constructed using $N_o = 36$, middle row with $N_o = 12$ and the bottom
row shows the Gabor wavelet at scale $a = 6/\pi$ and $N_o = 36$.}
\label{fig:filters}
\end{figure*}

The approach of constructing wavelets directly from the Fourier domain allows indirect control over the spatial shape of the filter, and it can easily be adapted. For example, the number of orientations $N_o$ specifies the angular resolution $s_{\theta}$: If $N_o$ is large, the resolution in the orientation dimension is large and the filters become very narrow. This is illustrated in Fig.~\ref{fig:filters}. The cut-off frequency (at the inflection point) of the function $\mathcal{M}_n$, which is usually set as the Nyquist-frequency, could be lowered to filter out high-frequency noise components. Moreover, because B-splines and the function $\mathcal{M}_n$ are used to sample the Fourier domain, the sum of all cake wavelets is approximately 1 over the entire Fourier domain (within a ball of radius $\gamma \varrho$), see Fig.~\ref{fig:BSplines} and \ref{fig:filters}. Thus the cake wavelets indeed are proper wavelets, allowing stable reconstruction via Eq.~(\ref{eq:reconstruction}). In Eq.~(\ref{eq:reconstruction}) one can omit division by $M_\psi^{-1} \approx 1$ in which case stable reconstruction is obtained both by integration over $SE(2):=\mathbb{R}^2 \rtimes S^1$ and/or its partially discrete subgroup $\mathbb{R}^2 \rtimes S_N^1$, with $S_N^1 = \{e^{n (2 \pi i / N_o)}|n=0,1,...,N_o-1\}$.

\paragraph{Remark}
If $N_o \to \infty$ then $\tilde{\psi}^{cake} \to \delta_0$ in the distributional sense. In this case the wavelet transform converges to the localized Radon transform, which has been proposed for effective retinal vessel detection in \cite{Krause2013}. The advantage however of taking $N_0 \ll \infty$ is that we obtain well-posed non singular kernels in the spatial domain while allowing a stable reconstruction for all a-priori set $N_o \in \mathbb{N}$.\\

Cake wavelets are quadrature filters, meaning that the real part contains information about the locally even (symmetric) structures, e.g. ridges, and the imaginary part contains information about the locally odd (antisymmetric) structures, e.g. edges. That is, the real and imaginary part of the filter $\psi_{\theta}$ are related to each other by the Hilbert transform in the direction perpendicular to the wavelets orientation, which is defined by
\begin{equation}
\mathcal{H}^\eta(\psi_\theta)(\mathbf{x})=\mathcal{F}^{-1}[\omega \mapsto i \; \operatorname{sign}(\www \cdot \mathbf{e}_\eta)\mathcal{F}[\psi_\theta](\www)](\mathbf{\mathbf{x}}),
\end{equation}
where $\mathbf{e}_\eta$ specifies the direction in which the Hilbert transform is performed, recall Eq.~(\ref{eq:frameOfReference}).

The quadrature property is useful in our vessel tracking approach, since it allows us to directly detect vessel edges from the imaginary part of the orientation scores, without having to calculate first-order derivatives perpendicular to the vessel orientation. In our applications we did remove the DC-component for the real part to avoid responses on locally constant images. Fig.~\ref{fig:filters} shows the real and imaginary part of the cake wavelet in the spatial domain, as well as the coverage of the wavelet in the Fourier domain.

A final remark on cake wavelets: Since the cake wavelets uniformly cover the Fourier domain ($\mathcal{M}_N \approx 1$), they allow us to use a fast approximate reconstruction scheme, which is given by integration of the orientation scores over the angles only:
\begin{equation}
f^{approx}(\mathbf{x}) \approx \cfrac{1}{2\pi} \int_0^{2\pi}U_f^{cake}(\mathbf{x},\theta)d\theta,
\end{equation}
for details see \cite{Duits2005}.

\begin{figure*}[!ht]
        \centering
        \begin{subfigure}[t]{0.45\textwidth}
                \centering
                \includegraphics[width=\textwidth]{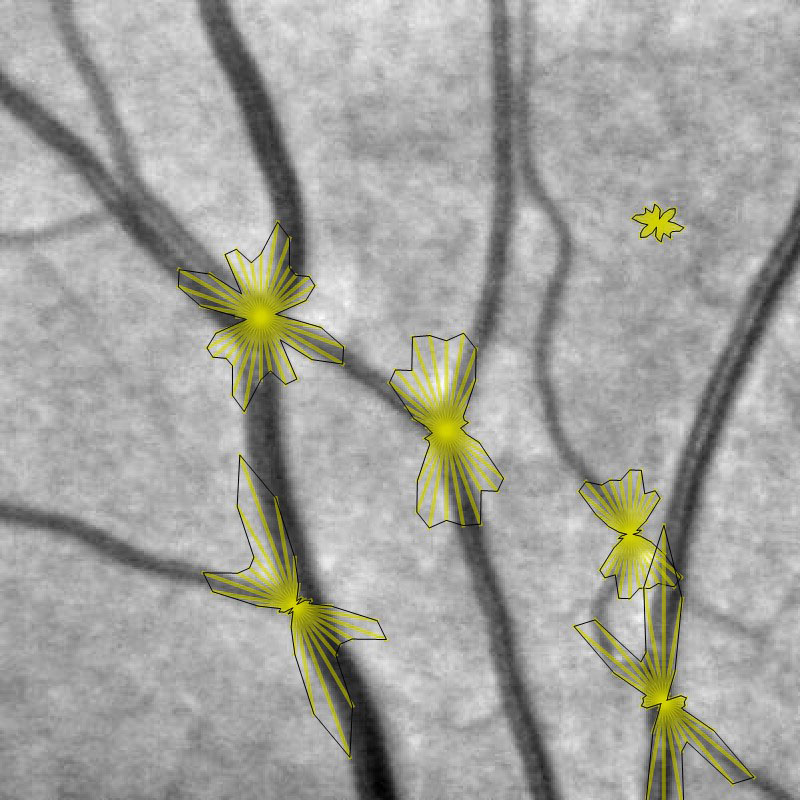}
                \caption{Double-sided}
        \end{subfigure}
        \begin{subfigure}[t]{0.45\textwidth}
                \centering
                \includegraphics[width=\textwidth]{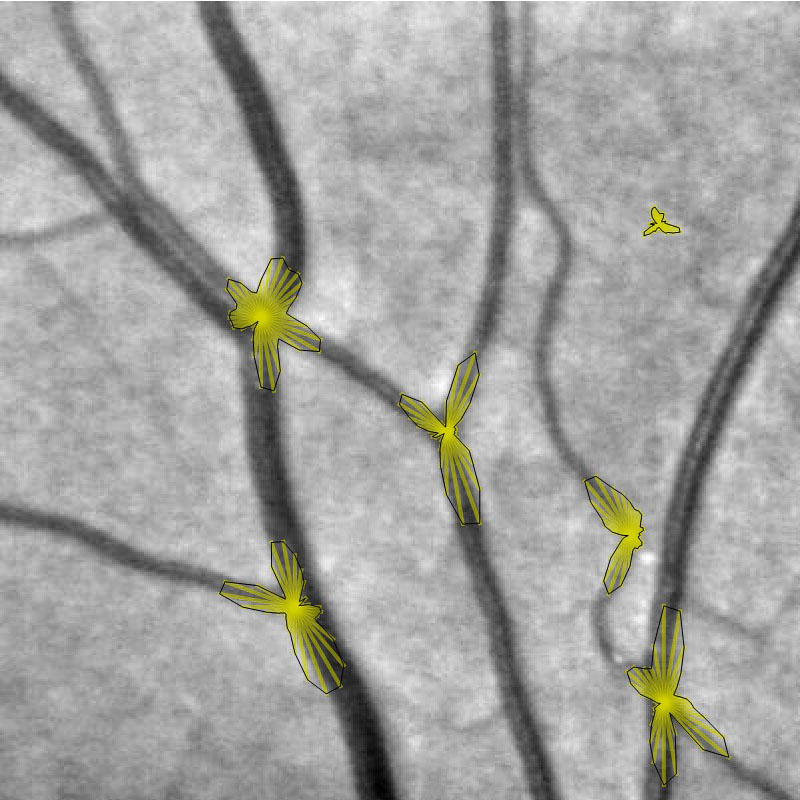}
                \caption{Single-sided}
        \end{subfigure}
        \caption{Comparison between double- and single-sided cake wavelets by visualisation of the orientation column $U_f(\mathbf{x},\cdot)$ at several points in a fundus image. The orientation column at a certain point $\mathbf{x}$ is visualized by drawing 36 lines, each at a certain angle $0 \leq \theta < 2\pi$, and of which the length in direction $\theta$ is given by the absolute value of the score $|U_f(\mathbf{x},\theta)|$.}
        \label{fig:singleVSdouble}
\end{figure*}

\subsection{Gabor wavelets}
\label{sec:GaborWavelets}

Gabor wavelets are directional wavelets, and can be tuned to specific spatial frequencies (and inherently scales). In the field of retinal image processing they are used for vessel detection in various studies \cite{Soares2006,Li2006}. We can exploit the tuning of the wavelet to specific spatial frequencies to match differently
sized blood vessels. The 2D Gabor wavelet is a Gaussian, modulated by a complex exponential, and is defined as:

\begin{equation}
\begin{array}{rl}
\psi^{Gabor}(\mathbf{x}) &= \cfrac{1}{C_{\psi}} \; e^{i\mathbf{k}_0\mathbf{x}} \; e^{-{\cfrac{1}{2}}{\lvert\mathbf{A}\mathbf{x}\rvert}^2},\\
C_{\psi} &= 2 \pi \sqrt{\epsilon} \; e^{-{\cfrac{1}{2}}{\lvert\mathbf{A}^{-1}\mathbf{k}_0\rvert}^2},
\end{array}
\end{equation}
where $\mathbf{A} = diag[\epsilon^{-1/2},1]$ with $\epsilon \geq 1$ is a $2 \times 2$ diagonal matrix that defines the anisotropy of the wavelet. The vector $\mathbf{k}_0$ defines the spatial frequency of the complex exponential and $C_{\psi}$ normalizes the wavelet to unity. In our method we use $\epsilon = 4$, which makes the filter elongated in the
x-direction and we choose $\mathbf{k}_0 = (0,3)$, which causes oscillations perpendicular to the orientation of the wavelet. We can dilate the filter by a scaling parameter $a>0$:
\begin{equation}
\psi_{a}^{Gabor}(\mathbf{x}) = a^{-1}\psi^{Gabor}(a^{-1}\mathbf{x}).
\end{equation}
Orientation scores constructed from an image $f$ using Gabor wavelets at scale $a$ are denoted by $U_{f, a}^{Gabor}$. Fig.~\ref{fig:orientationScoreFiltering}d-e demonstrate typical Gabor wavelet based orientation scores at a small and large scale respectively, for a certain orientation.

The real and imaginary part of the wavelet in the spatial domain, as well as the coverage of the wavelet in the Fourier domain are shown in Fig.~\ref{fig:filters}. Similar to the cake wavelets, Gabor wavelets also have the quadrature property orthogonal to their orientation.

In the Fourier domain, the Gabor filters are represented as Gaussian functions shifted from the origin by $\mathbf{k}_0$. The set of all rotated Gabor functions at a certain scale $a$ covers therefore a certain annulus in the Fourier domain, and a single Gabor wavelet can thus be regarded as an oriented band-pass filter. This is also clearly depicted by the outlines of the frequency responses as shown in Fig.~\ref{fig:filters}.

\subsection{Double-sided vs single-sided wavelets, orientation vs direction}
The cake and Gabor wavelets are double-sided wavelets which do not distinguish between a forward or backward direction (they are symmetric with respect to the y-axis). In order to distinguish between $\pi$ symmetries and $2\pi$ symmetries (see Fig.~\ref{fig:singleVSdouble}), and to be able to handle bifurcations, we decompose the orientation scores into a forward and backward direction\footnote{I.e. we extend the domain $SE(2)=\mathbb{R}^2 \rtimes SO(2)$ of our orientation scores to the group $E(2) = \mathbb{R}^2 \rtimes O(2)$, where $O(2) = \{M \in \mathbb{R}^{2\times2}|M^T = M^{-1}\}$ also includes, besides rotations (with $\det M = +1$), reflections (with $\det M = -1$).}, denoted by a $+$ and $-$ symbol respectively:
\begin{equation}
U_f(x,y,\theta)=U_f^+(x,y,\theta)+U_f^-(x,y,\theta),
\end{equation}
where
\begin{equation}
\begin{array}{l}
U_f^+(\mathbf{x},\theta) = \int_{\mathbb{R}^2} \overline{\psi^+(\mathbf{R}_\theta^{-1}(\mathbf{y}-\mathbf{x}))} f(\mathbf{y}) d\mathbf{y},\\
U_f^-(\mathbf{x},\theta) = \int_{\mathbb{R}^2} \overline{\psi^-(\mathbf{R}_\theta^{-1}(\mathbf{y}-\mathbf{x}))}f(\mathbf{y})d\mathbf{y},
\end{array}
\end{equation}
and where
\begin{equation}
\begin{array}{l}
\psi^+(x,y)=w(x)\psi(x,y),\\
\psi^-(x,y)=w(-x)\psi(x,y) = (1-w(x))\psi(x,y)
\end{array}
\end{equation}
with
\begin{equation}
w(x) = \cfrac{1}{2}+\cfrac{1}{2}\operatorname{erf}(x) = \cfrac{1}{2} + \cfrac{1}{2\pi}\int_0^x e^{-y^2}dy.
\end{equation}
Note that by using the error function, we have $\psi = \psi^- + \psi^+$ and $U_f^-(x,y,\theta) = \overline{U_f^+(x,y,\theta + \pi)}$, so that $U_f(x,y,\theta) = U_f^+(x,y,\theta)+\overline{U_f^+(x,y,\theta+\pi)}$. It is thus possible to choose one of the single-sided wavelets to construct a \emph{directional} orientation score, while still being able to access the original (double-sided) orientation score. Fig.~\ref{fig:singleVSdouble} demonstrates the advantage of using single-sided wavelets over double-sided wavelets in the case of direction estimation based on the orientation column of a score $U_f(x,y,\cdot)$.

\begin{figure*}[!ht]
        \begin{subfigure}[b]{0.41\textwidth}
                \centering
                \includegraphics[height=0.24\textheight]{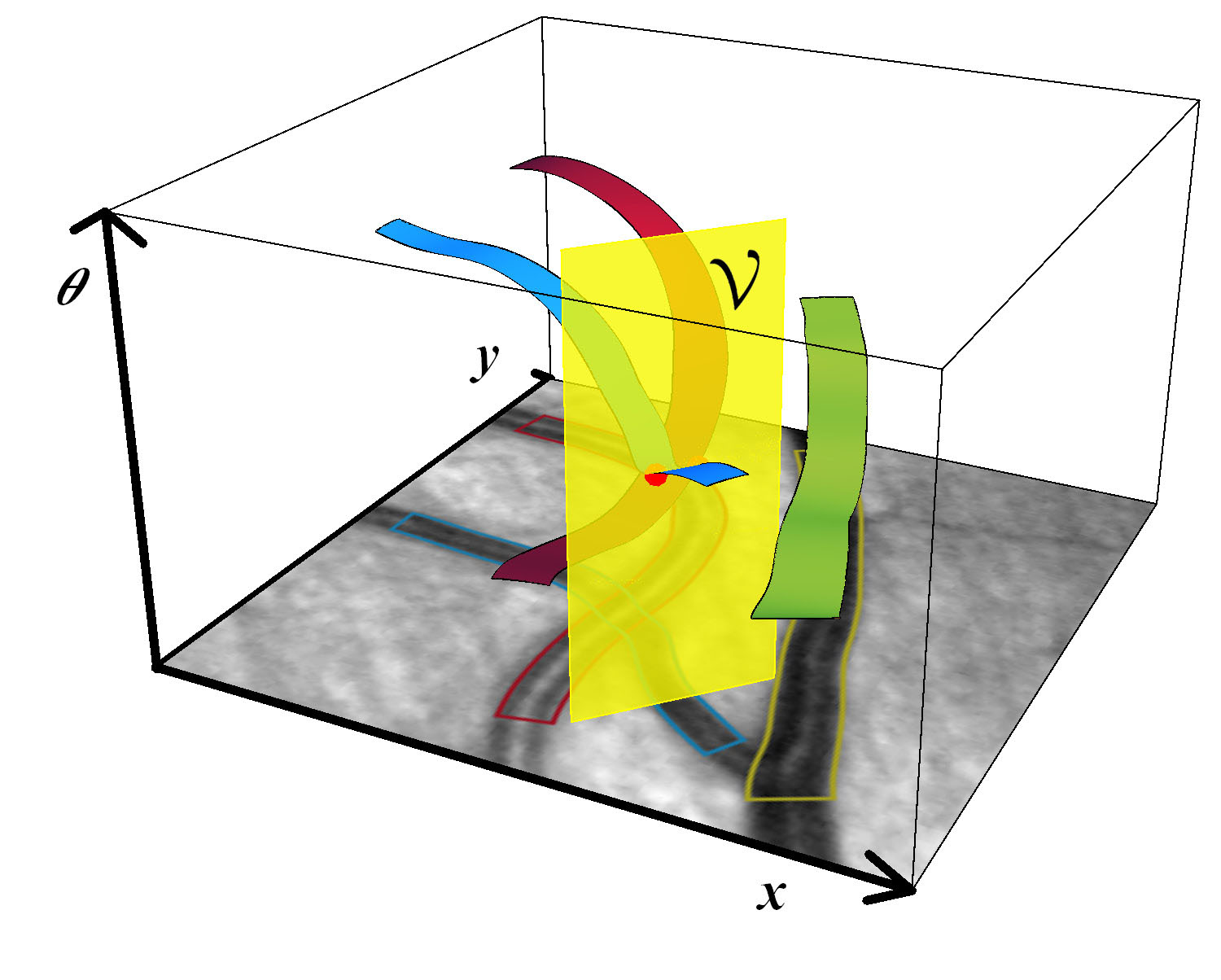}
                \caption{Vessels in OS and detection plane}
                \label{fig:ETOS3D:a}
        \end{subfigure}
        \begin{subfigure}[b]{0.15\textwidth}
                \centering
                \includegraphics[height=0.24\textheight]{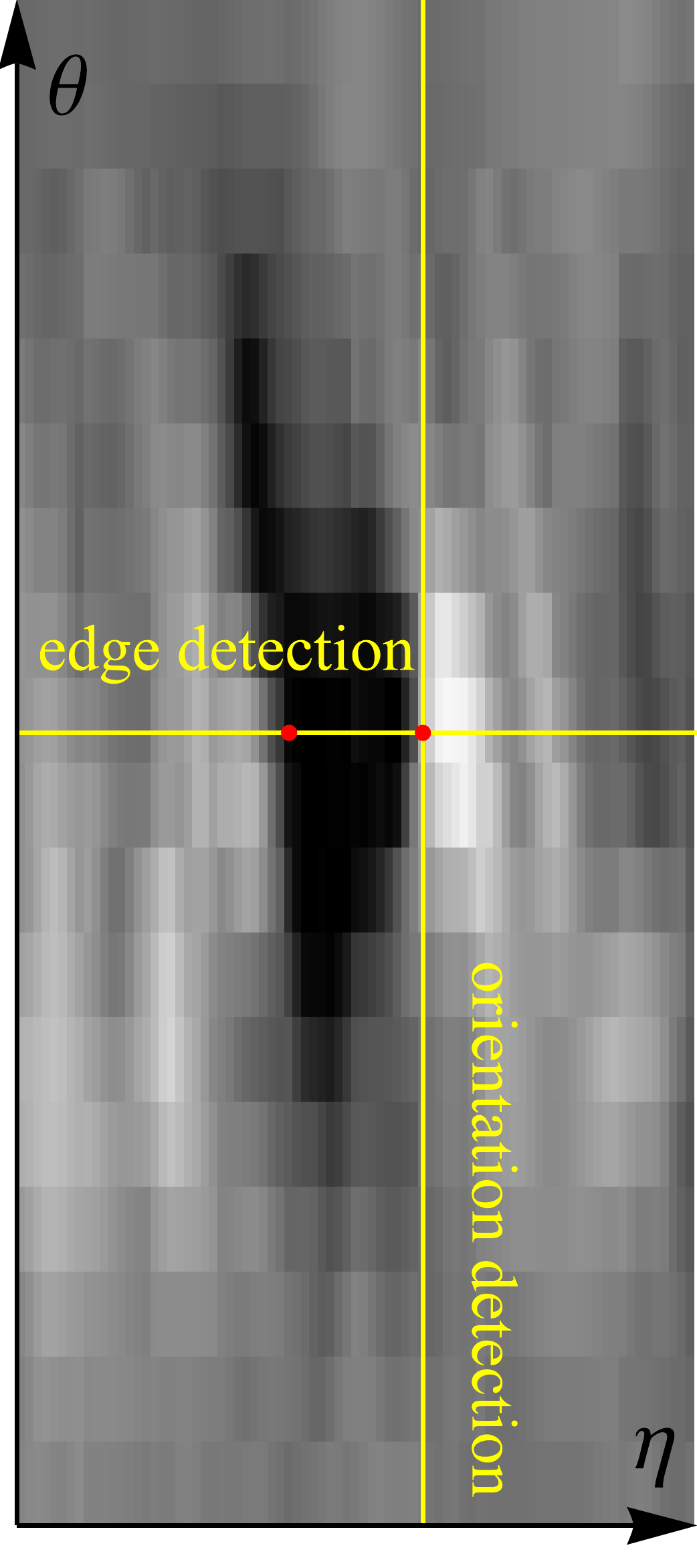}
                \caption{Real}
                \label{fig:ETOS3D:b}
        \end{subfigure}
        \begin{subfigure}[b]{0.15\textwidth}
                \centering
                \includegraphics[height=0.24\textheight]{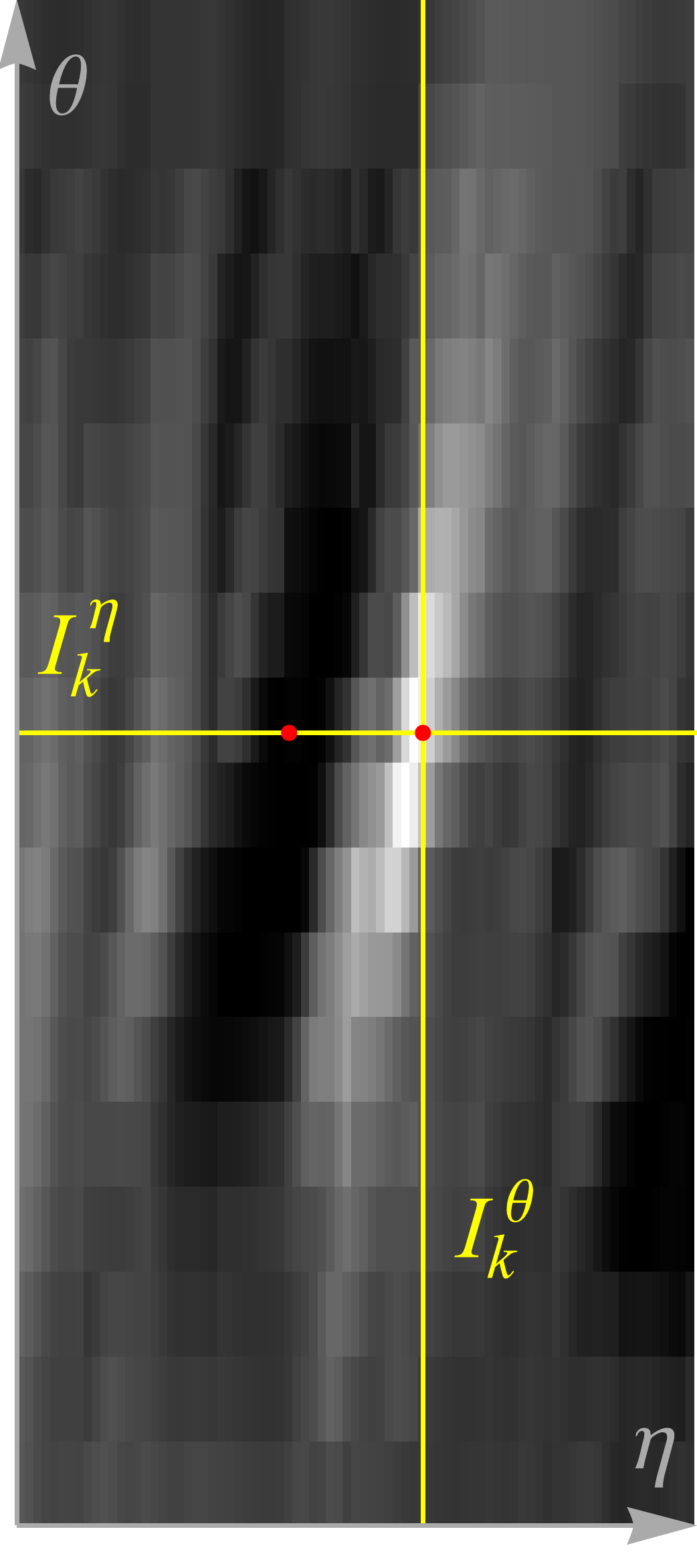}
                \caption{Imaginary}
                \label{fig:ETOS3D:c}
        \end{subfigure}
        \begin{subfigure}[b]{0.25\textwidth}
                \centering
                \includegraphics[height=0.12\textheight]{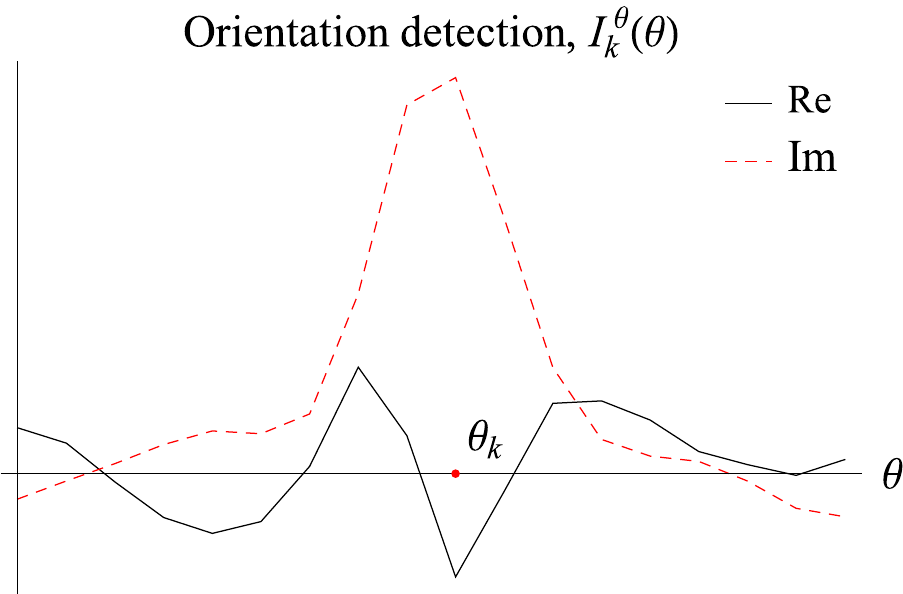}
                \includegraphics[height=0.12\textheight]{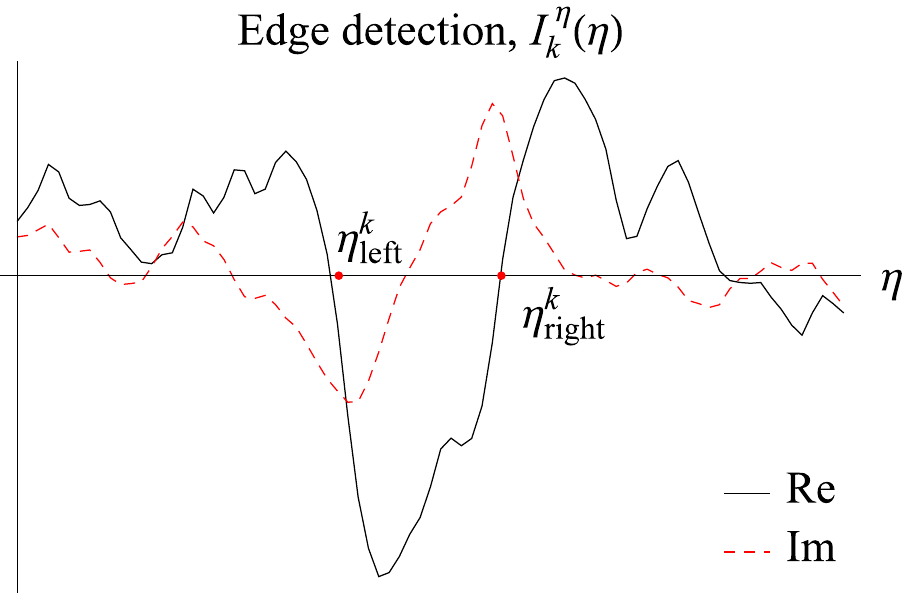}\hfil
                \caption{Detection profiles}
                \label{fig:ETOS3D:d}
        \end{subfigure}
        \caption{Edge tracking in a $\pi$-periodic orientation score constructed from double-sided wavelets. (a) Graphical representation of blood vessels in the orientation score. The real and imaginary part of the orientation score on the yellow plane $\mathcal{V}$ (perpendicular to the blood vessel) are represented in (b) and (c) respectively. In (c) the left and right edge of the blood vessel are expressed as black and white blobs respectively. The edge and orientation detection profiles are demonstrated in (d).}
        \label{fig:ETOS3D}
\end{figure*}

\section{Vessel tracking in orientation scores via optimization in transversal tangent planes $\mathcal{V}$}
\label{sec:VesselTrackingUsingOrientationScores}
In orientation scores, information from the original image is both maintained and neatly organized in different orientations, leading to the disentanglement of crossing structures. Moreover, because of the quadrature property of the wavelets used in the construction of orientation scores, important edge information is well represented in the imaginary part of the score. We therefore propose tracking algorithms that directly act on the orientation score:
\begin{enumerate}
  \item The ETOS algorithm: Edge Tracking based on Orientation Scores (Section~\ref{sec:EdgeTrackingUsingInvertibleOrientationScores}).
  \item The CTOS algorithm: Centerline Tracking based on multi-scale Orientation Scores (Section~\ref{sec:MultiScaleVesselCenterLineTracking}).
\end{enumerate}
In both tracking algorithms we rely on a fundamental geometric principle to extract the most probable paths in orientation scores. Consider to this end Fig.~\ref{fig:ETOS3D}a, where a track $t \mapsto g(t) = (\mathbf{x}(t),\theta(t))$ is considered locally optimal if it is locally optimized in each\footnote{That is locally optimal in $\mathcal{V}|_g$ for each $g=g(t)$, $t\in \textrm{Dom(g)}$.} transversal 2D-tangent plane\footnote{tangent vectors can be considered as differential operators acting on smooth locally defined functions \cite{Aubin}. In our case this boils down to replacing $\mathbf{e}_{x}$ by $\partial_{x}$, $\mathbf{e}_{y}$ by $\partial_{y}$ and
$\mathbf{e}_{\theta}$ by $\partial_{\theta}$.}
\begin{equation}
\mathcal{V}|_g=\textrm{span}\{\left.\partial_{\theta}\right|_g, \left.\partial_{\eta}\right|_g\},
\end{equation}
spanned by $\partial_{\theta}$ and $\partial_{\eta}=-\sin \theta \partial_{x} + \cos \theta \partial_{y}$ in tangent-bundle
\[
T(SE(2))=\bigcup \limits_{g \in SE(2)} T_{g}(SE(2)),
\]
where
$T_{g}(SE(2))$ denotes the tangent space at $g=(x,y,\theta) \in SE(2)$. For more details on this principle we refer to Appendix \ref{app:optimalpaths}.

\subsection{Methods}
\label{sec:Methods}
\subsubsection{ETOS: Edge tracking in orientation scores}
\label{sec:EdgeTrackingUsingInvertibleOrientationScores}
The ETOS algorithm tracks both vessel edges simultaneously through an orientation score. The method iteratively expands a blood vessel model by detecting, at each forward step $k$, the optimal edge locations $(\mathbf{u}_k,\theta_k)$, $(\mathbf{v}_k,\theta_k) \in SE(2)$ from the orientation score. Here $\mathbf{u}_k$ and $\mathbf{v}_k$ denote the 2D left and right edge position respectively, $\theta_k$ denotes the orientation of the blood vessel. At each iteration the vessel center point $\mathbf{c}_k$ and the vessel width $w_k$ are defined as follows:
\begin{gather}
\mathbf{c}_k =\cfrac{\mathbf{u}_k+\mathbf{v}_k}{2},\label{eq:centerPoint}\\
w_k = \| \mathbf{u}_k-\mathbf{v}_k \|\label{eq:width}.
\end{gather}
To describe our method we will rely on a moving frame of reference with basis vectors $\mathbf{e}_{\eta_k}$, $\mathbf{e}_{\xi_k}$ and $\mathbf{e}_{\theta_k}$, which are described by the orientation parameter $\theta_k$ and Eq.~(\ref{eq:frameOfReference}).

In our method the edge positions $g_{u_k}$ and $g_{v_k}$ are detected in the orientation score from the tangent plane $\mathcal{V}$ (yellow plane in Fig.~\ref{fig:ETOS3D:a}). An edge can be detected as a local optimum from the imaginary part of this plane; a local minimum and maximum for the left and right edge respectively (as indicated by the two red dots in Fig.~\ref{fig:ETOS3D:c}). A schematic overview of the tracking process, including the symbols used in this section, is presented in Fig.~\ref{fig:schematicTracking}.

\begin{figure}[!ht!]
\centering
\includegraphics[width=\textwidthtwo]{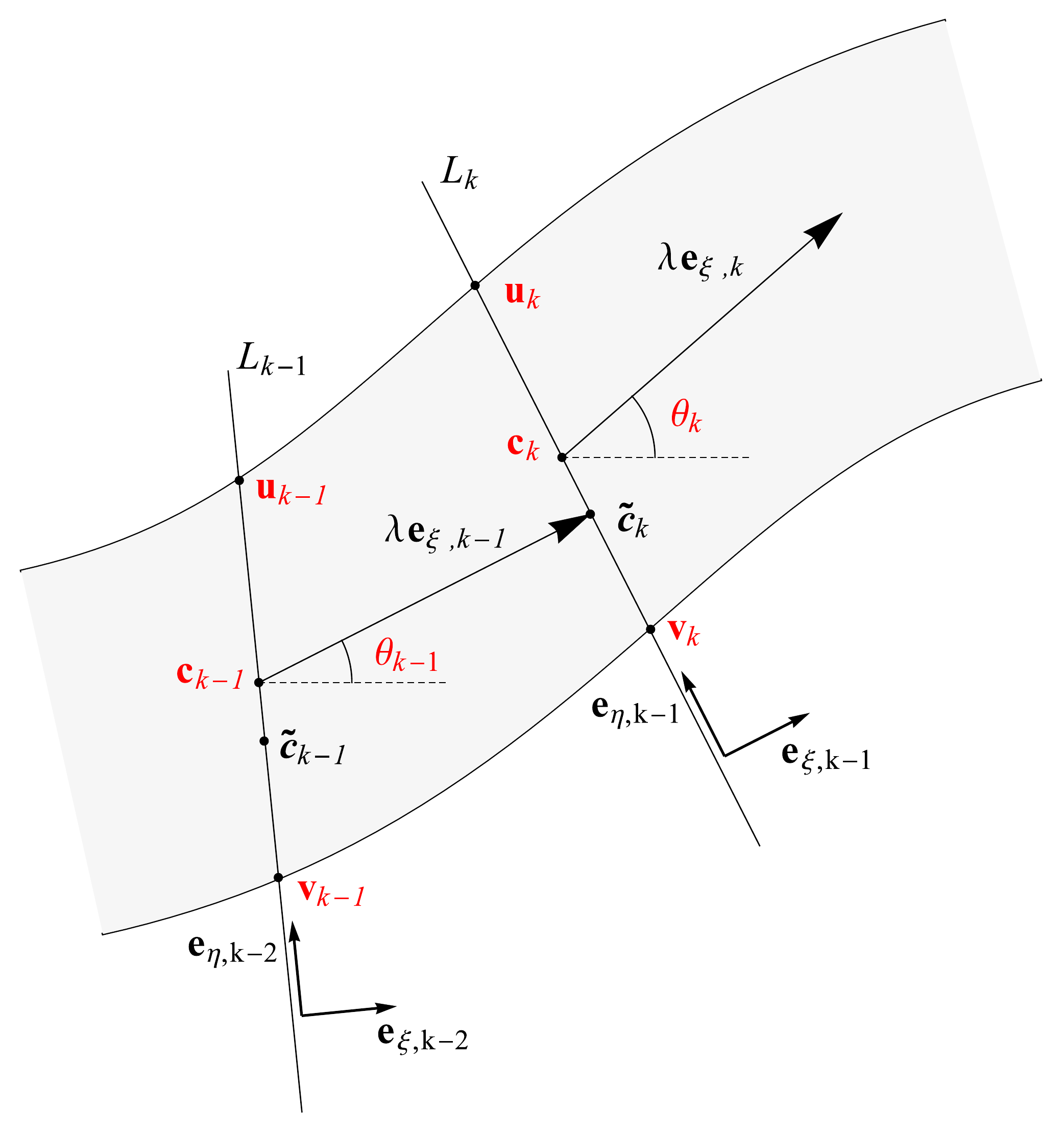}
\caption{Schematic overview image of ETOS. Using the detected vessel center point $\mathbf{c}_{k-1}$ and the orientation $\theta_{k-1}$ detected at the vessel edges $\mathbf{u}_{k-1}$ and $\mathbf{v}_{k-1}$ at iteration $k-1$, a rough estimation of the next center point $\tilde{\mathbf{c}}_{k}$ found by stepping forward in the vessel direction $\mathbf{e}_{\xi,k-1}$ with step size $\lambda$. Through the estimated center point a line $L_{k}$ is defined on which the new vessel edges $\mathbf{u}_{k}$ and $\mathbf{v}_{k}$ are detected. At these edges the vessel orientation $\theta_{k}$ is detected and the final precise vessel center point $\mathbf{c}_{k}$ is calculated as the mean of the two edges.}
\label{fig:schematicTracking}
\end{figure}

For the sake of speed and simplicity, we follow a 2-step approach where the process of detecting the optimal edge positions is separated into two 1D optimization tasks which simply involve the detection of local minima (left edges) and maxima (right edge); in step 1 the edge locations $\mathbf{u}_k$ and $\mathbf{v}_k$ are optimized in the $\mathbf{e}_\eta$ direction ($\eta$-optimization), in step 2 the corresponding orientation is optimized in the $\mathbf{e}_\theta$ direction ($\theta$-optimization), see Fig.~\ref{fig:ETOS3D}b-d. By considering the continuous properties of the blood vessels (e.g. continuous vessel widths), we use a paired edge tracking approach where the left and right edges are detected simultaneously. This approach has the advantage that even if one of the edges is less apparent in the image (e.g. at crossing points, parallel vessels and bifurcations), both edges and their orientation can still be tracked. A possible disadvantage of this approach is however that abrupt changes in vessel width (e.g. at stenoses and aneurysms) may become unnoticed or detected with less detail.

\textbf{Step 1}: Based on a-priori knowledge about the previous vessel orientation, edges and center point, stored in
\[\{(\theta_{l}, \mathbf{u}_{l}, \mathbf{v}_{l} , \mathbf{c}_{l})|k-M \le l \le k-1\},
\]
where we set $M=10$, a new vessel center point $\tilde{\mathbf{c}}_k$ is calculated as \begin{equation}
\label{eq:cest}
\tilde{\mathbf{c}}_k = \mathbf{c}_{k-1} + \lambda \; \mathbf{e}_{\xi_{k-1}},
\end{equation}
where $\lambda$ (typically in the order of 2 pixels) is the tracking step-size.
New edge points are selected from a set of points $\mathbf{p}_k(\eta)$ on a line $L_k$ going through the estimated center point $\tilde{\mathbf{c}}_k$ and perpendicular to the vessel orientation $\theta_{k-1}$:

\begin{equation}
L_k = \{\mathbf{p}_k(\eta)\; | \; \eta \in [-\eta_{max},\eta_{max}]\},
\end{equation}
with
\begin{equation}
\mathbf{p}_k(\eta) = \tilde{\mathbf{c}}_k + \eta \; \mathbf{e}_{\eta_{k-1}},
\label{eq:pk}
\end{equation}
where $\eta$ is a parameter describing the distance to the estimated vessel center point and $\eta_{max} > \norm{\mathbf{u}_k-\mathbf{v}_k}$ denotes the maximum distance to the estimated the vessel center point. Note that orientation $\theta_{k-1}$ of the previous iteration is used as the new orientation is yet to be detected.

\begin{figure*}[!ht]
        \centering
        \begin{subfigure}[b]{0.4\textwidth}
                \centering
                \includegraphics[width=\textwidth]{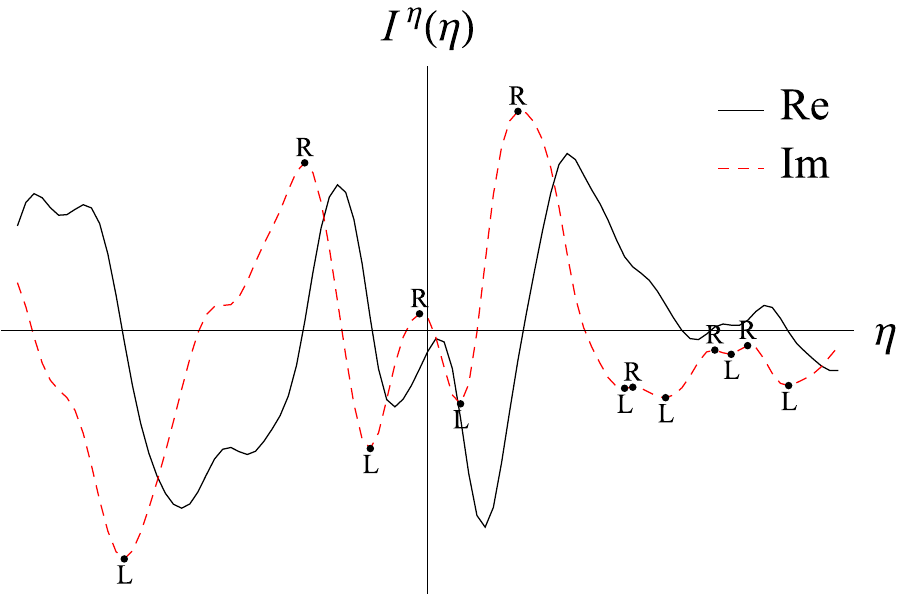}
                \caption{}
        \end{subfigure}
        \begin{subfigure}[b]{0.4\textwidth}
                \centering
                \includegraphics[width=\textwidth]{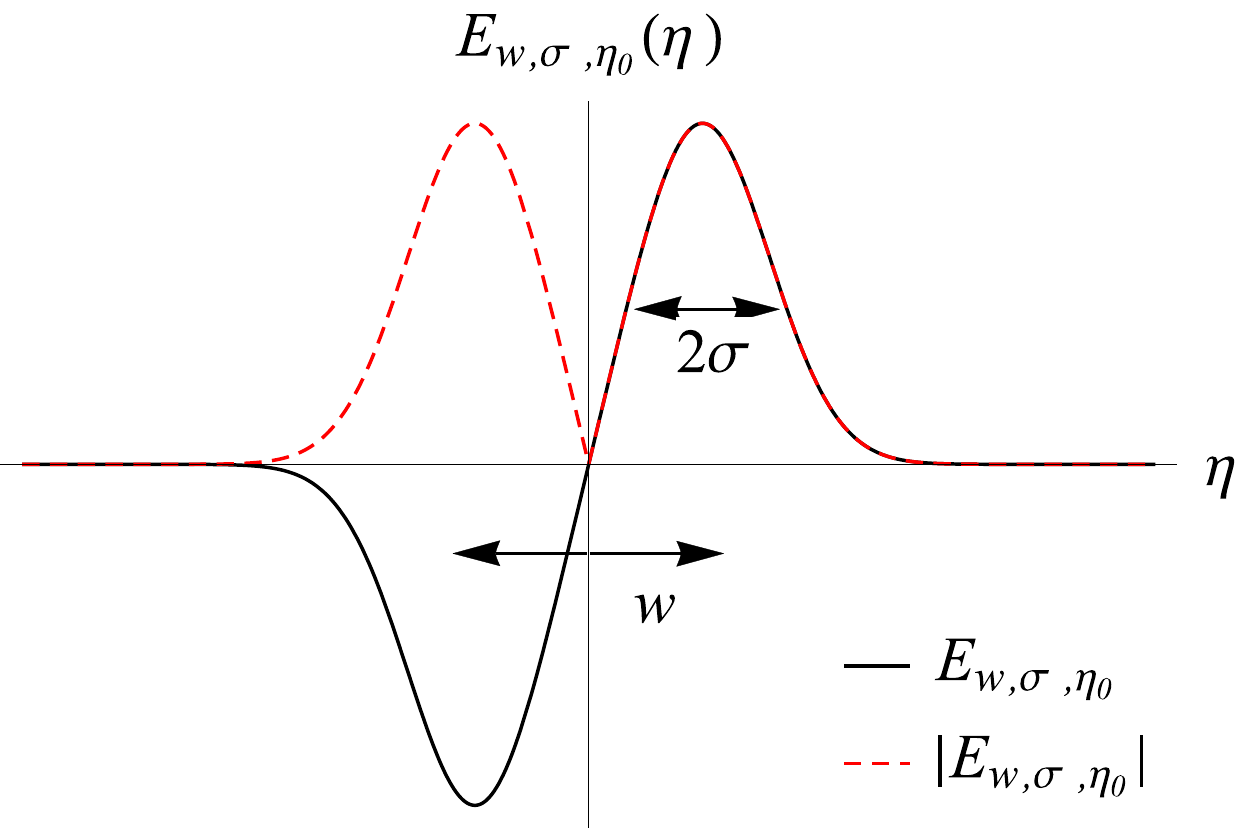}
                \caption{}
        \end{subfigure}\\
        \begin{subfigure}[b]{0.4\textwidth}
                \centering
                \includegraphics[width=\textwidth]{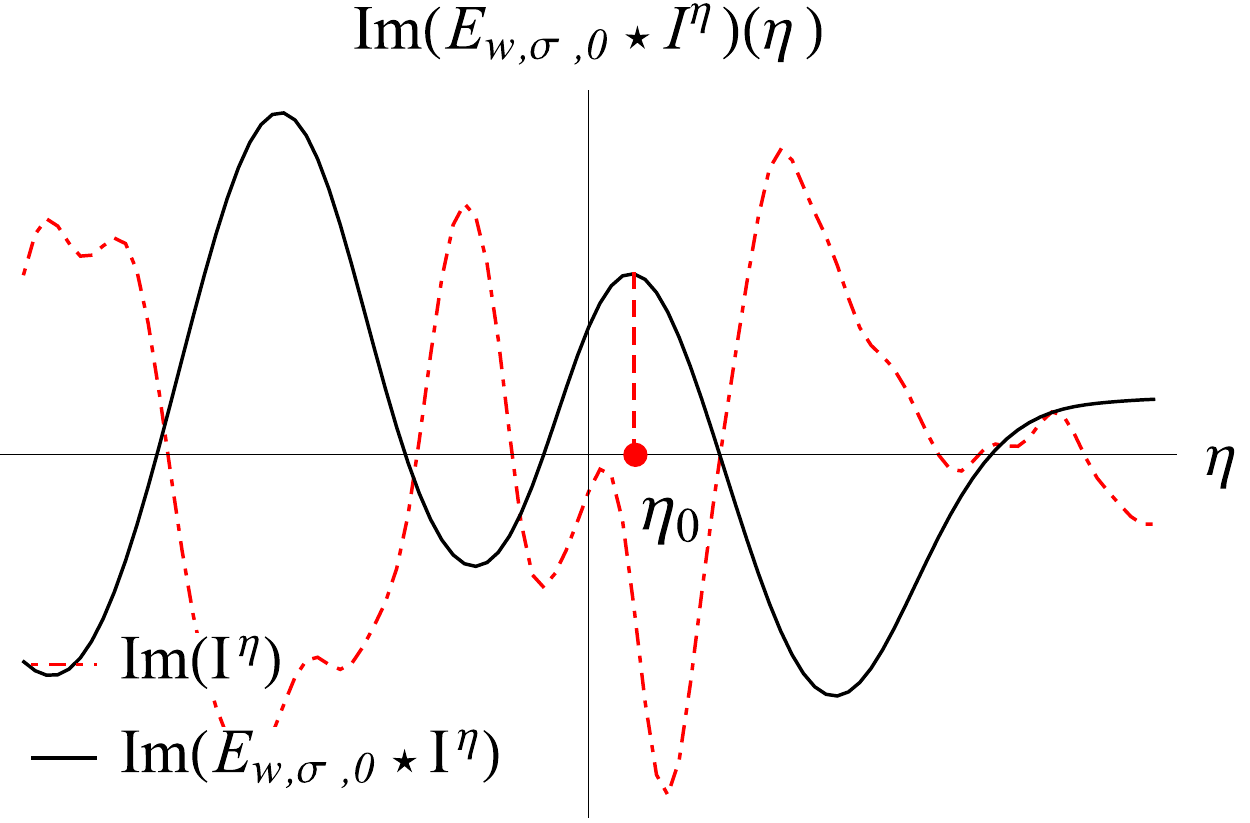}
                \caption{}
        \end{subfigure}
        \begin{subfigure}[b]{0.4\textwidth}
                \centering
                \includegraphics[width=\textwidth]{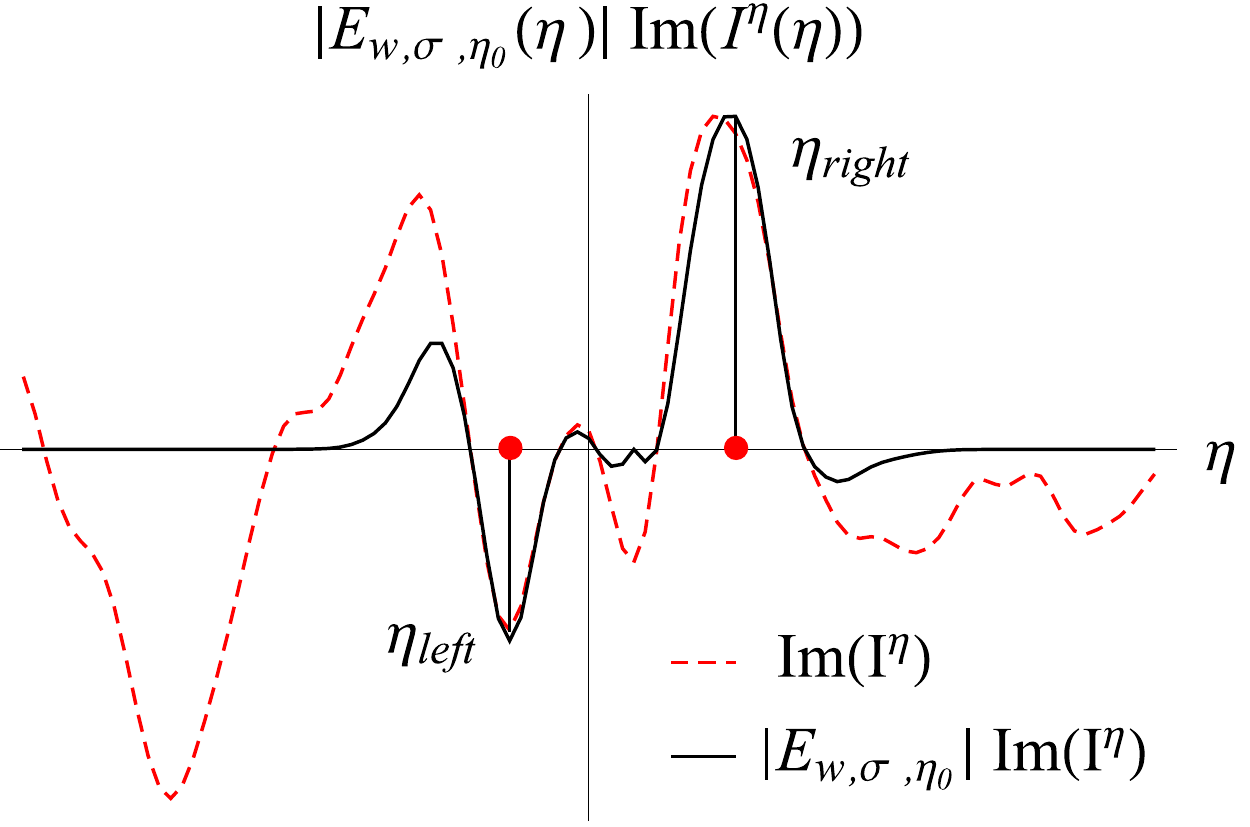}
                \caption{}
        \end{subfigure}
        \caption{Edge detection using the edge probability envelope. (a) Cross-sectional intensity profile taken from Fig.~\ref{fig:orientationScoreFiltering:c}, showing many potential candidate left (L) and right (R) edge positions. (b) The edge probability profile. (c) Centering of the edge probability profile on the vessel of interest by means of correlation. (d) Enveloping the intensity profile results in clearly detectable left and right edge points.}
        \label{fig:EPE}
\end{figure*}

An intensity profile $I_k^\eta(\eta)$ can then be obtained from the orientation scores according to
\begin{equation}
\label{eq:ieta}
I_{k}^\eta(\eta) = U_f(\mathbf{p}_k(\eta),\theta_{k-1}),
\end{equation}
see Fig.~\ref{fig:ETOS3D:d}. New edge points can now easily be found by detecting the local optima on the imaginary part of this profile. However, the detection of vessel edges is made more robust by taking into account that the vessel wall is a continuous structure, and that the width of a blood vessel gradually changes, rather than abruptly. Therefore, we introduce the edge probability envelope. The edge probability envelope is used to indicate the most likely position of the vessel edges and it consists of two Gaussian distributions, one around the expected left vessel edge position and one around the right vessel edge position. The envelope function is given by:
\begin{equation}
\label{eq:edgeEnvelope}
\begin{array}{l}
E_{\overline{w}_k,\sigma,\eta_0}(\eta) = -G_{\sigma}(\eta+\cfrac{\overline{w}_k}{2}-\eta_0)+G_{\sigma}(\eta-\cfrac{\overline{w}_k}{2}-\eta_0),\\
G_{\sigma}(x) = \cfrac{1}{\sigma\sqrt{2\pi}} \; e^{-\cfrac{x^2}{2 \sigma^2}},
\end{array}
\end{equation}
where $\eta_0$ is the estimated location of the vessel center on $L_k$, $\overline{w}_k$ is the mean vessel width calculated over the last $M$ iterations:
\begin{equation}
\overline{w}_k = \cfrac{1}{M}\sum_{m=1}^{M} w_{k-m} = \cfrac{1}{M}\sum_{m=1}^{M} \| \mathbf{u}_{k-m} - \mathbf{v}_{k-m} \|,
\end{equation}
the standard deviation of the Gaussian distributions is denoted with $\sigma$, and $\eta_0$ is used to align the envelope with the actual vessel profile. A robust value for $\eta_0$ is found by optimizing the cross-correlation of the envelope with the imaginary part of the actual profile:
\begin{equation}
\eta_0 = \underset{\eta^*\in[-0.5 \overline{w}_k,0.5 \overline{w}_k]}{\operatorname{argmax}} \int^{-\eta_{max}}_{-\eta_{max}} \Im(I_k^\eta(\eta))E_{\overline{w}_k,\sigma,\eta^*}(\eta) d\eta,
\end{equation}
\begin{figure*}[!ht]
        \centering
        \begin{subfigure}[t]{0.21\textwidth}
                \centering
                \includegraphics[width=\textwidth]{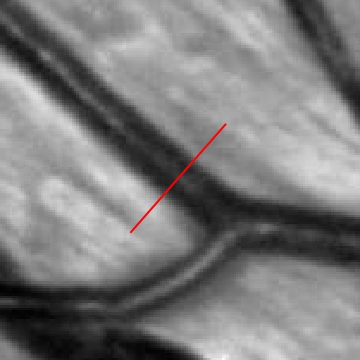}
                \caption{Image, $f(\cdot)$}
        \end{subfigure}
        \begin{subfigure}[t]{0.27\textwidth}
                \centering
                \includegraphics[width=\textwidth]{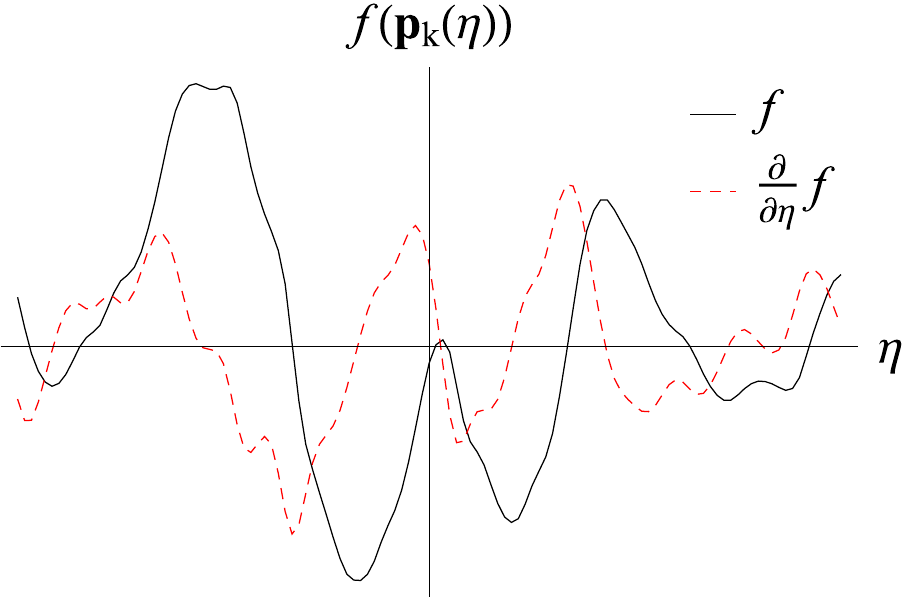}
                \caption{Intensity profile, $f(\mathbf{p}_k(\eta))$}
        \end{subfigure}
        \begin{subfigure}[t]{0.21\textwidth}
                \centering
                \includegraphics[width=\textwidth]{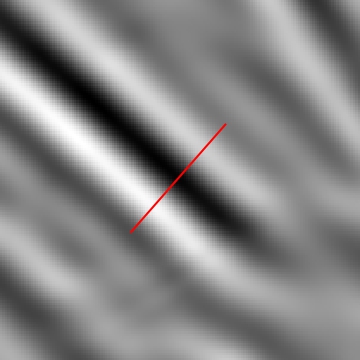}
                \caption{OS, $U_{f,a}^{\eta,Gabor}(\cdot,\theta_v)$}
                \label{fig:orientationScoreFiltering:c}
        \end{subfigure}
        \begin{subfigure}[t]{0.27\textwidth}
                \centering
                \includegraphics[width=\textwidth]{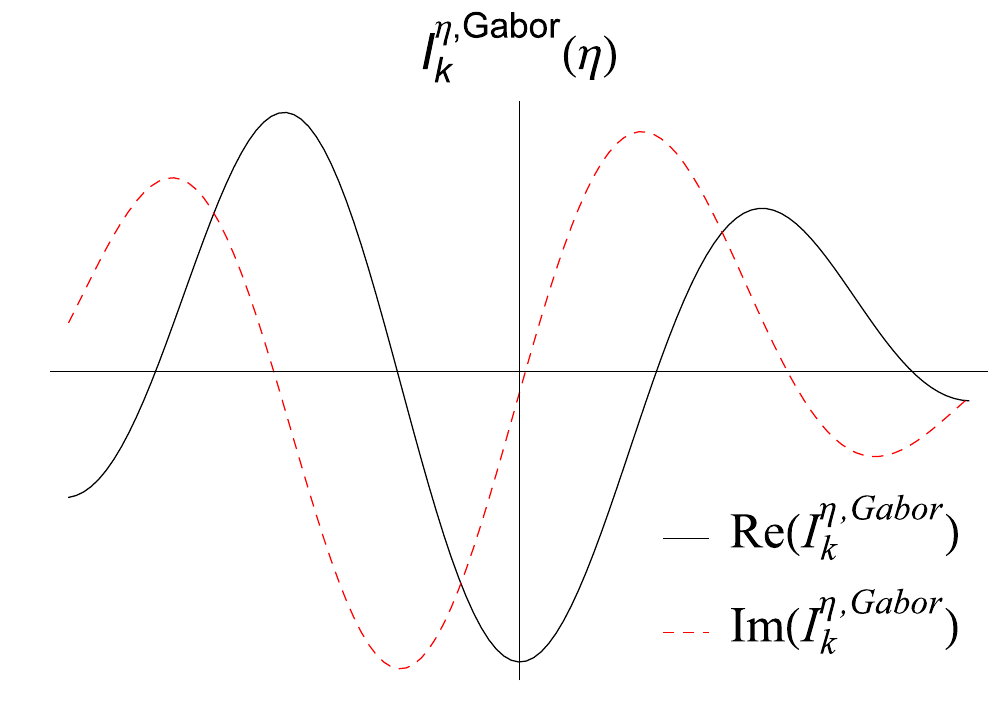}
                \caption{Intensity profile, $I_k^{\eta,Gabor}(\eta)$}
        \end{subfigure}
        \caption{Scale selective orientation scores can filter out a vessels central light reflex. (a) A small sub-image showing a vessel with central light reflex and (b) the corresponding intensity profile. (c) A slice, corresponding to the vessel orientation $\theta_v$, of the orientation score constructed from (a) and (d) the corresponding intensity profile taken hereof. Note that from (d) the vessels center point can be roughly detected as a local minimum.}
        \label{fig:CTOS}
\end{figure*}
see Fig.~\ref{fig:EPE}c. The left and right edges are finally detected as the arguments corresponding to the minimum and maximum points, $\eta_{left}^k$ and $\eta_{right}^k$ respectively, of the product of the envelope and the intensity profile:
\begin{equation}
\begin{array}{ll}
\eta_{left}^k &= \underset{\eta\in[-\eta_{max},\eta_0]}{\operatorname{argmin}} \left\{ \Im \; I_k^\eta(\eta) \; |E_{\overline{w}_k,\sigma,\eta_0}(\eta)| \; \right\},\\
\eta_{right}^k &= \underset{\eta\in[\eta_0,\eta_{max}]}{\operatorname{argmax}} \left\{ \Im \; I_k^\eta(\eta) \; |E_{\overline{w}_k,\sigma,\eta_0}(\eta)| \; \right\},
\end{array}
\end{equation}
see Fig.~\ref{fig:EPE}d for an example. The new edge points can then be assigned by
\begin{equation}
\begin{array}{l}
\mathbf{u}_k = \mathbf{p}_{k}(\eta_{left}^k),\\
\mathbf{v}_k = \mathbf{p}_{k}(\eta_{right}^k).
\end{array}
\end{equation}

\textbf{Step 2}: Orientation $\theta_k$ can be estimated by selecting the orientation that provides the highest orientation score response at both vessel edges. The orientation score response is a combination of the orientation columns at the left and right edges ($U_f(\mathbf{u}_k,\cdot)$ and $U_f(\mathbf{v}_k,\cdot)$ resp.), and the optimal orientation is calculated as:
\begin{equation}
\theta_k = \underset{\theta\in[0,2\pi]}{\operatorname{argmax}} \Im( \; -U_f(\mathbf{u}_k,\theta) + U_f(\mathbf{v}_k,\theta) \;).
\end{equation}
Finally the new center point $\mathbf{c}_k$, which may not be equal to $\tilde{\mathbf{c}}_k$, is calculated as the point between the two edges, according to Eq.~(\ref{eq:centerPoint}).

\subsubsection{CTOS: Multi-scale vessel center-line tracking in orientation scores}
\label{sec:MultiScaleVesselCenterLineTracking}
In this section the scale-selective property of the Gabor wavelets is exploited in the design of a fast orientation score based method called CTOS. The potential presence of a central light reflex in a blood vessel makes the design of a simple and fast centerline tracking algorithm based on local minima tracking in the image nearly impossible. However, by using Gabor wavelets and appropriate scale selection, central light reflexes can be filtered out such that vessel center points can easily be detected as local minima on detection profiles (see Fig.~\ref{fig:CTOS}).

The CTOS algorithm follows the same geometric principle as the ETOS algorithm, for a mathematical underpinning see Appendix~\ref{app:optimalpaths}, however CTOS is done on the real part of orientation scores in order to find the vessel center line, rather than the vessel edges. In this algorithm each iteration $k$ consists of 3-steps: In step 1 the center point $\mathbf{c}_k$ is detected ($\eta$-optimization), in step 2 the orientation $\theta_k$ is detected ($\theta$-optimization) and in step 3 the vessel scale $a_k$ is detected ($a$-optimization).

\textbf{Step 1}: Using the vessel center point $\mathbf{c}_{k-1}$, orientation $\theta_{k-1}$ and scale $a_{k-1}$, which were detected during iteration $k-1$, phase 1 at iteration $k$ is started by estimating the new center point $\tilde{\mathbf{c}}_k$ as given by Eq.~(\ref{eq:cest}). The new center point $\mathbf{c}_k$ is selected from a set of candidate points $\mathbf{p}_k(\eta)$ as given by Eq.~(\ref{eq:pk}). From the candidate points $\mathbf{p}_k(\eta)$ a center point detection profile $I_k^{\eta,Gabor}(\eta)$ is obtained by:
\begin{equation}
I_k^{\eta,Gabor}(\eta) = U_{f,a_{k-1}}^{Gabor}(\mathbf{p}_k(\eta),\theta_{k-1})
\end{equation}
with
$U_{f,a_{k-1}}^{Gabor}$ the orientation score generated by the Gabor wavelets at scale $a_{k-1}$. The new center point is detected as the coordinate belonging to the local minimum on the intensity profile nearest to $\tilde{\mathbf{c}}_{k}$:

\begin{equation}
\label{eq:edgeEnvelope}
\begin{array}{l}
\mathbf{c}_k = \mathbf{p}_{k}(\eta_{0}),\\
\eta_{0} = \underset{\eta\in[-\eta_{max},+\eta_{max}]}{\operatorname{argmin}} \left\{ \; \operatorname{Re} I_k^{\eta,Gabor}(\eta) \; \right\}.
\end{array}
\end{equation}

\textbf{Step 2}: The new vessel orientation is detected as the local maximum of the negative orientation response, nearest to the previous vessel orientation $\theta_{k-1}$:
\begin{equation}
\label{eq:orientationDetectionGabor}
\theta_k = \underset{\theta\in[0,2\pi]}{\operatorname{argmax}} \operatorname{Re}( \; -U_{f,a_{k-1}}^{Gabor}(\mathbf{c}_k,\theta) \;).
\end{equation}

\textbf{Step 3}: At the new center point $\mathbf{c}_k$ and orientation $\theta_k$, scale $a_k$ is detected as the scale that gives the largest negative response:
\begin{equation}
\label{eq:scaleDetection}
a_k = \underset{a>0}{\operatorname{argmax}} \operatorname{Re}( \; -U_{f,a}^{Gabor}(\mathbf{c}_k,\theta_k) \;).
\end{equation}

Compared to the ETOS algorithm, this algorithm is very fast since it only requires three basic (deterministic) detection steps. However the combination of scale and orientation detection makes the algorithm slightly less stable: orientation detection depends on the correct detection of scale and vice versa.

\begin{figure*}[!ht]
\begin{center}$
\begin{array}{cccc}
Seed \; points & ETOS \; (cake) & ETOS \; (Gabor) & CTOS \\
\includegraphics[width=0.24\textwidth]{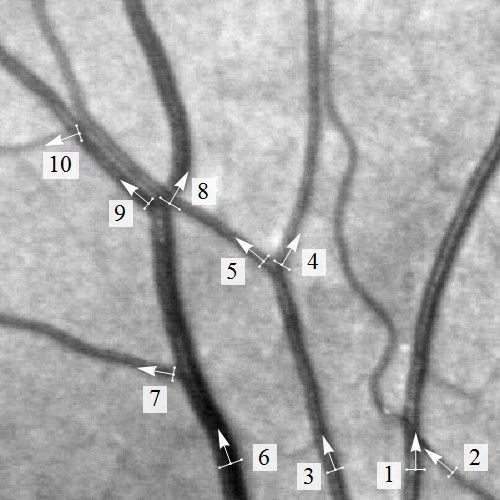}&
\includegraphics[width=0.24\textwidth]{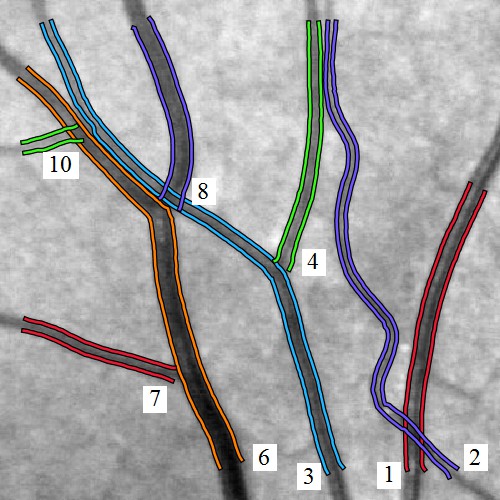}&
\includegraphics[width=0.24\textwidth]{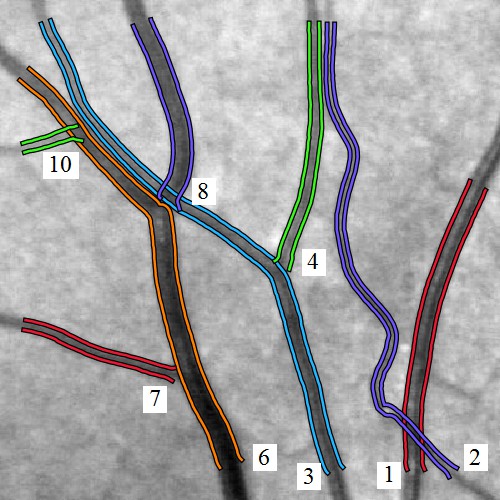}&
\includegraphics[width=0.24\textwidth]{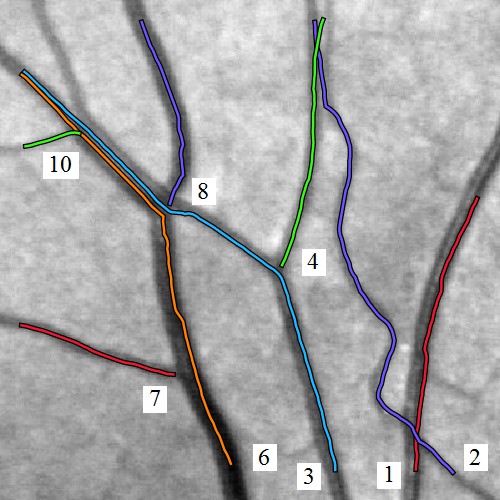}\\
\includegraphics[width=0.24\textwidth]{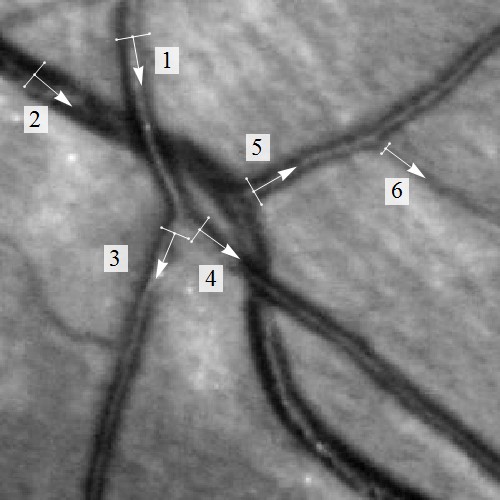}&
\includegraphics[width=0.24\textwidth]{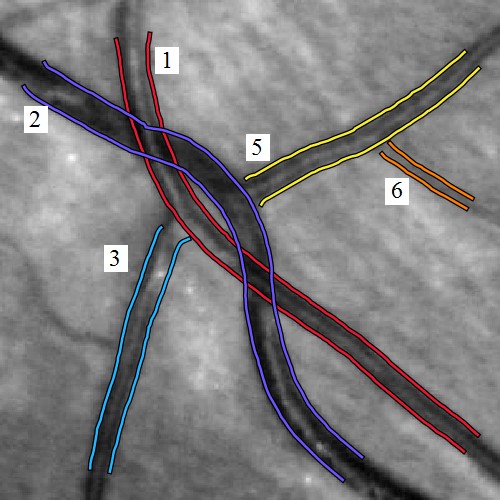}&
\includegraphics[width=0.24\textwidth]{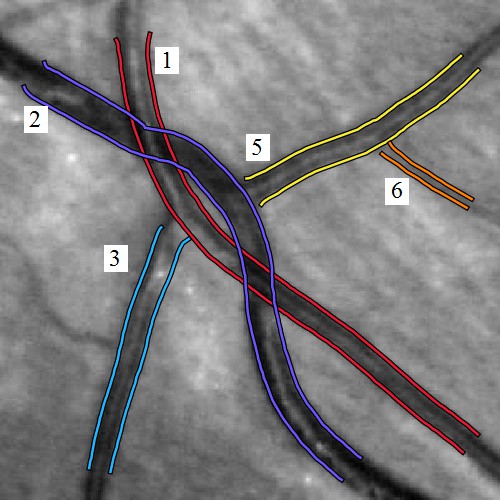}&
\includegraphics[width=0.24\textwidth]{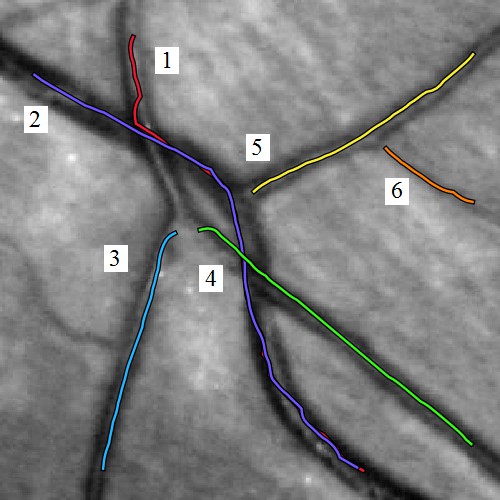}\\
\includegraphics[width=0.24\textwidth]{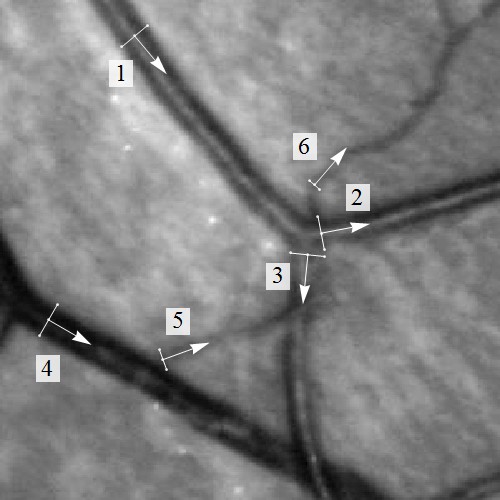}&
\includegraphics[width=0.24\textwidth]{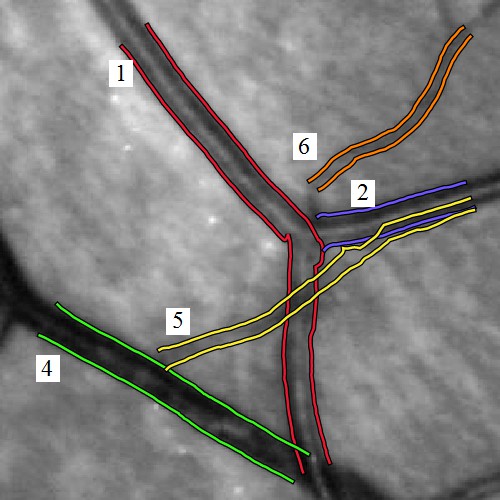}&
\includegraphics[width=0.24\textwidth]{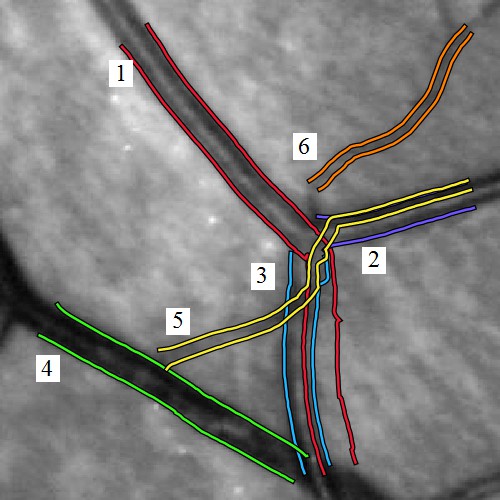}&
\includegraphics[width=0.24\textwidth]{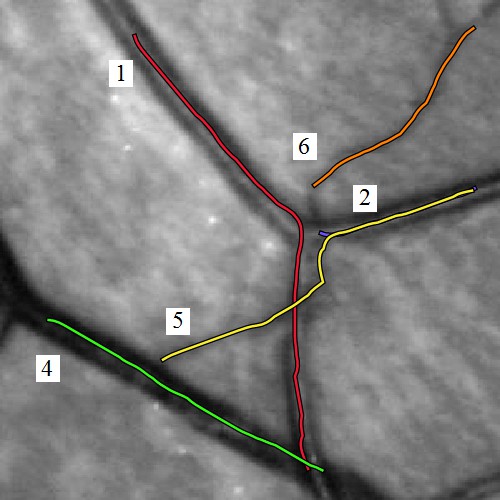}\\
\includegraphics[width=0.24\textwidth]{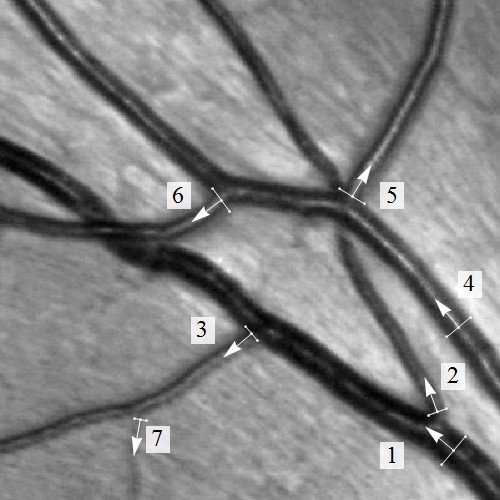}&
\includegraphics[width=0.24\textwidth]{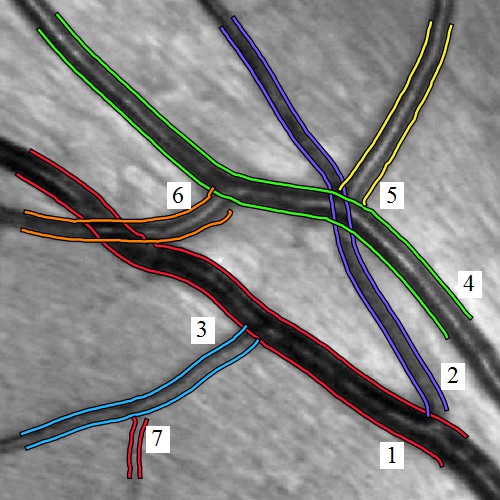}&
\includegraphics[width=0.24\textwidth]{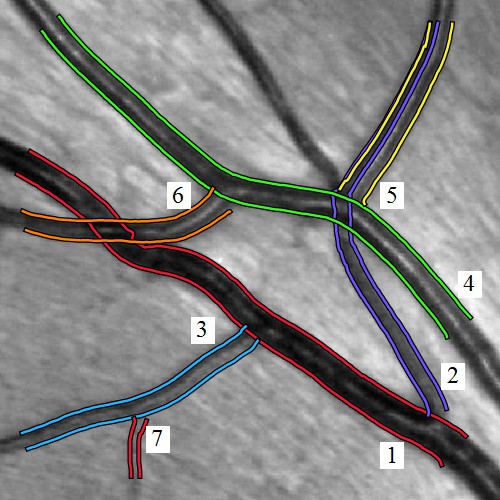}&
\includegraphics[width=0.24\textwidth]{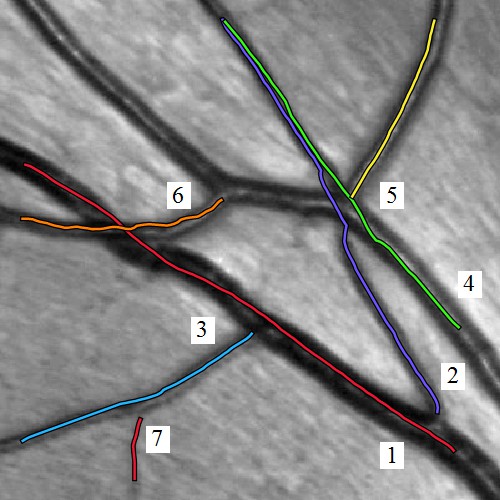}
\end{array}$
\end{center}
\caption{Results of vessel tracking on the test image set. From left to right: seed points, tracking results using the ETOS algorithm using invertible orientation scores (cake
wavelets), tracking results of the ETOS algorithm using non-invertible orientation scores (Gabor wavelets at scale $\tau = 10$) and tracking results of the CTOS algorithm using
orientation scores constructed at scales $\tau = $5, 10, 15, 20, 25 and 30. Note that the results of the CTOS algorithm are only represented as centerlines since vessel width
is not measured. From top to bottom: results on test image 1, 2, 3 and 4.}
\label{fig:experimentalResults}
\end{figure*}

\begin{table*}[!ht]
  \centering
  \caption{REVIEW database comparison of successful measurement percentages (\%), mean vessel widths (Mean) and standard deviations of the measurement errors ($\sigma_\chi$).}
    \begin{tabular}{rrrrrrrrrrrrr}
    \toprule
          & KPIS  &       &       & CLRIS &       &       & VDIS  &       &       & HRIS  &       &\\
          & \%    & Mean  & $\sigma_{\chi}$   & \%    & Mean  & $\sigma_{\chi}$   & \%    & Mean  & $\sigma_{\chi}$   & \%    & Mean  & $\sigma_{\chi}$\\
    \midrule
    Standard & 100.0 & 7.51  & 0.00  & 100.0 & 13.79 & 0.00  & 100.0 & 8.83  & 0.00  & 100.0 & 4.35 & 0.00\\
    O1    & 100.0 & 7.00  & 0.23  & 100.0 & 13.19 & 0.57  & 100.0 & 8.50  & 0.54  & 100.0 & 4.12 & 0.27\\
    O2    & 100.0 & 7.60  & 0.21  & 100.0 & 13.69 & 0.70  & 100.0 & 8.91  & 0.62  & 100.0 & 4.35 & 0.28\\
    O3    & 100.0 & 7.97  & 0.23  & 100.0 & 14.52 & 0.57  & 100.0 & 9.15  & 0.67  & 100.0 & 4.58 & 0.30\\
    \midrule
    Gregson & 100.0 & 7.29  & 0.60  & 100.0 & 12.80 & 2.84  & 100.0 & 10.07 & 1.49  & 100.0 & 7.64  & 1.48\\
    HHFW  & 96.3  & 6.47  & 0.39  & 0.0   &       &       & 78.4  & 7.94  & 0.88  & 88.3  & 4.97  & 0.93\\
    1DG   & 100.0 & 4.95  & 0.40  & 98.6  & 6.30  & 4.14  & 99.9  & 5.78  & 2.11  & 99.6  & 3.81  & 0.90\\
    2DG   & 100.0 & 5.87  & 0.34  & 26.7  & 7.00  & 6.02  & 77.2  & 6.59  & 1.33  & 98.9  & 4.18  & 0.70\\
    \midrule
    ESP   & 100.0 & 6.56  & 0.33  & 93.0  & 15.70 & 1.47  & 99.6  & 8.80  & \textbf{0.77}  & 99.7  & 4.63  & 0.42\\
    Graph & 99.4  & 6.38  & 0.67  & 94.1  & 14.05 & 1.78  & 96.0  & 8.35  & 1.43  & 100.0 & 4.56  & 0.57\\
    ARIA & 100.0 & 6.30  & \textbf{0.29}  & 100.0 & 14.27 & 0.95  & 99.0  & 8.07  & 0.95  & 99.5  & 4.66  & \textbf{0.32}\\
    ETOS & 100.0 & 6.14 & 0.36 & 100.0 & 14.03 & \textbf{0.53} & 99.87 & 8.36 & 0.80 &
    99.83 & 4.95 & 0.45\\
    \bottomrule
    \end{tabular}%
  \caption*{
\textbf{Measurement method abbreviations}: (Standard) - Ground truth measurements based on three human observer measurements, (O1-O3) - Human Observers 1-3, (Gregson) - Gregson rectangle fitting \cite{Gregson1995}, (HHFW) - Half Height Full Width \cite{Brinchmann-Hansen1986}, (1DG) - 1D Gaussian model fitting \cite{Zhou1994}, (2DG) - 2D Gaussian model fitting \cite{Lowell2004}, (ESP) - Extraction of Segment Profiles \cite{Al-Diri2009}, (Graph) - Graph based method \cite{Xu2011}, (ARIA) - Autmated Retinal Image Analyzer \cite{Bankhead2012} and (ETOS) - Edge Tracking on Orientation Scores. \textbf{Dataset abbreviations}: (KPIS) - the Kick Point Image Set, (CLRIS) - Central Light Reflex Image Set, (VDIS) - Vascular Disease Image Set and (HRIS) - the downsampled High Resolution Image Set (HRIS). See Section~\ref{sec:ValidationOfWidthMeasurements} for more details.}
  \label{tab:widthError}%
\end{table*}
\subsection{Validation}
\label{sec:validation}

The algorithms were tested on the green channel of color fundus images. For each image the luminosity is normalized by disposing low frequency luminosity drifts. The low frequency drifts are detected by large scale Gaussian blurring of the image (typically $\sigma = 32$), and are subtracted from the original image.

\subsubsection{Algorithm behavior at complex vessel junctions}
\label{sec:AlgorithmBehaviorAtComplexVesselJunctionPoints}
A qualitative validation is done using a challenging set of 4 sub-images (see Fig.~\ref{fig:experimentalResults}), which were taken from the high-resolution fundus images of the HRFI database \cite{Budai2011}. This set of sub-images contains crossings, overlapping bifurcations with crossings, small vessels crossing large vessels, small vessels, curved vessels, parallel vessels, etc. In each sub-image we manually placed seed points at the start of each blood vessel and at each bifurcation. In total 27 seed points were marked. Each seed point contains initial vessel center position, left edge position, right edge position and orientation, denoted by $\mathbf{c}_0$, $\mathbf{u}_0$, $\mathbf{v}_0$ and $\theta_0$. The initial scale for the CTOS algorithm is detected as the scale that provides the largest scale response at $\mathbf{c}_0$ and $\theta_0$ (see Eq.~(\ref{eq:scaleDetection})).

The tracking experiments are conducted using the following set of tracking parameters: The step size is set to $\lambda = 2$ pixels; The width of the scan line is set to $2 \eta_{max} = 40$ pixels; The number of orientations used to construct the orientation scores is set to $N_o = 36$ and the standard deviation of the Gaussian distributions used in the edge probability envelope is set to $\sigma = 3$.

The ETOS algorithm was tested on both invertible orientation scores, which were constructed by cake wavelets, and non-invertible orientation scores, which were constructed by Gabor wavelets. The scale of the Gabor wavelets was chosen in such a way that the relevant vessel features were presented as well as possible in the orientation scores (e.g. a scale too large would only represent the very large blood vessels correctly, and a scale too low only the small vessels). We found that $a = \cfrac{3}{2\pi} 10$ gave best results. For the CTOS algorithm we used a set of orientation scores constructed by Gabor wavelets at scales $a = \cfrac{3\tau}{2\pi}$ with $\tau = $5,10,15,20,25 and 30.

Results of the tracking experiments are shown in Fig.~\ref{fig:experimentalResults}. From this figure we see that, at complex situations, the ETOS method (column 2 and 3) outperforms the CTOS method (column 4). Best results are obtained when ETOS is used with invertible orientation scores (column 2). The ETOS algorithm acting on the invertible orientation scores generated by the cake kernels only fails to correctly track blood vessel nr 5 from image 3. The algorithm gives excellent results for all other vessels and manages to track the blood vessels through all complex situations, even when the contrast of the vessel edges is very low.

The performance of the ETOS algorithm is slightly decreased when applied to non-invertible orientation scores based on Gabor filters. It now fails to track 3 vessels correctly. The scale selective property of the Gabor wavelets, resulting in non-invertible orientation scores, causes the ETOS algorithm to perform less accurately compared to the application to invertible orientation scores.

The CTOS algorithm, which relies on a multi-scale orientation score approach, has the lowest performance. It fails to correctly track 5 blood vessels. In some cases, incorrect scale selection causes small parallel blood vessel to be detected as one large blood vessel (vessel 2 with 4, and 3 with 6 in the first image). Other tracks failed as a result of incorrect orientation detection.

In conclusion we can state that ETOS outperforms CTOS and that it gives best results when applied on \emph{invertible} orientation scores.

\subsubsection{Validation of width measurements}
\label{sec:ValidationOfWidthMeasurements}
In the previous section we showed that the ETOS algorithm is very well capable of tracking blood vessels through complex situations. In this section we quantitatively validate the reliability of the measured vessel widths that are provided by the ETOS algorithm. This is done by comparing the measured widths to ground truth width measurements provided by the REVIEW database \cite{Al-Diri2009}. The REVIEW database consists of 16 color fundus images, which can be divided into 4 subsets: 1) Kick point image set (KPIS), 2) Central light reflex image set (CLRIS), 3) Vascular disease image set (VDIS) and 4) the high resolution image set (HRIS). Each image set represents images of different quality and resolution, and all are provided with manual width measurements that are performed by three individual observers. A ground truth of the vessel widths is constructed by averaging the measurements of the observers. The HRIS set contains high resolution images (3584x2438) and were down-sampled by a factor of four before submission to the ETOS algorithm. For more information on the dataset we would like to refer to \cite{Al-Diri2009}.

When testing our algorithm, we tracked each segment by initializing the algorithm using the first pair of manually marked edge points. The same parameters that are described in Section~\ref{sec:AlgorithmBehaviorAtComplexVesselJunctionPoints} were used. Fig.~\ref{fig:widthResults} shows a selection of the tracking results in comparison to ground truth vessel edge labeling.

In total 5066 vessel width measurements are available. The error between automated measurements and the ground truth measurements is defined as
\begin{equation}
\chi_i = w_i - w_i^{GT} = \norm{\mathbf{u}_i-\mathbf{v}_i} - w_i^{GT}
\label{eq:error}
\end{equation}
where $w_i$ is the estimated width as measured by the ETOS algorithm (recall Eq.~(\ref{eq:width})), and $w_i^{GT}$ is the ground truth width of the $i$th profile. To be able to compare our method with others we follow the same validation procedure as described in \cite{Al-Diri2009,Bankhead2012,Xu2011}, where the main focus is on the standard deviation of the errors. This is motivated by the idea that different implicit definitions of vessel widths may lead to consistent errors. If this bias however is consistent enough, the error could easily be accounted for by subtraction of a bias constant. A low standard deviation of the errors indicates that the error is consistent.

\begin{figure}[!h]
\begin{center}
\includegraphics[width=\textwidthtwo]{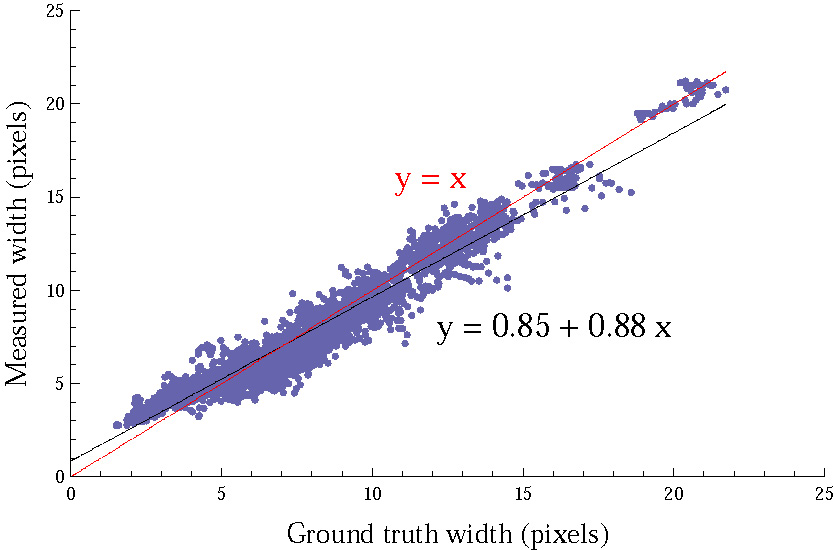}
\end{center}
\caption{A scatter plot, plotting 5059 ground truth widths against the widths measured by our ETOS algorithm. The linear regression model $y = 0.85 + 0.88 x$ indicates an offset of less then a pixel, suggesting that ETOS slightly over-estimates the vessel widths. The slope of 0.88 indicates a strong positive relation between the ground truth and measured widths.}
\label{fig:scatter}
\end{figure}

\begin{figure}[!h]
\begin{center}$
\begin{array}{ccc}
\includegraphics[width=0.33\textwidthtwo]{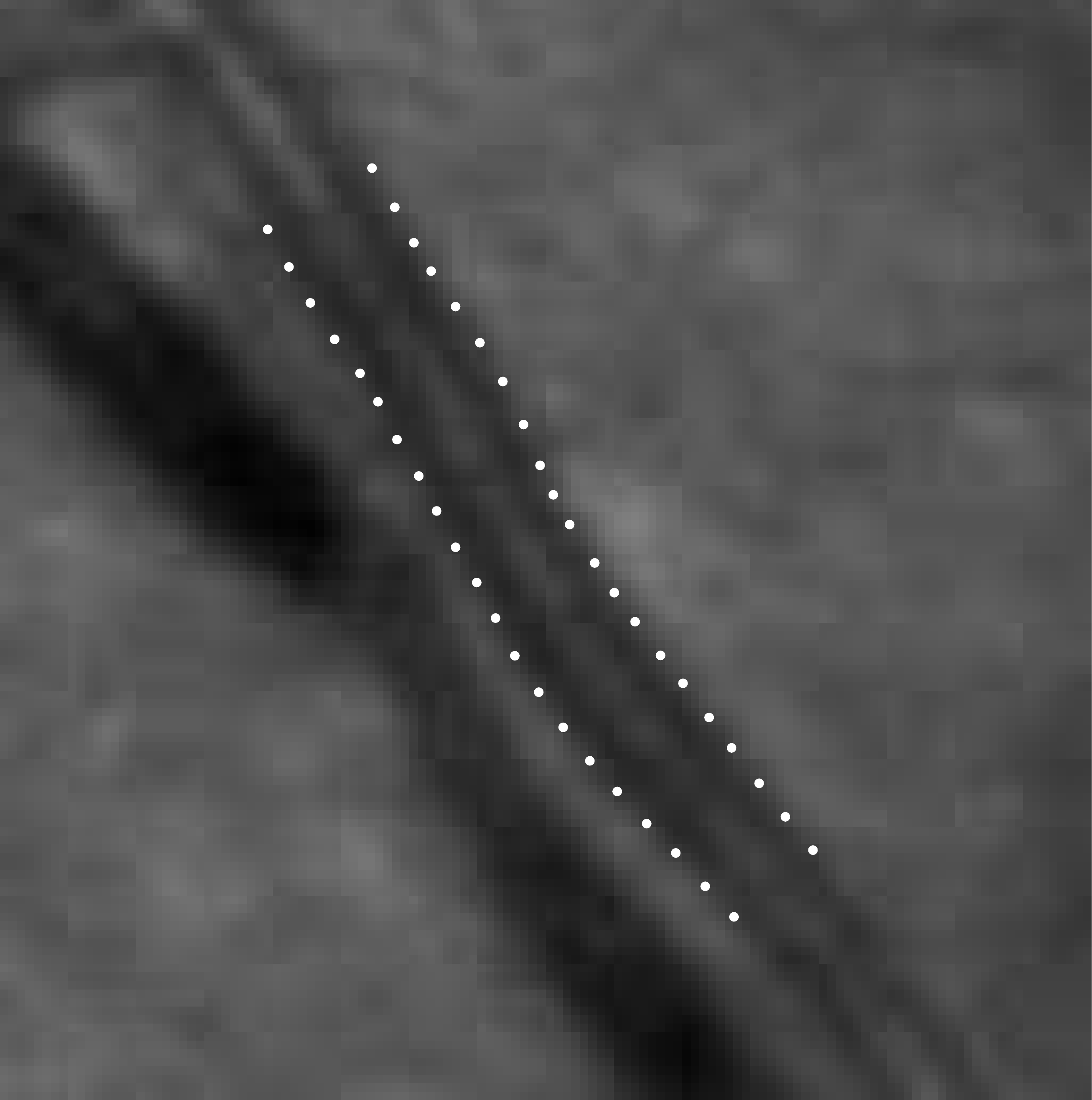}&
\includegraphics[width=0.33\textwidthtwo]{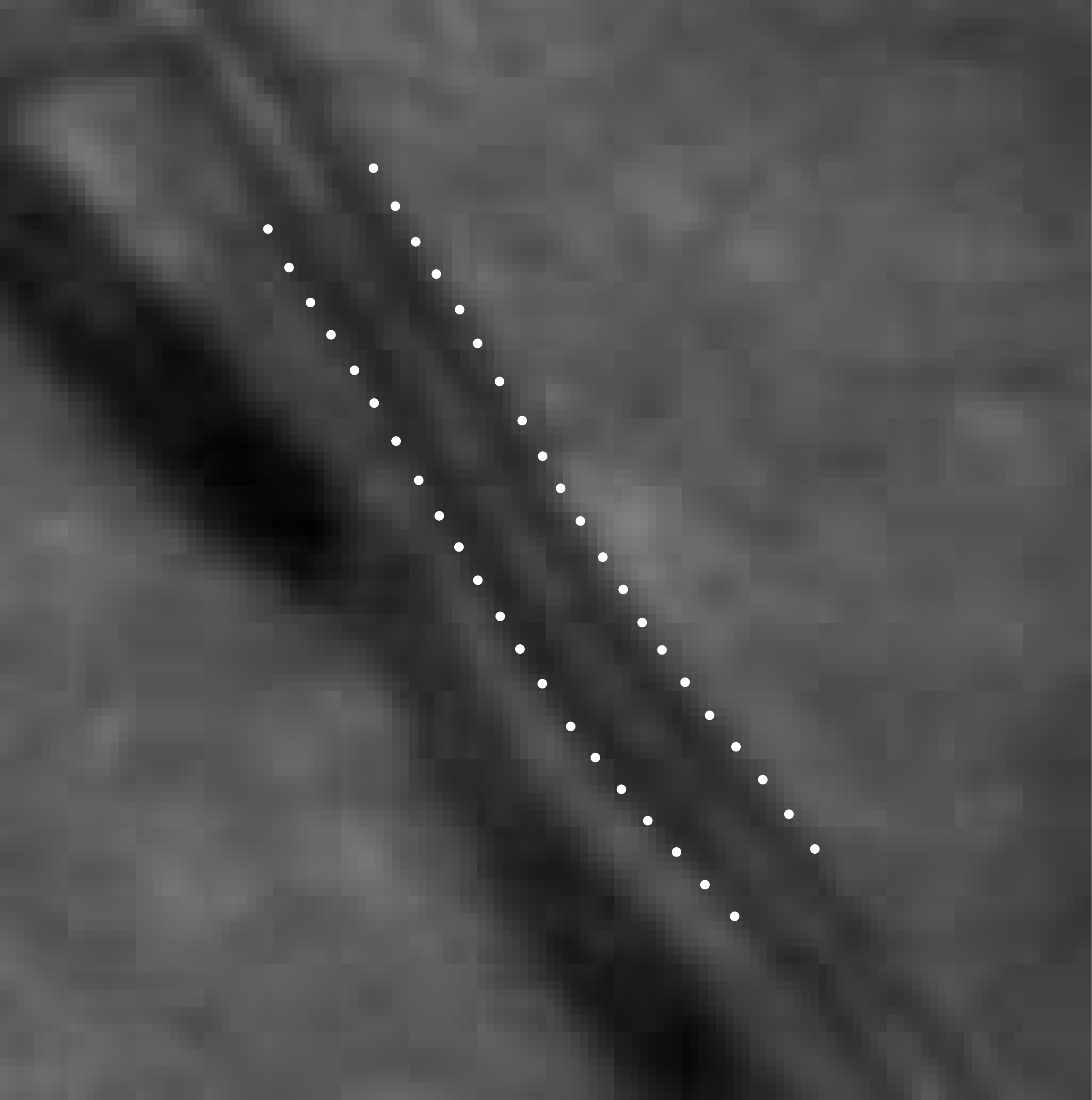}&
\includegraphics[width=0.33\textwidthtwo]{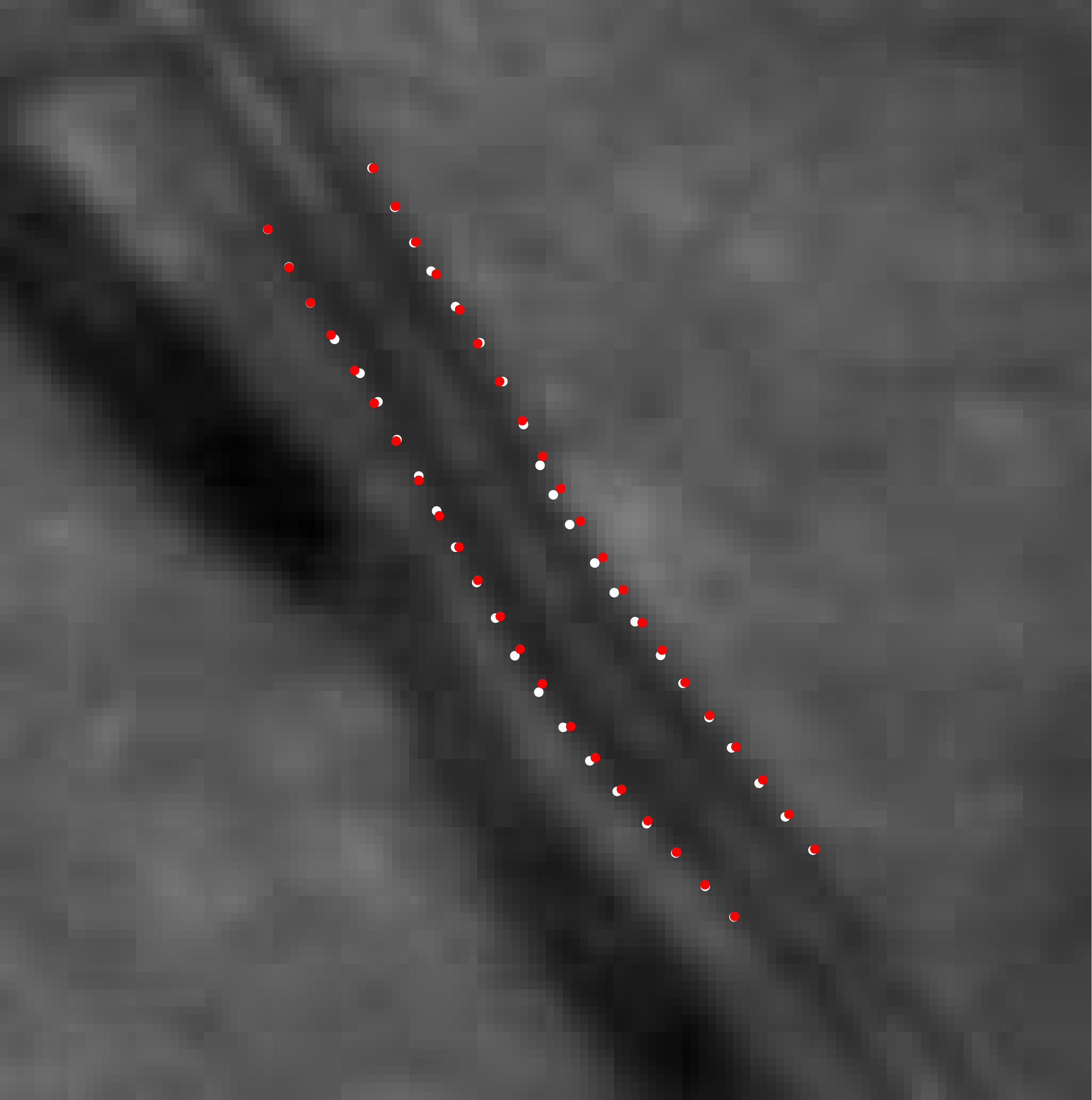}\\
\includegraphics[width=0.33\textwidthtwo]{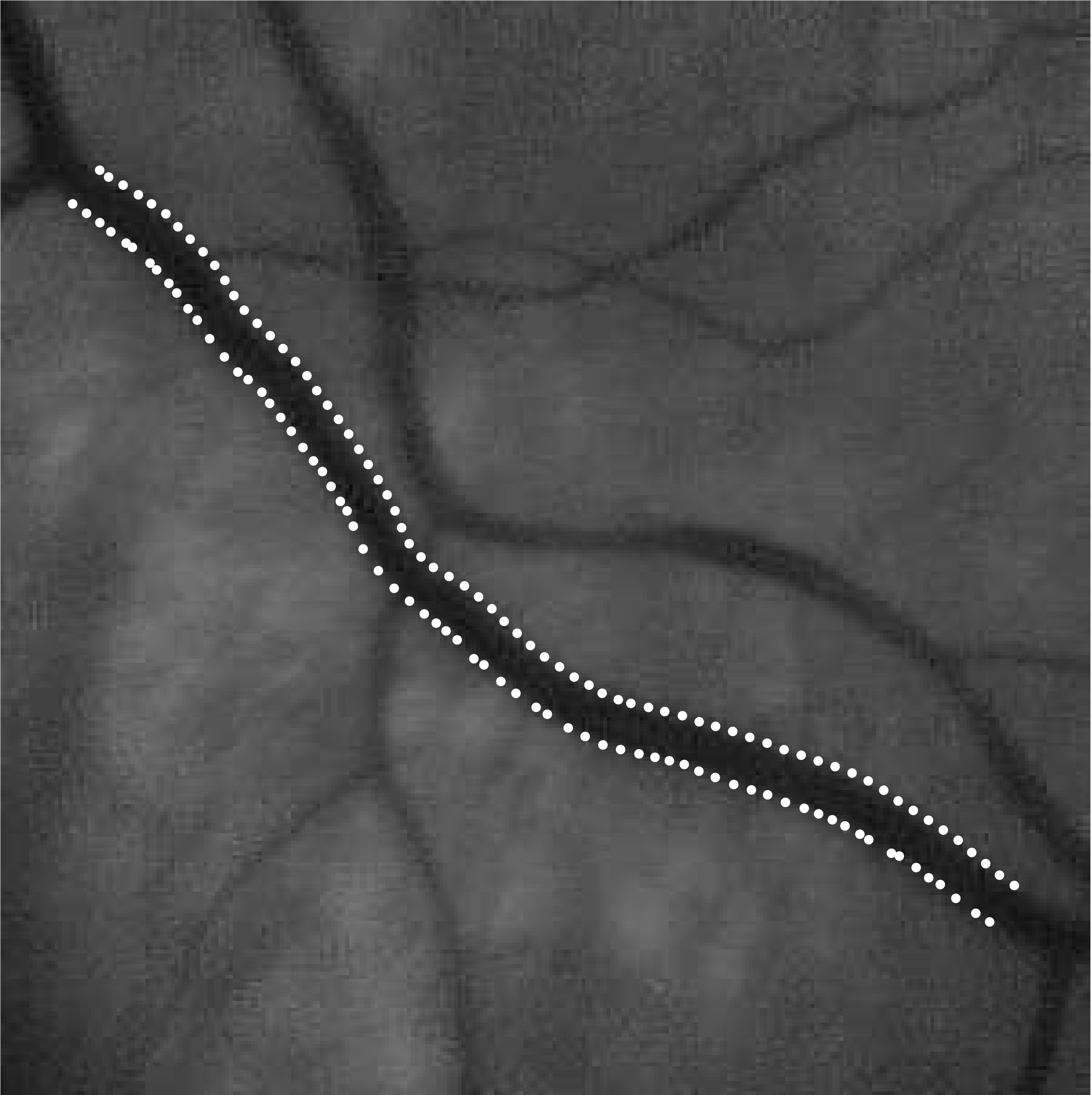}&
\includegraphics[width=0.33\textwidthtwo]{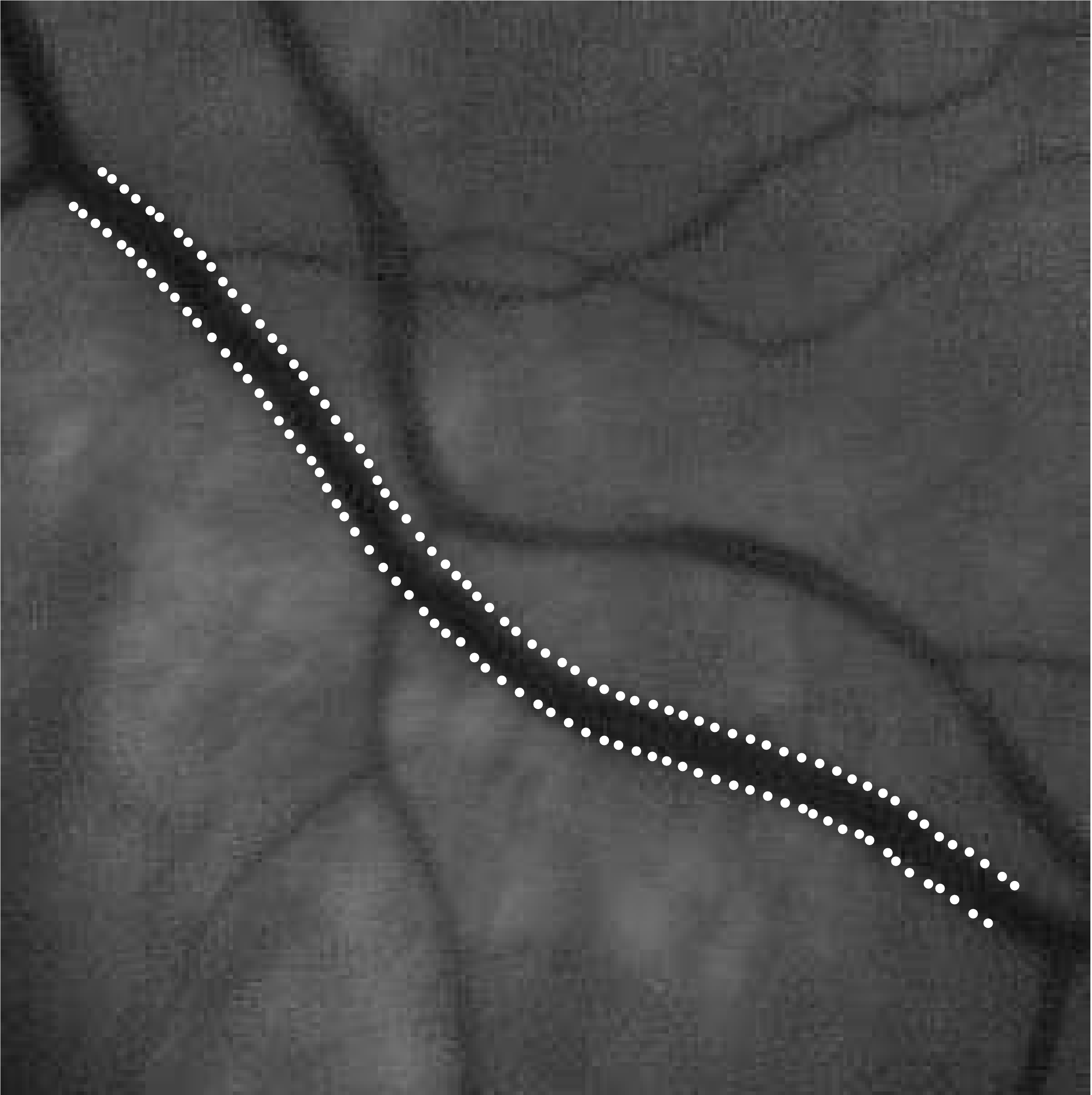}&
\includegraphics[width=0.33\textwidthtwo]{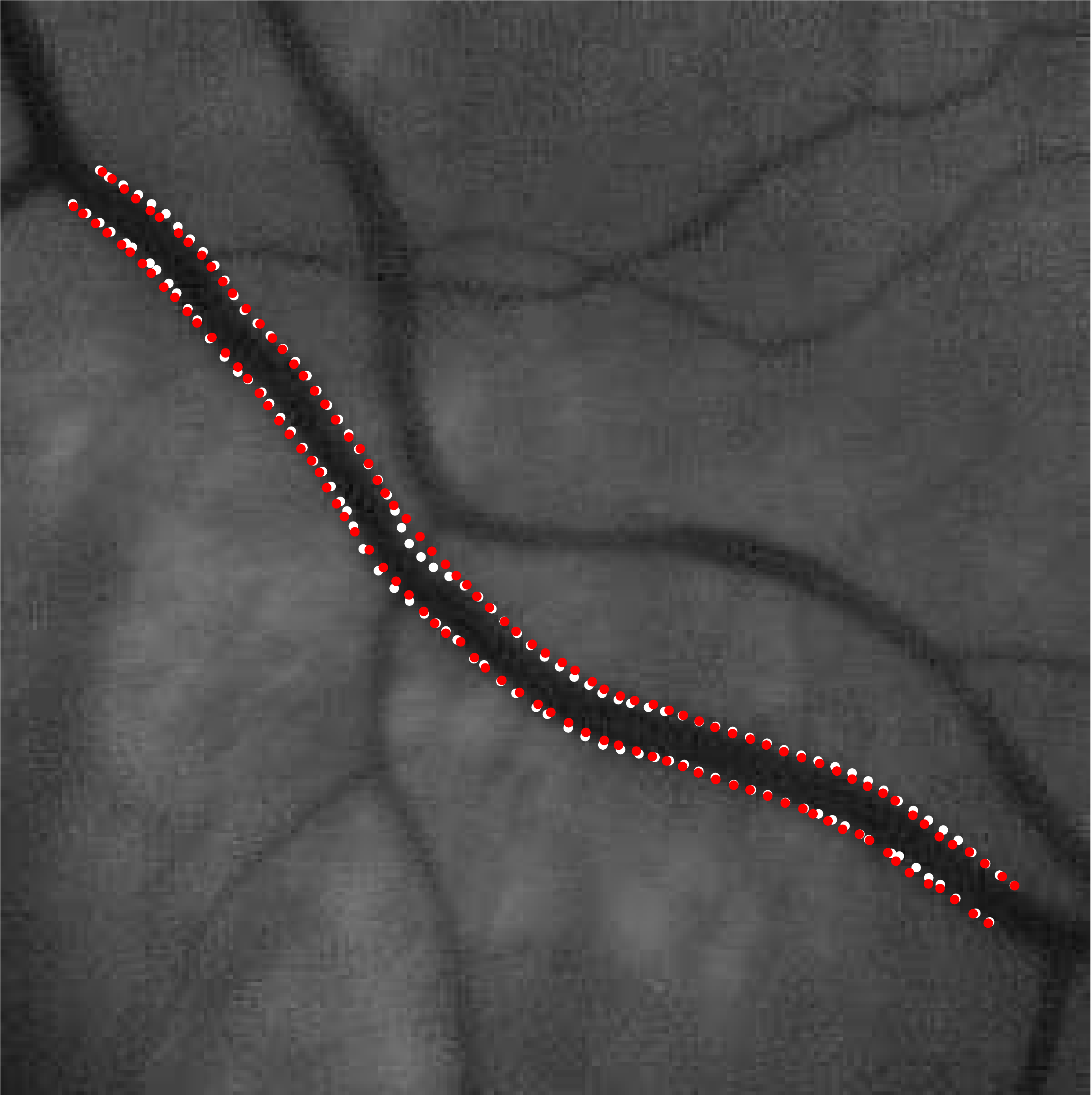}\\
\includegraphics[width=0.33\textwidthtwo]{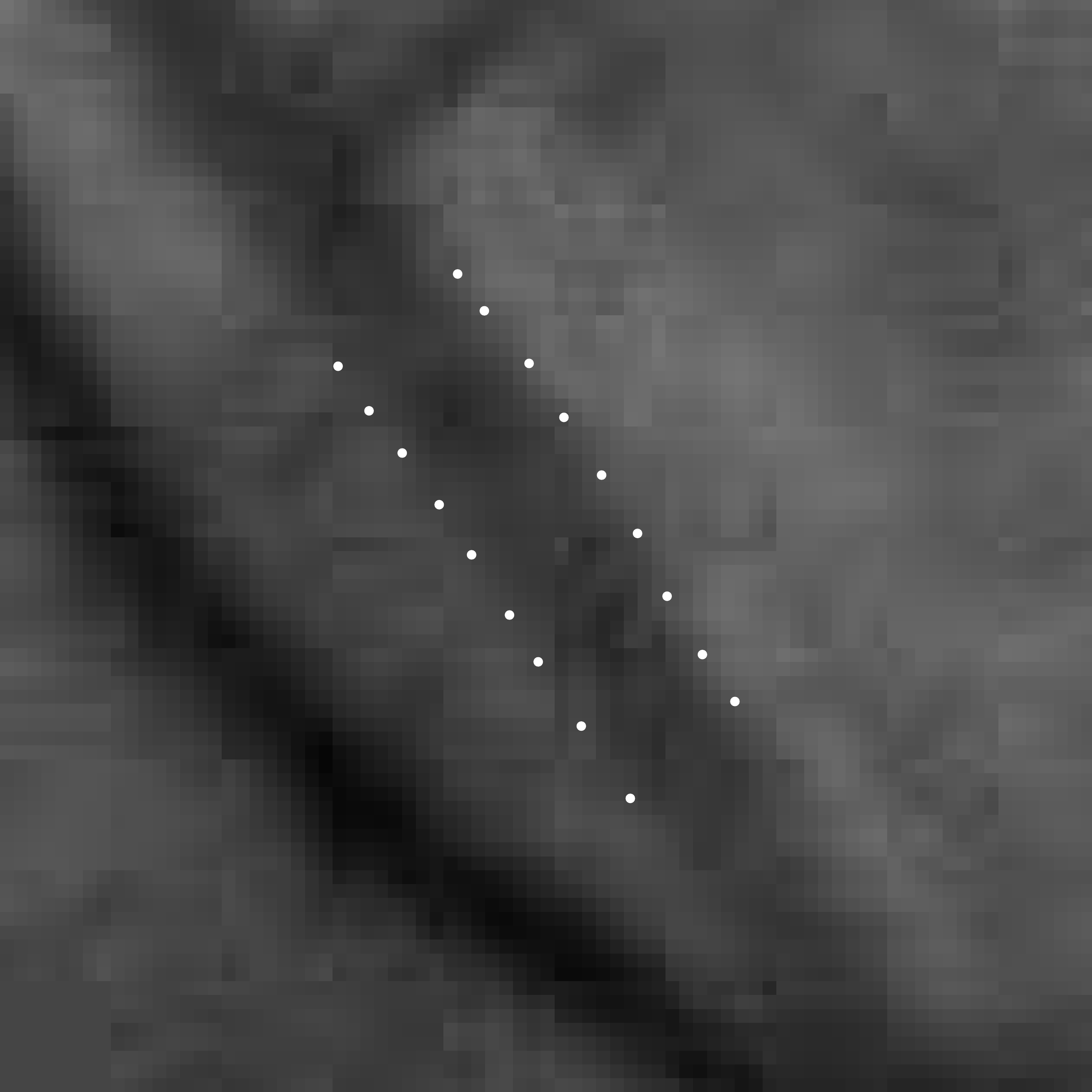}&
\includegraphics[width=0.33\textwidthtwo]{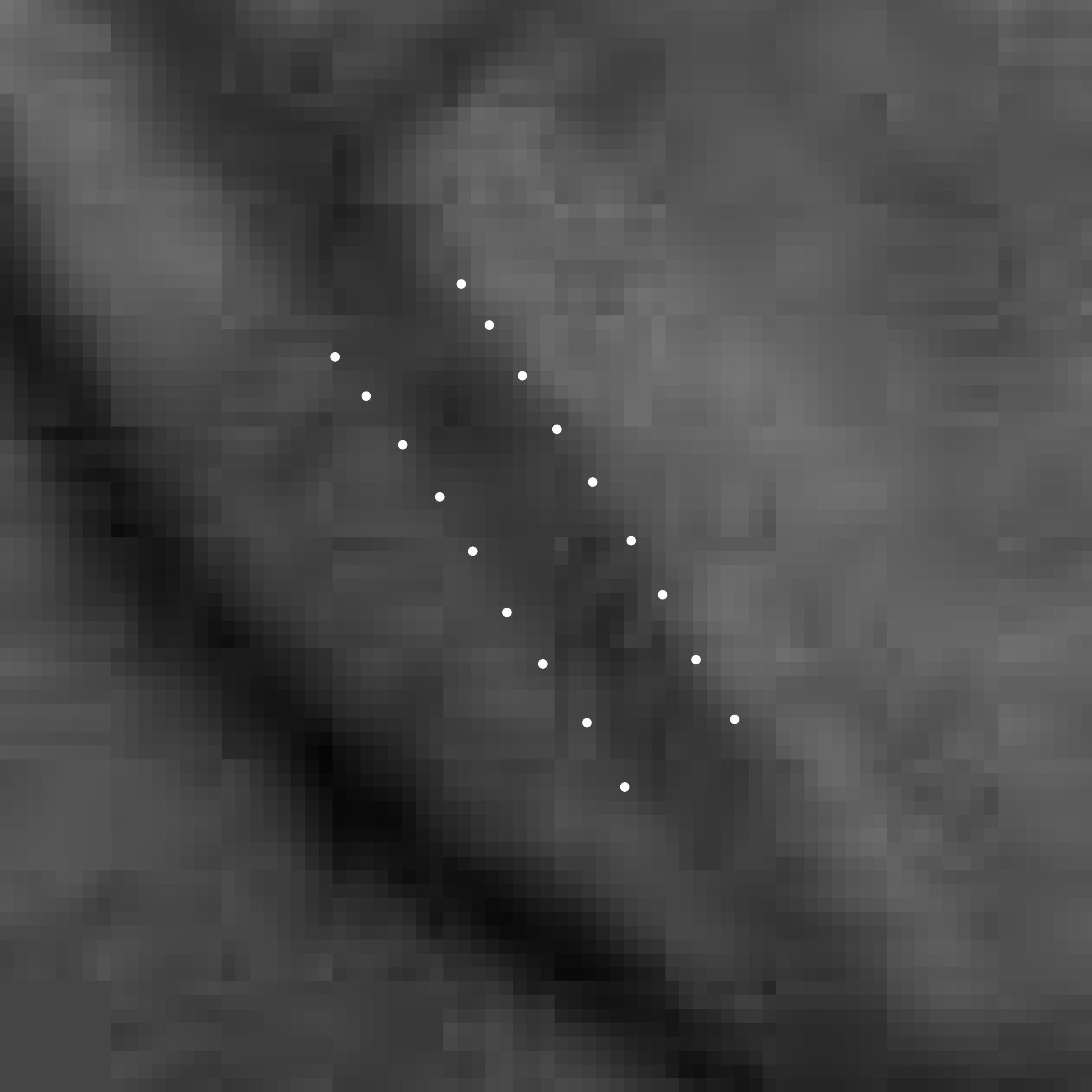}&
\includegraphics[width=0.33\textwidthtwo]{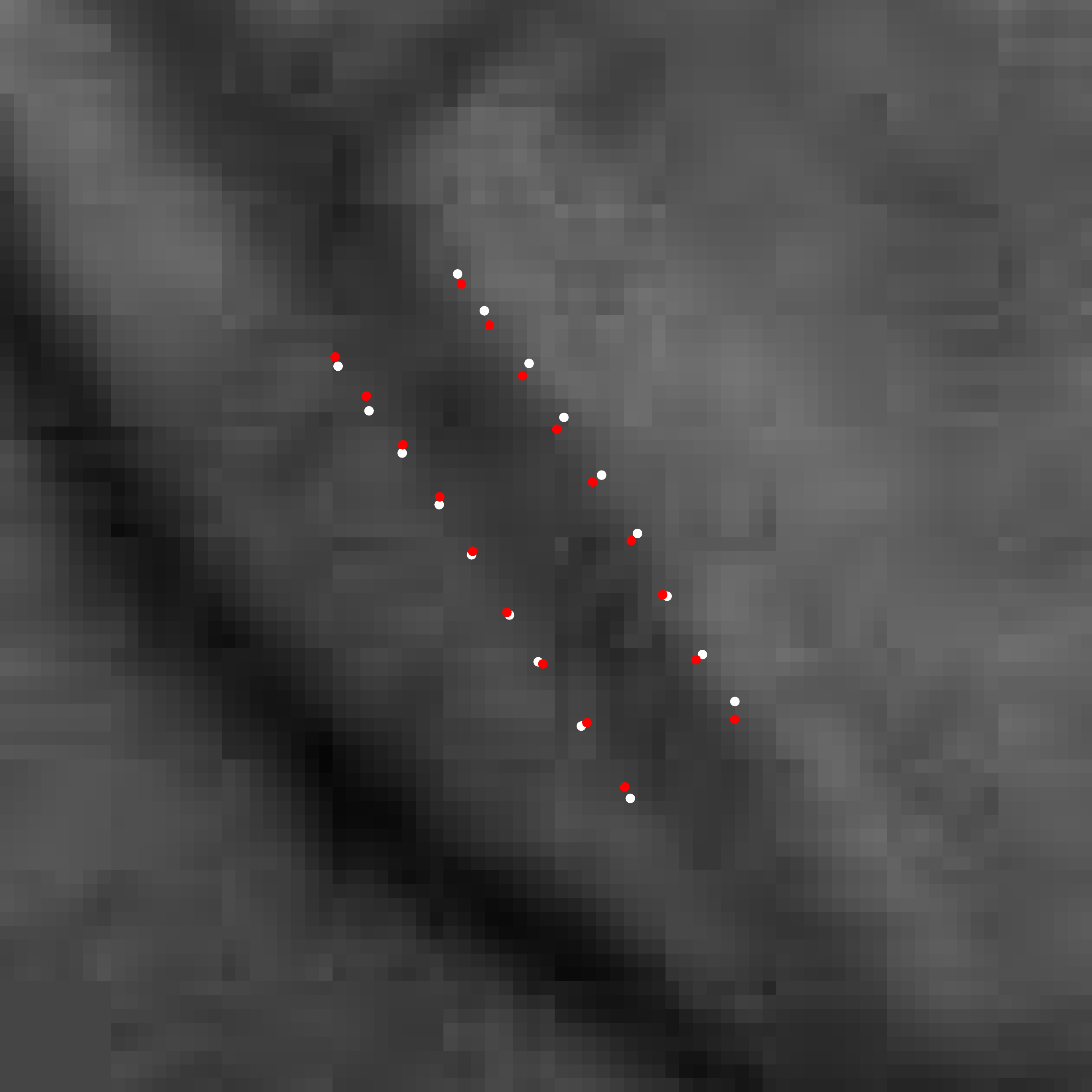}\\
\includegraphics[width=0.33\textwidthtwo]{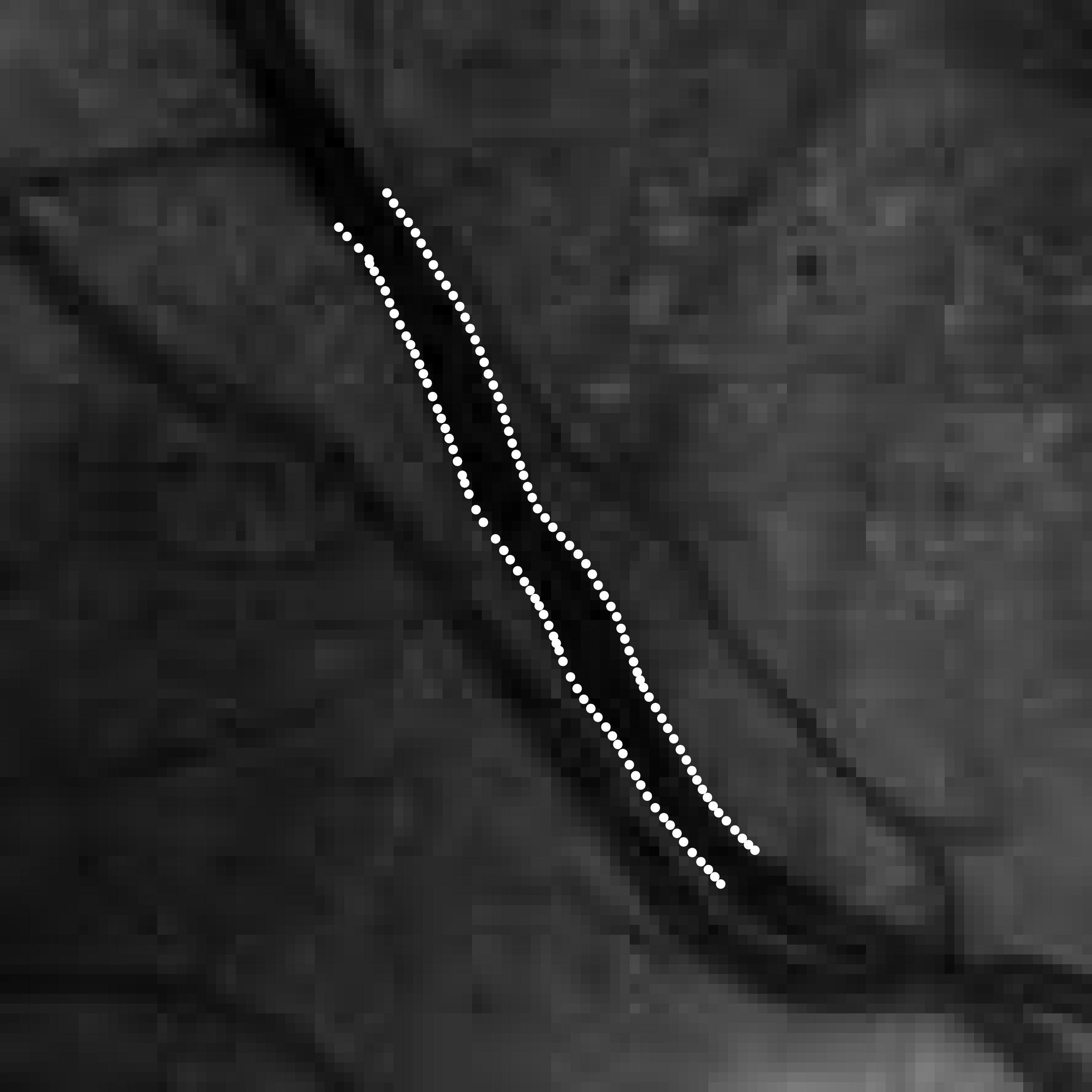}&
\includegraphics[width=0.33\textwidthtwo]{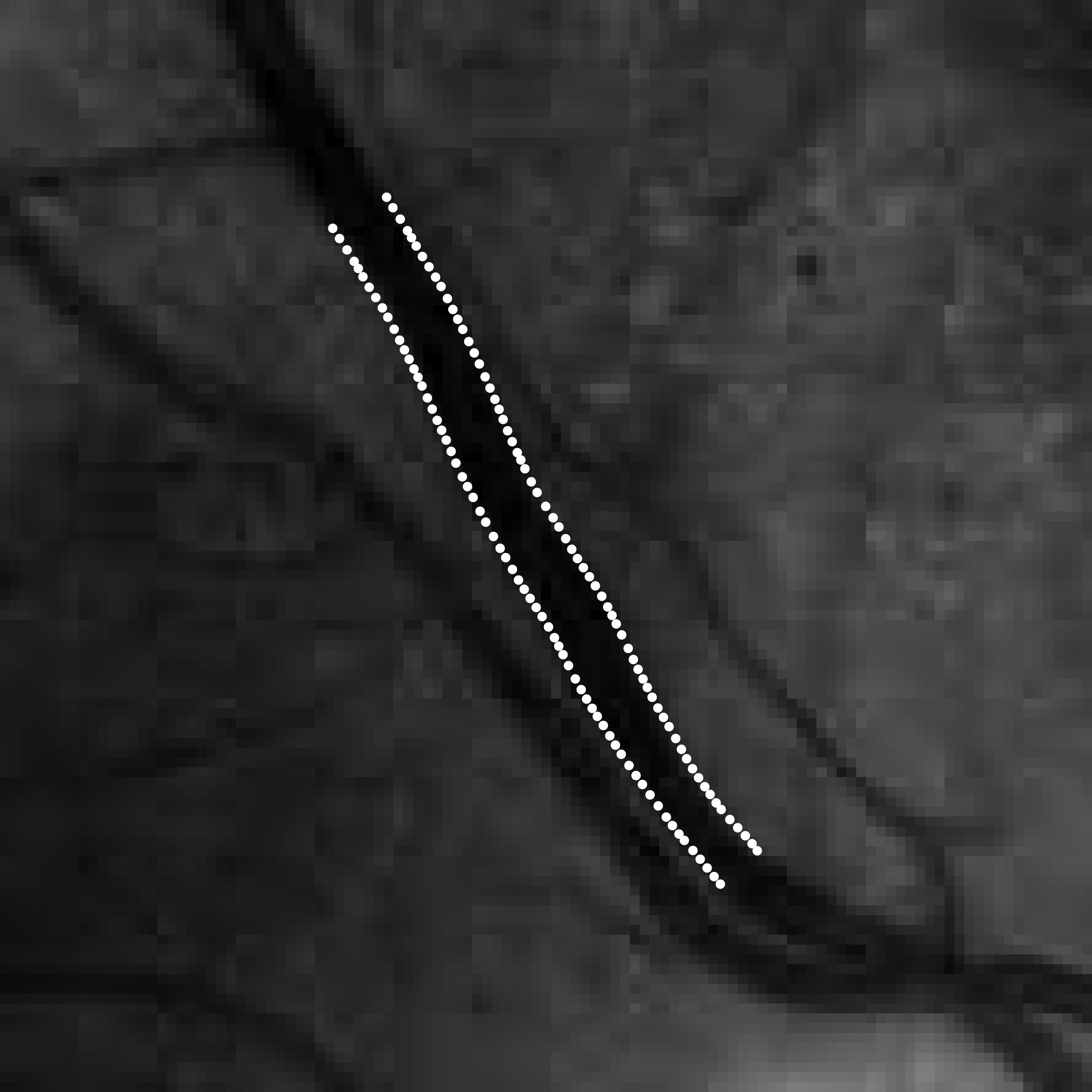}&
\includegraphics[width=0.33\textwidthtwo]{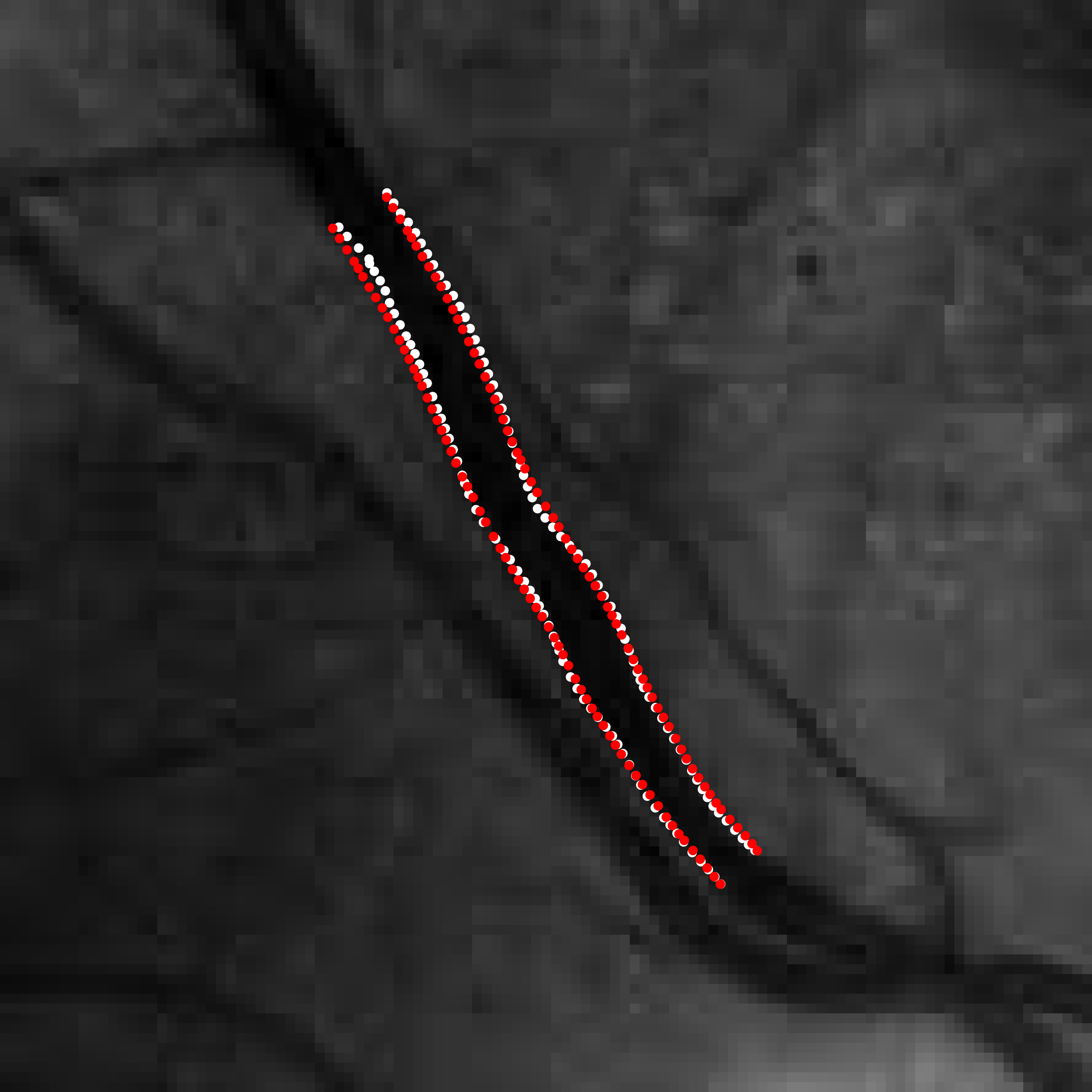}
\end{array}$
\end{center}
\caption{Several tracked vessel segments by ETOS in comparison with manual width measurements. Left column shows the ground truth vessel edge labeling as provided by the REVIEW database. The middle column shows results obtained by the ETOS algorithm and the right column shows both the ground truth (in white) and our results (in red).}
\label{fig:widthResults}
\end{figure}

Table~\ref{tab:widthError} shows the validation results of our ETOS algorithm, in comparison with methods by other authors that published their results using the same database \cite{Al-Diri2009,Bankhead2012,Xu2011}. The first four rows of the table show the results of the manual annotations (observer 1, 2 and 3) and the golden standard. The next four rows show results of four classic approaches to vessel width measurements:
\begin{itemize}
  \item Gregson: a rectangle is fitted to a vessel intensity profile, and the width is set such that the area under the rectangle and profile \cite{Gregson1995} are equal.
  \item Half Height Full Width (HHFW): the standard half-height method, which uses thresholds set half-way between the maximum and minimum intensities at either side of an estimated center point \cite{Brinchmann-Hansen1986}.
  \item 1D Gaussian (1DG): a 1D Gaussian model is fit to the vessel intensity profile \cite{Zhou1994}.
  \item 2D Gaussian: a 2D Gaussian model is fit to the vessel intensity profile \cite{Lowell2004}.
\end{itemize}
The next three rows give results of the most recent, state of the art methods that published their results:
\begin{itemize}
  \item The Extraction of Segment Profiles (ESP) is an active contour algorithm by Al-Diri et al. \cite{Al-Diri2009}.
  \item The Graph method is a graph based edge segmentation technique developed by Xu et al. \cite{Xu2011}.
  \item The Automated Retinal Image Analyzer (ARIA) is an algorithm developed by Bankhead et al. \cite{Bankhead2012}, they used a wavelet approach to vessel segmentation after which the edge locations are refined.
\end{itemize}
The last row shows the results we achieved using our ETOS algorithm.

The column labeled with \% shows the success rate, it indicates how many width measurements could successfully be validated (for more detail see \cite{Al-Diri2009}). The success percentage is smaller then 100\% whenever measurements failed to converge, e.g. when the distance between the ground truth and measured edge pair was too large. The column labeled with \emph{Mean} indicates the mean vessel width of all the measured vessel profiles. The column labeled with $\sigma_\chi$ indicates the standard deviation of the error (Eq.~(\ref{eq:error})), a lower $\sigma_\chi$ is favorable since it indicates that the error is consistent.

From Table~\ref{tab:widthError} it can be observed that ESP, Graph, ARIA and our ETOS algorithm all outperform the classic width measurement techniques. Also compared to the state of the art methods our algorithm scores very well. The ETOS algorithm performs remarkably well on the CLRIS dataset, which contains a large number of vessels with the central light reflex. For these images, the standard deviation of the errors is even lower than those of the observers. For other datasets, our method`s performance is comparable to the state of the art.

Fig. \ref{fig:scatter} shows a scatter plot of the ground truth widths against the widths measured by our ETOS algorithm, together with a linear regression model that was fit through these points. The points are very much centered around the line $y = x$, indicating a strong positive correlation. This is confirmed by the slope of the linear regression model $y = 0.85 + 0.88 x$, which is near to 1. The low number of outliers in the scatter plot confirms the low standard deviation in errors, as demonstrated by Table~\ref{tab:widthError}. The offset of 0.85, together with slope 0.88, indicate that the ETOS algorithm has the tendency to slightly overestimate for vessels of size up to 7 pixels and underestimates for larger vessel sizes.

We conclude that our ETOS algorithm, which is highly capable of tracking blood vessels through all sorts of complex situations, also provides reliable width measurements.

\section{Vasculature tracking}
In this section we describe additions to the ETOS algorithm, so as to be able to construct models of the complete retinal vasculature. Our vasculature tracking algorithm consists of:
 \begin{enumerate}
   \item Optic disk detection.
   \item Seed point detection in the optic disk region.
   \item Correct Initialization of the ETOS algorith by robust initial edge detection.
   \item Automatic termination based on a set of stopping criteria.
   \item Junction detection, classification and numbering.
   \item Junction resolving.
 \end{enumerate}
Each of these items is described in Sections~\ref{sec:opticdiskdetection} to \ref{sec:junctionResolving} and the complete algorithm is validated in Section~ \ref{sec:vasculatureResults}.

\subsection{Methods}
\label{sec:vasculatureTrackingMethods}

\subsubsection{Optic disk detection}
\label{sec:opticdiskdetection}
Since the blood vessels of interest all enter the eye through the optic disk, vasculature tracking is initiated in this region. The detection of the optic disk occurs in two phases. In the first phase a rough estimation of the optic disk position is made based on a method using variance filtering, as proposed by Sinthanayothin \cite{Sinthanayothin1999}. This method is based on the significant variance in pixel intensities that occur in the optic disk region.

Next, the estimated optic disk position is refined by edge focusing \cite{Bergholm1987} on the optic disk boundary, followed by a weighted Hough transform \cite{Hough1962} for circles. Here we assume that the optic disk is approximately circular. To avoid disturbance of blood vessels during the detection of the optic disk boundary, the vessels are first removed by applying a closing operator on the red channel of the image (Fig.~\ref{fig:opticDiskDetection}b). Edge focussing is performed on a star-shaped set of profiles (Fig.~\ref{fig:opticDiskDetection}c) and the detected edge positions are used as input for the weighted Hough transform for circles. The weight of each edge position is determined by the scale up to which the edge can be traced in Gaussian scale space (before it reaches a so-called toppoint \cite{Florack2000,Johansen1994}).

The optic disk radius $R_{OD}$, found by the Hough transform for circles, describes the size of the optic disk. For adult human eyes, the average radius of the optic disk is known to be $R_{OD}^{ref} = 0.92mm$ \cite{Tasman2009}. Vessels in the optic disk region typically have a caliber of $\langle w \rangle_{av}^{ref} = 0.15 mm$. While the resolution of retinal images may vary from camera to camera, the physical dimensions of the human eye are rather constant between individuals. In order to make our algorithm generally applicable to retinal images of different resolution, we will normalize distances used in our routines with
\begin{equation}
\label{eq:widthav}
\langle w \rangle_{av} = \cfrac{\langle w \rangle_{av}^{ref}}{R_{OD}^{ref}} R_{OD}\approx \cfrac{1}{6}R_{OD},
\end{equation}
where, based on real physical values, $\langle w \rangle_{av}$ describes typical retinal vessel calibers in pixels.

\begin{figure}[!ht]
        \centering
        \begin{subfigure}[t]{0.345\textwidthtwo}
                \centering
                \includegraphics[width=\textwidth]{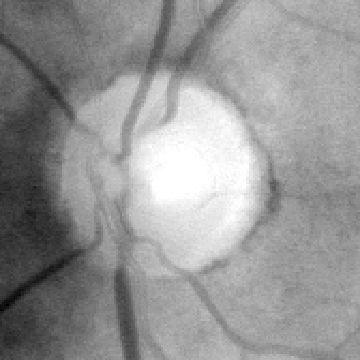}
                \caption{}
                \label{fig:edgesInit:a}
        \end{subfigure}
        \begin{subfigure}[t]{0.345\textwidthtwo}
                \centering
                \includegraphics[width=\textwidth]{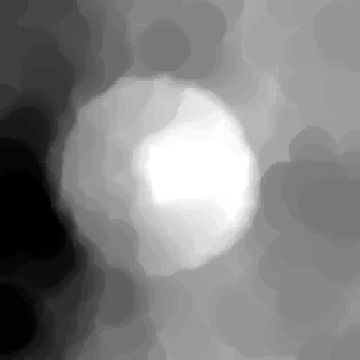}
                \caption{}
                \label{fig:edgesInit:b}
        \end{subfigure}
        \begin{subfigure}[t]{0.345\textwidthtwo}
                \centering
                \includegraphics[width=\textwidth]{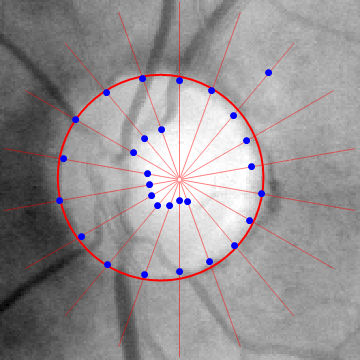}
                \caption{}
                \label{fig:edgesInit:c}
        \end{subfigure}
\caption{Automated optic disk segmentation. In image (a) the red channel of a color fundus image around the estimated optic disk position is shown. The vessels in this sub-image are filtered out by a closing operator (b). To detect dominant edges, edge focussing is performed on a star shaped set of intensity profiles, dominant edge positions are shown as blue dots in image (c). A weighted Hough transform is performed using the these positions, the result is shown as a red circle.}
\label{fig:opticDiskDetection}
\end{figure}

\begin{figure}[!ht]
        \centering
        \begin{subfigure}[t]{.52\textwidthtwo}
                \centering
                \includegraphics[width=.52\textwidthtwo]{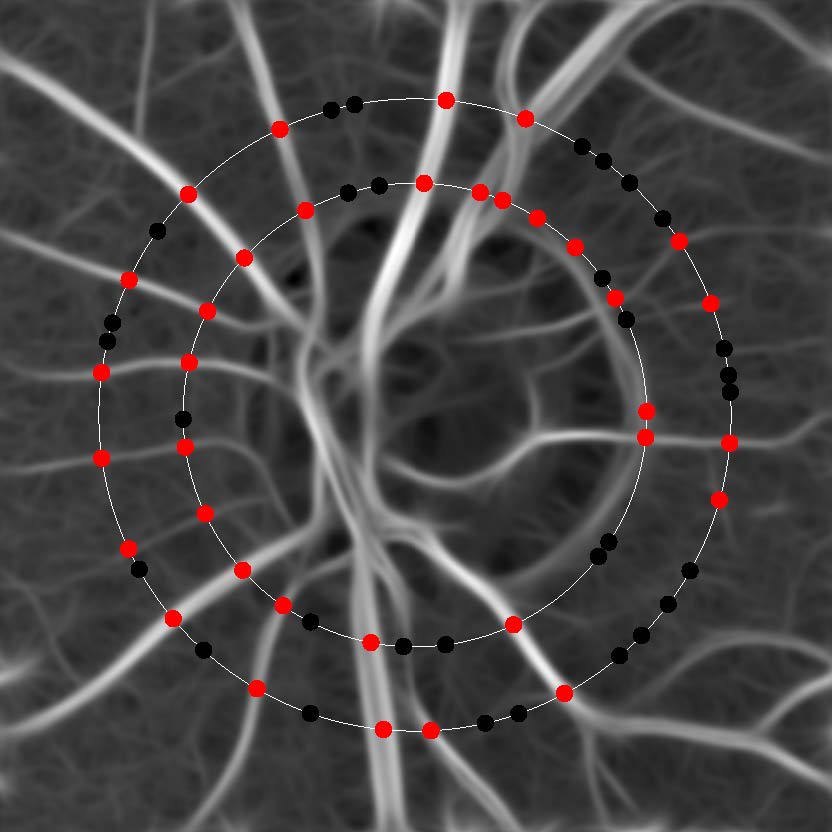}
                \caption{}
                \label{fig:SPsInit:a}
        \end{subfigure}
        \begin{subfigure}[t]{.52\textwidthtwo}
                \centering
                \includegraphics[width=.52\textwidthtwo]{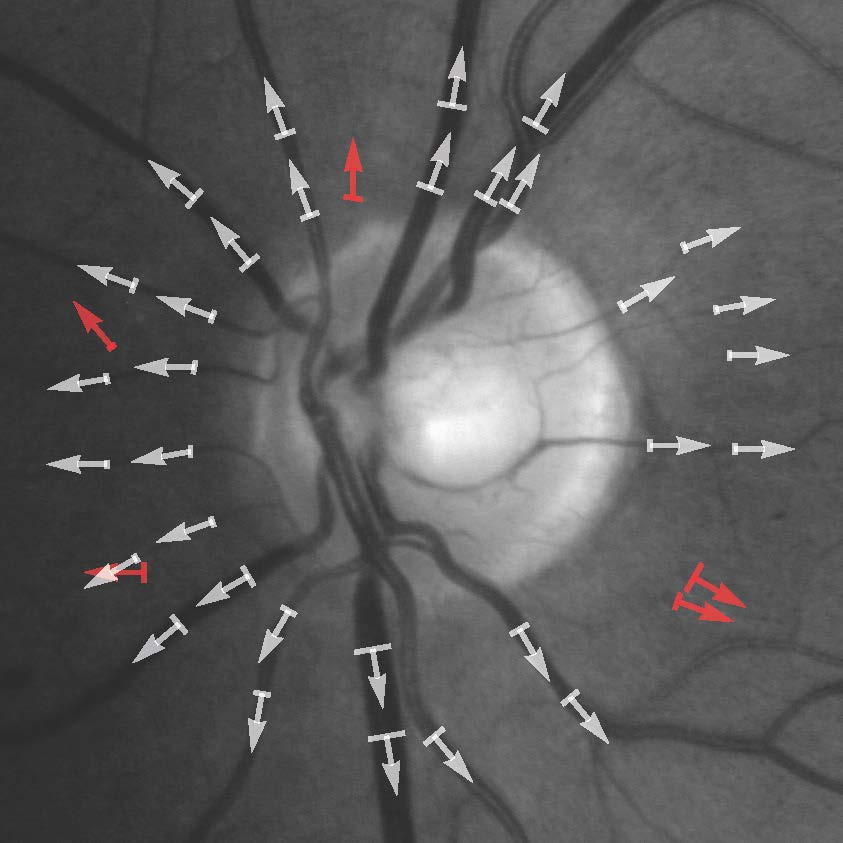}
                \caption{}
                \label{fig:SPsInit:b}
        \end{subfigure}
        \caption{Initial seed point detection from vessel likelihood maps. (a) A vessel likelihood map of the optic disk region, with two circular profiles on which initial seed points are detected. Detected seed points are shown as red dots, discarded as black dots. (b) Edge initialization and true positive seed point selection. White arrows show the detected seed points, red arrows are seed points classified as false positives.}
        \label{fig:SPsInit}
\end{figure}

\begin{figure*}[!ht]
        \centering
        \begin{subfigure}[t]{0.40\textwidth}
                \centering
                \frame{\includegraphics[height=.19\textheight]{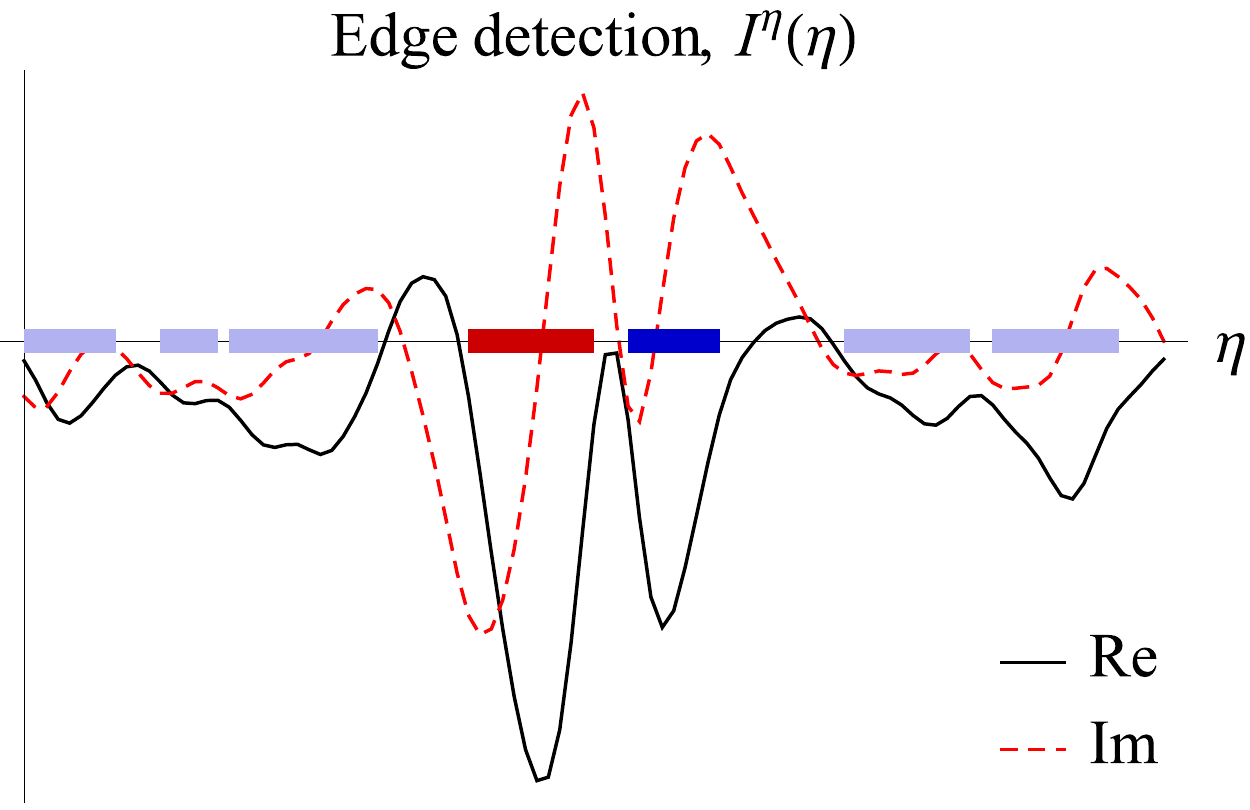}}
                \caption{}
                \label{fig:edgesInit:a}
        \end{subfigure}
        \begin{subfigure}[t]{0.29\textwidth}
                \centering
                \includegraphics[height=.19\textheight]{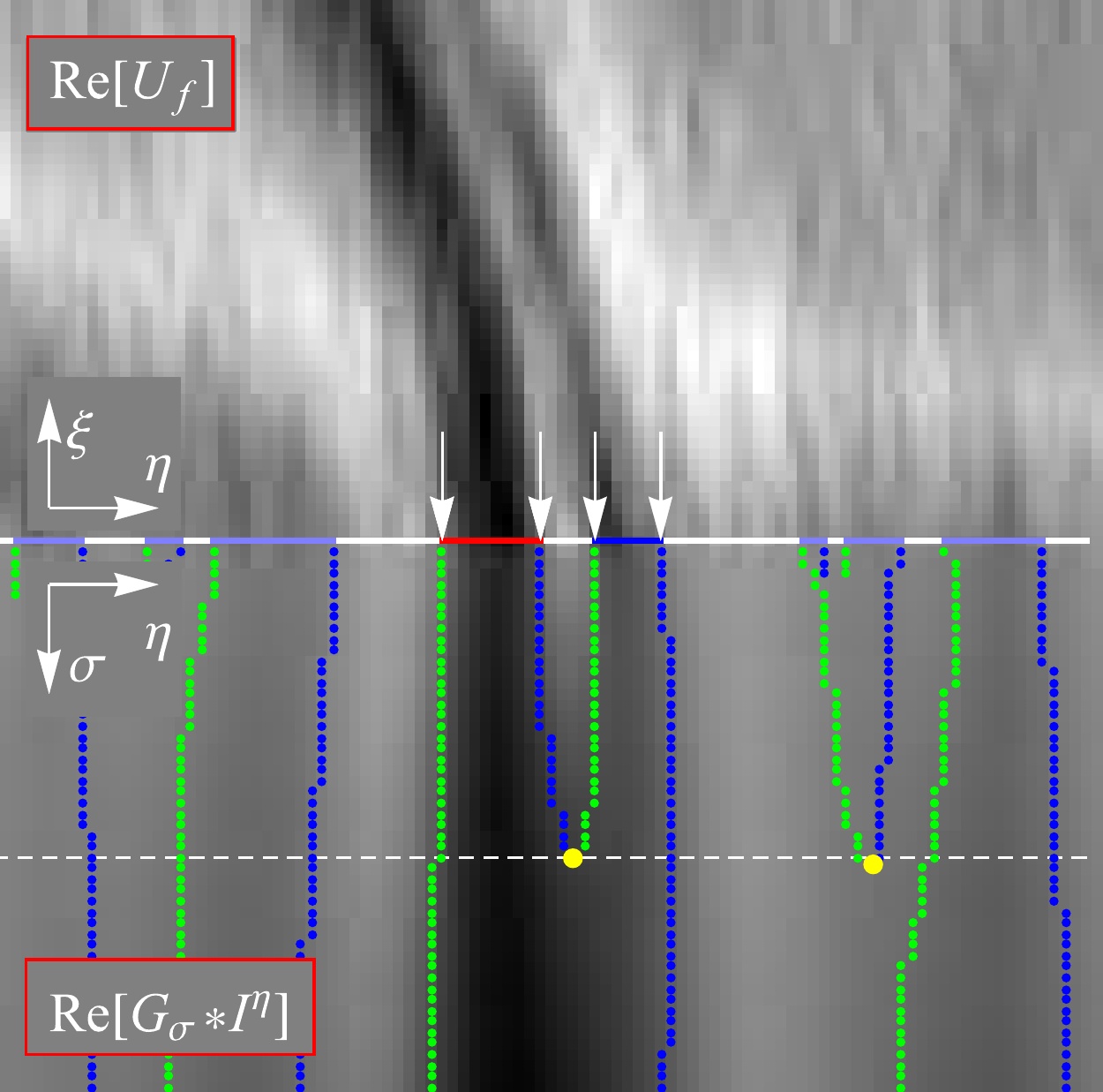}
                \caption{}
                \label{fig:edgesInit:b}
        \end{subfigure}
        \begin{subfigure}[t]{0.29\textwidth}
                \centering
                \includegraphics[height=.19\textheight]{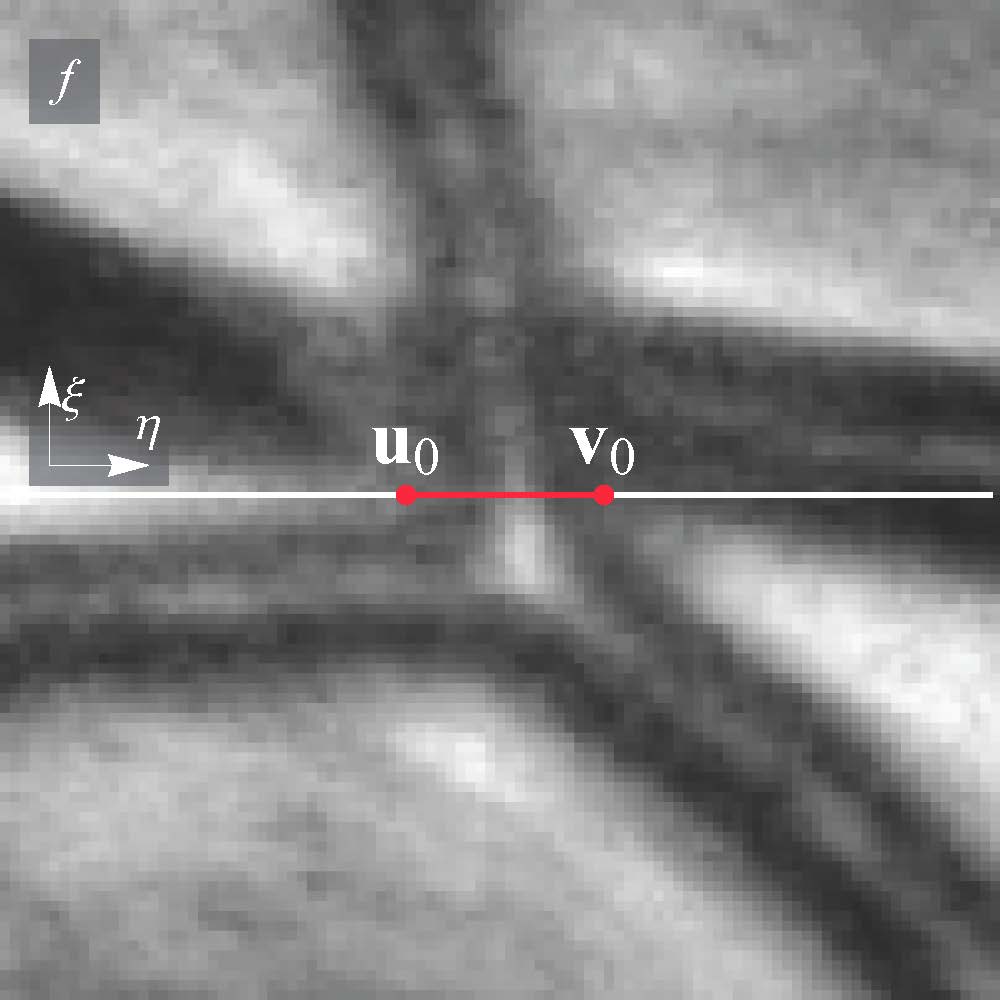}
                \caption{}
                \label{fig:edgesInit:c}
        \end{subfigure}
        \caption{Initial edge detection. (a) Based on local optima in the imaginary part of $I^\eta$, potential vessel patches are formed (shown as horizontal blocks). The main patch of interest is denoted in red and neighbouring patches of interest in non-transparent blue. (b) The edges of the patches of interest are tracked in scale (lower part of this figure) up to the scale $\sigma$ of the corresponding toppoints (yellow points) \cite{Florack2000,Johansen1994}. (c) The strongest edges at this scale are initialized to be the vessel edges $\mathbf{u}_0$ and $\mathbf{v}_0$.}
        \label{fig:edgesInit}
\end{figure*}

\subsubsection{Vessel likelihood map and seed point detection}
\label{sec:vessellikelihood}
For the detection of the initial seed points, a vessel likelihood map $V:\mathbb{R}^2 \mapsto \mathbb{R}$ of the optic disk region is constructed using invertible orientation scores. Before images are subjected to our algorithms, they have their DC-component removed by means of high-pass filtering. As a result vessel pixels have negative values and background pixel values are assumed to be zero. In effect the real part of the orientation score at a certain vessel position and orientation is negative, the orientation score is approximately zero at background area's. This is also demonstrated in Fig.~\ref{fig:ETOS3D:b}, were the vessel in the score can be seen as a dark (negative value) blob on the plane. Based on these observations, we define a vessel likelihood map $V(\mathbf{x})$ as:
\begin{equation}
V(\mathbf{x}) = \underset{\theta\in[0,2\pi]}{\operatorname{max}} \operatorname{Re}( \; -U_f(\mathbf{x},\theta) \;).
\end{equation}
Seed points are detected as local maxima on circular intensity profiles centered around the optic disk (Fig.~\ref{fig:SPsInit:a}) with radii $R = \{ R_{OD} , 1.5R_{OD} \}$, where $R_{OD}$ is the detected optic disk radius. A seed point is discarded whenever its value in the vessel likelihood map $V$ is smaller then the average value of all points on the circle. For each remaining seed point $\mathbf{c}_0$, the initial orientation $\theta_0$ is detected as the orientation that provides the highest modulus of the orientation scores: $\theta_0= \underset{\theta\in[0,2\pi]}{\operatorname{argmax}} |U_f(\mathbf{c}_0,\theta)| $. An additional filtering step, in which the seed points are classified as either true or false positive, is described in Section~\ref{sec:edgesInit}.

\subsubsection{Initial edge detection}
\label{sec:edgesInit}
The ETOS algorithm is initialized with a starting vessel center point $\mathbf{c}_0$, orientation $\theta_0$ and edges $\mathbf{u}_0$ and $\mathbf{v}_0$. Starting with an already detected initial center point $\mathbf{c}_0$ and orientation $\theta_0$, intensity profile $I^\eta$ can be obtained from the orientation scores using Eq.~(\ref{eq:ieta}). Candidate edges are detected as local optima on the imaginary part of $I^\eta$. Beside the main vessel edges that we are interested in, it is very likely that multiple other candidate edges are detected as well (as a result of noise or a central light reflex). Therefore, we use an edge focussing approach to detect the dominant edges. Each combination of neighboring left and right edges will form potential vessel patches (see Fig.~\ref{fig:edgesInit:a}). Note that a blood vessel with a central light reflex consists of two neighboring vessel patches. We start initial edge detection by detecting the main vessel patch of interest by scoring each edge pair ($\mathbf{u}$,$\mathbf{v}$), based on the initial center point estimate $\mathbf{c}_0$ and initial orientation $\theta_0$, as follows:
\begin{equation}
\label{eq:pairScoring}
s_{\mathbf{c}_0,\theta_0}(\mathbf{u},\mathbf{v}) = \nu_{\theta_0}(\mathbf{u},\mathbf{v}) \; e^{-\cfrac{1}{2}\cfrac{\|(\mathbf{u}+\mathbf{v})/2-\mathbf{c}_0\|^2}{(0.5\langle w \rangle_{av})^2}},
\end{equation}
with
\begin{equation}
\label{vesselnessvalue}
\nu_\theta(\mathbf{u},\mathbf{v}) = \cfrac{1}{\norm{\mathbf{u}-\mathbf{v}}}\int_0^1{\vert U_f(\mathbf{u} + t (\mathbf{v} - \mathbf{u}), \theta) \vert dt}.
\end{equation}
The function $\nu_\theta(\mathbf{u},\mathbf{v})$ provides the so called \emph{vessel value} and basically is the average value of the modulus of the orientation score at orientation $\theta$, calculated from the left to the right edge. This value is high for elongated/vessel structures, and low for background structures in the score. The exponential in Eq.~(\ref{eq:pairScoring}) penalizes the distance of the edge pair to the initialized center point. The edge pair with the highest score will be our main pair of interest.

Neighboring pairs are considered only if the distance between the nearest edges of the main and neighboring pair is smaller than the width of the main patch, Fig.~\ref{fig:edgesInit:a} shows typical results. All edges from the main patch and its neighbors are now traced (in scale) in the Gaussian scale space of profile $I^\eta$. The edges are traced up to the scale of the first appearing toppoint. The strongest edges at this scale are chosen as the true vessel edge points $\mathbf{u}_0$ and $\mathbf{v}_0$ (Fig.~\ref{fig:edgesInit:c}).

An initialized seed point whose vessel value is lower than a threshold $T_{\nu}$ is regarded as a false positive (Fig.~\ref{fig:SPsInit:b}). The threshold $T_{\nu}$ is defined as:
\begin{equation}
\begin{array}{l}
\label{eq:betanu}
T_{\nu} = 0.5 \langle \nu \rangle_{av},\\
\langle \nu \rangle_{av} = \cfrac{1}{N_{sp}} \sum_{i=1}^{N_{sp}} \nu_{\theta_0^i}(\mathbf{u}_0^i,\mathbf{v}_0^i),
\end{array}
\end{equation}
where using all $N_{sp}$ initialized seed points, each with initial edge points ($\mathbf{u}_0^i$, $\mathbf{v}_0^i$) and orientation ($\theta_0^i$), an average vessel value $\langle \nu \rangle_{av}$ is calculated. All true positive seed points are submitted to the ETOS algorithm to start expanding a model of the retinal vasculature.

\subsubsection{Stopping criteria}
\label{sec:stopCriteria}
For ETOS to stop tracking a single vessel, three stopping-criteria are defined:
\begin{enumerate}
  \item Tracking stops whenever a vessel being tracked leaves a prescribed region of interest. For this purpose a mask image is generated that covers the (typically circular) field of view in the fundus image.
  \item Tracking stops whenever a blood vessel is already tracked. Each tracked segment is used to construct a pixel map, in which each pixel within the vessel edge contours is set to 1 and outside to 0. Whenever a point within the vessel being tracked lies within the pixel map for $\lceil4 \langle w \rangle_{av}/\lambda\rceil$ iterations in a row, the ETOS algorithm terminates. Recall $\langle w \rangle_{av}$ defined in Eq.~(\ref{eq:widthav}) and $\lambda$ being the step size (Fig.~\ref{fig:schematicTracking}).
  \item Tracking stops whenever the vessel value $\nu_\theta$ (Eq.~(\ref{vesselnessvalue})) drops below threshold value $T_{\nu}$. Here we assumed that $T_{\nu}$, which is based on the average vessel value $\langle \nu \rangle_{av}$ calculated in the optic disk region, is a good indicator of the vessel values of the vessels elsewhere in the retina.
\end{enumerate}

\subsubsection{Junction detection, classification and numbering}
\label{sec:junctiondetection}
In order to model the complete vasculature, starting by expanding a model from a set of initial seed points, the vasculature tracking algorithm should also be able to automatically detect junctions. Junction points are points where blood vessels bifurcate/branch or where two blood vessels cross. Either way, at a junction point multiple orientations may be observed. At a point on a straight vessel the modulus of the directional orientation column ideally contains two local maxima; one in the positive direction, say $\theta_+$, and one (with a $\pi$ phase difference) in the negative direction, say $\theta_-$. A candidate vessel junction point at the left edge is detected whenever there exists another local maximum in between $\theta_+$ and $\theta_-$. Similarly, a candidate junction point is detected at the right vessel edge whenever another local maximum is present between $\theta_-$ and $\theta_+$.

During vessel tracking, the orientation columns at the left and right edge are scanned for the presence of a junction point, and the location and orientation are stored (top image Fig.~\ref{fig:jd}). The detected junction points are clustered on position by grouping all points whose distance to one another is smaller than $\langle w \rangle_{av}$. Within each cluster, the candidate junction points are clustered on orientation to prevent merging two proximate junction points. Clustering on orientation is done according to the number of local maxima in the histogram of orientations. Finally, clusters are merged to a single junction point by averaging the positions and by taking the most common orientation within the cluster. The found center points and orientations are then subjected to the edge initialization method described in Section~\ref{sec:edgesInit}, and are discarded whenever their vessel value is lower than $T_{\nu}$.

To classify between junctions and bifurcations we check for alignment of local orientations in SE(2) via the sub-Riemanian metric \cite{Citti,Boscain2}, see Eq.~(\ref{geod}) in Appendix~\ref{app:optimalpaths}.

Each detected seed point is assigned two ID numbers, a number that is unique for each vessel segment and the ID number of the vessel from which it originates. This way the relation between each vessel segment remains known, and the vessel segments can be organized in a hierarchical fashion. Vasculature tracking terminates whenever all detected bifurcations and crossings are evaluated.

\begin{figure}[!t]
\begin{center}$
\begin{array}{c}
\includegraphics[width=0.52\textwidthtwo]{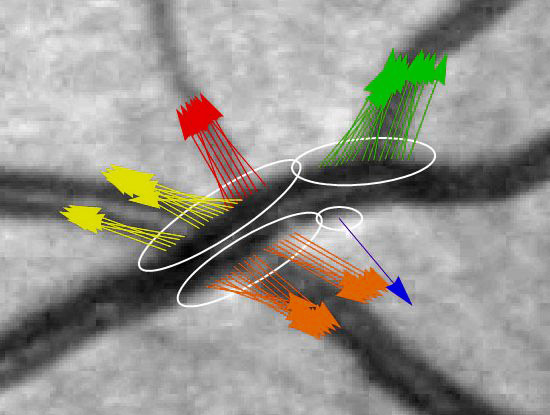}\;\;
\includegraphics[width=0.52\textwidthtwo]{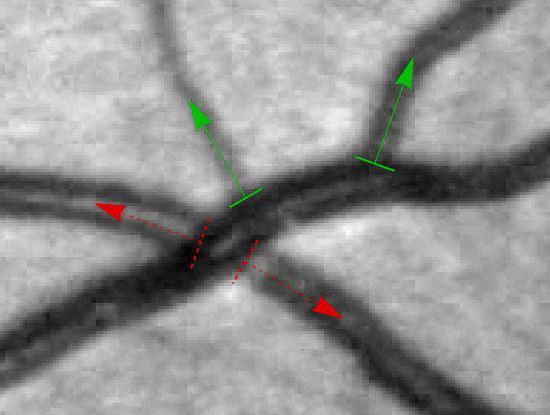}
\end{array}$
\end{center}
\caption{Junction detection. The top image: A tracked vessel segmented in white with candidate junction points indicated by colored arrows. The junction points are clustered on position (indicated by white ellipsoids) after which they are clustered on orientation (indicated by different colors). Bottom image: Each cluster is merged to a single junction point and based on proximity to other junction points, orientation and width, junction points are classified as bifurcation (green arrows) or crossing (red dashed arrows).}
\label{fig:jd}
\end{figure}

\subsubsection{Junction resolving}
\label{sec:junctionResolving}
As mentioned in Section~\ref{sec:stopCriteria} vessel tracking is terminated whenever the algorithm is tracking a vessel that is already tracked. This criterion can be met at several situations, where for each situation appropriate actions need to be taken in order to maintain correct topological models of the vasculature. A detected point that suggests inappropriate modeling of the vasculature will be called an \emph{unresolved junction point}. The detection of an unresolved junction point, together with the corresponding actions that are necessary to solve it will be called \emph{junction resolving}. The appropriate actions necessary for junction resolving are based on the position of the junction point on the already established track, whether or not the two overlapping segments have the same with and in the case the

The junction resolving steps are described using the following labeling: $S_{old}$ is the tracked segment of the established track before the junction point (Source), $T_{old}$ is the tracked segment of the same track after the junction point (Target), $S_{new}$ is the segment of the new track before the junction point, and $T_{new}$ is the segment of the new track after the junction point. The appropriate actions necessary for junction resolving are based on
\begin{enumerate}
  \item the position of the junction point on the already established track ($S_{new}$-$T_{new}$),
  \item whether or not the two overlapping segments ($T_{new}$ and $T_{old}$) have the same width,
  \item and the similarity between the source and target tracks.
\end{enumerate}
The complete junction resolving scheme and all detail can be found in \cite{Bekkers2012}.

\begin{figure*}[!ht]
        \centering
        \begin{subfigure}[b]{0.327\textwidth}
                \centering
                \includegraphics[width=\textwidth]{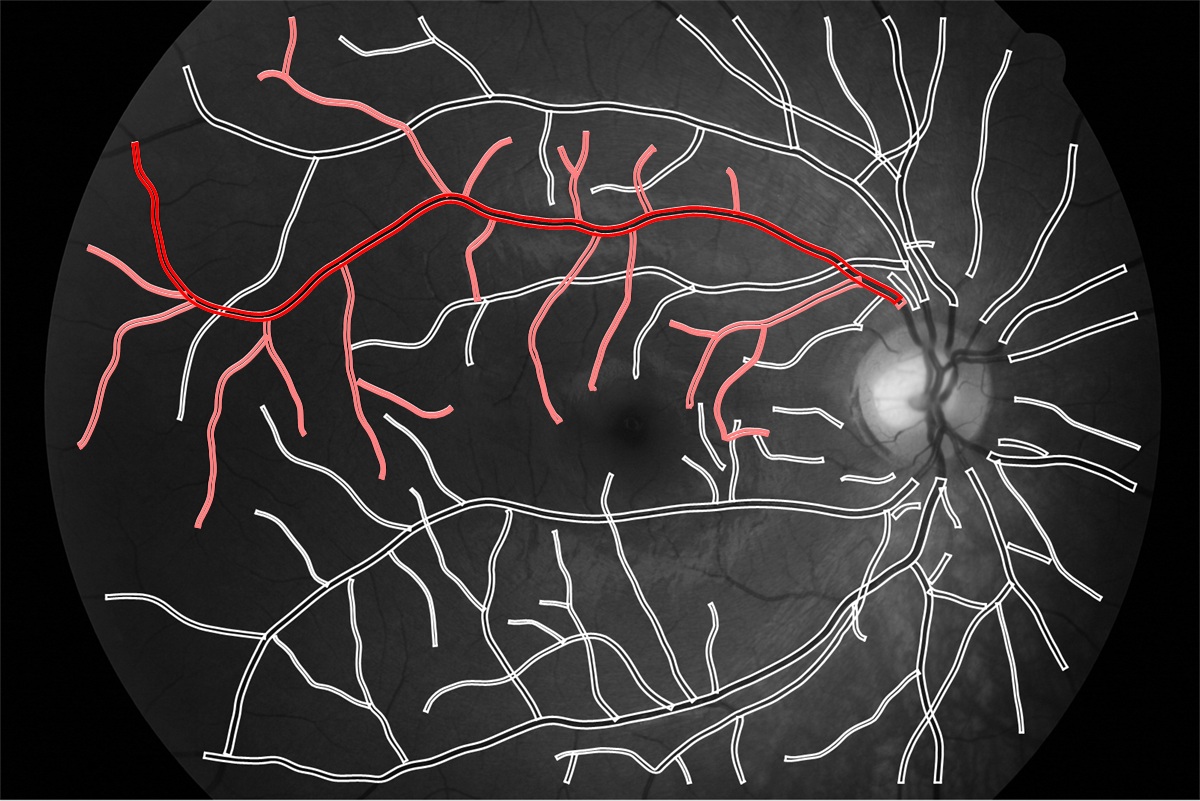}
                \caption{Vessel tree extraction}
                \label{fig:ModelPotential:a}
        \end{subfigure}
        \begin{subfigure}[b]{.327\textwidth}
                \centering
                \includegraphics[width=\textwidth]{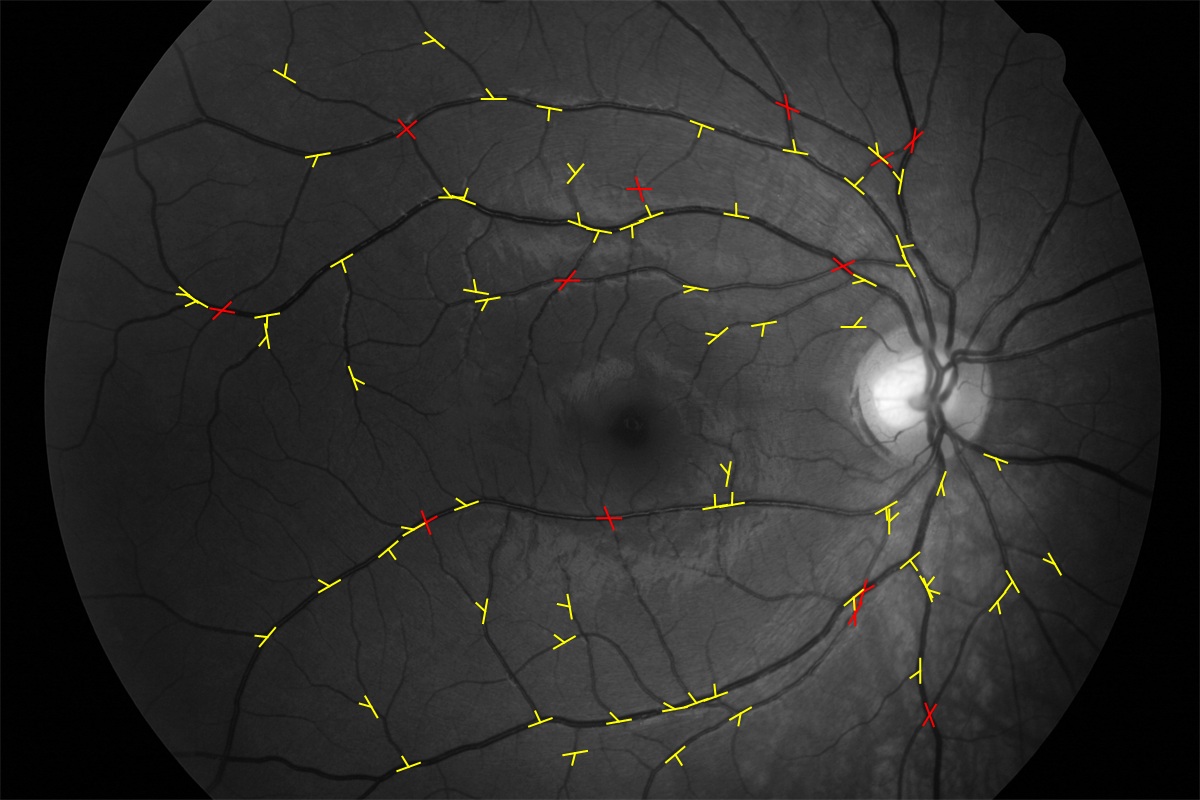}
                \caption{Junction points}
                \label{fig:ModelPotential:b}
        \end{subfigure}
        \begin{subfigure}[b]{.327\textwidth}
                \centering
                \includegraphics[width=\textwidth]{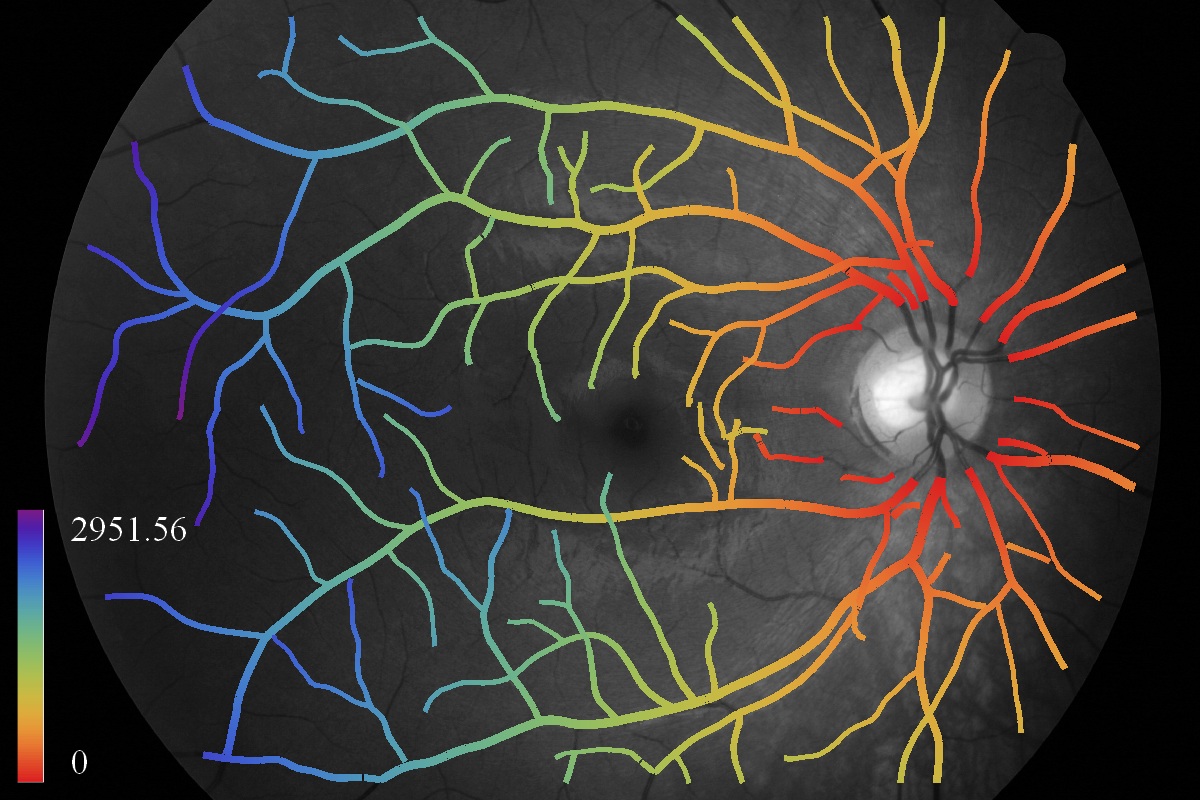}
                \caption{Distance to optic disk}
                \label{fig:ModelPotential:c}
        \end{subfigure}
        \caption{(a):The hierarchical structure of the generated vasculature models allow the segmentation and analysis of complete branches. (b) The automatic extraction of branching (yellow) and crossing points (red). (c) The distance to the optic disk; a feature that can easily be extracted because of the guaranteed connectedness of vessel segments in the generated vasculature models.}
        \label{fig:ModelPotential}
\end{figure*}

\subsection{Validation}
\label{sec:vasculatureResults}
In Section~\ref{sec:ValidationOfWidthMeasurements} we demonstrated the reliability of the width measurements provided by the ETOS algorithm. In the following section we validate the topological correctness of the complete vasculature models that are generated by our algorithm. The correctness of the models is validated by analyzing the junction points. The results discussed in this section are generated with the same parameters for the ETOS algorithm that are described in Section~ \ref{sec:AlgorithmBehaviorAtComplexVesselJunctionPoints}. A typical model generated by the vasculature tracking algorithm with these parameters is shown in Fig.~\ref{fig:ModelPotential}.

\begin{table*}[!ht]
  \centering
  \caption{Validation of detected bifurcations}
    \begin{tabular}{r|rlrlrlr|rlrlr}
    \toprule
    \multicolumn{1}{c|}{} & \multicolumn{7}{c|}{\textbf{Bifurcations}} & \multicolumn{5}{c}{\textbf{Crossings}}\\
    \multicolumn{1}{r|}{Disease} & \multicolumn{2}{c}{Correct} & \multicolumn{2}{c}{Crossing (E1)} & \multicolumn{2}{c}{False vessel (E2)} & \multicolumn{1}{r|}{Total} & \multicolumn{2}{c}{Correct} & \multicolumn{2}{c}{Incorrect} & Total\\
    \midrule
    Healthy         & 90 &(81.82\%) & 16 &(14.55\%) & 4 &(3.64\%)   & 110 & 41 & (100.00\%) & 0 & (0.00\%) & 41\\
    Diabetes        & 111 &(73.03\%)& 25 &(16.45\%) & 16 &(10.53\%) & 152 & 37 & (88.10\%)  & 5 & (11.90\%)& 42\\
    Glaucoma        & 89 &(74.79\%) & 30 &(25.21\%) & 0 &(0.00\%)   & 119 & 31 & (100.00\%) & 0 & (0.00\%) & 31\\
    \textbf{All}    & \textbf{290} & \textbf{(76.12\%)} & \textbf{71} & \textbf{(18.64\%)} & \textbf{20} & \textbf{(5.25\%)} & \textbf{381} & \textbf{109} & \textbf{(95.61\%)} & \textbf{5} & \textbf{(4.39\%)}& \textbf{114}\\
    \bottomrule
    \end{tabular}%
  \label{tab:junctions}%
\end{table*}

\begin{figure}[!h]
        \centering
        \begin{subfigure}[b]{.5\textwidthtwo}
                \centering
                \includegraphics[width=.5\textwidthtwo]{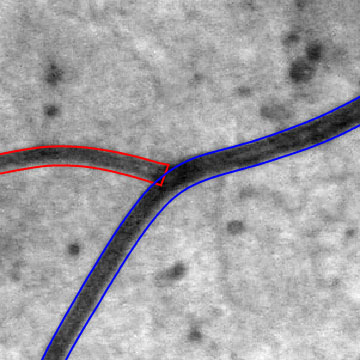}
        \end{subfigure}
        \begin{subfigure}[b]{.5\textwidthtwo}
                \centering
                \includegraphics[width=.5\textwidthtwo]{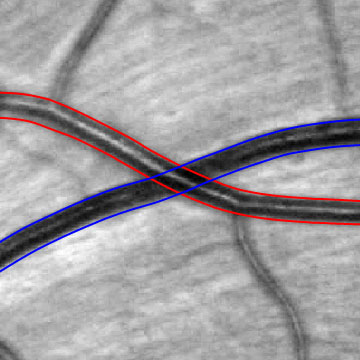}
        \end{subfigure}
        \caption{A typical bifurcation (left) and crossing (right), detected from a model generated by our vasculature tracking algorithm.}
        \label{fig:junctions}
\end{figure}

\subsubsection{Validation of topological correctness}
\label{sec:junctionsAndVasculature}
For each vessel it is known from which parent vessel it originates and bifurcations can thus be directly extracted from the model. Crossings can easily be extracted by detecting overlapping vessel segments. Fig.~\ref{fig:ModelPotential:b} provides an overview of detected junction points for one image of the HRFI-database, Fig.~\ref{fig:junctions} shows detailed views of a bifurcations and a crossing. We evaluated the junction points of the first three images from each of the three HRFI-datasets (healthy, diabetes and glaucoma). The 9 generated vasculature models provided 495 junction points, of which 381 were bifurcation points and 114 were crossing points. The following types of errors for bifurcations were identified: the bifurcation was actually part of a crossing (E1), the vessel originating from the bifurcation did not represent a blood vessel according to the ground truth pixel map provided by the HRFI-database (E2). E2 errors indicate the presence of incorrect single vessel models, non-vessel elongated structures such as the optic disk border or pathological features such as aneurysms. Crossings are extracted by searching the vasculature model for overlapping vessel segments. A false positive crossing can thus only occur if false positive vessel segments exists within the models. The results are summarized in Table~\ref{tab:junctions}.

In total 290 out of 381 bifurcations and 109 out of 114 crossings were correctly detected, corresponding to precision rates of 76.54\% and 96.03\% respectively. Most of the incorrectly identified bifurcations were correct in the sense that they represent a true blood vessel, however they were actually part of a crossing. Only 5.25\% of the bifurcations were incorrect in the sense that they did not represent a blood vessel. While the first kind of false positive bifurcations only affect the topological correctness of the model, the latter also pollutes the model with false positive vessel segments. The low percentage of E2 errors indicates that the generated vasculature models are very clean in the sense that almost all vessel segments actually represent true blood vessels.

Note that E1 errors are introduced by incorrect junction classification and that each miss-classified crossing introduces two E1 errors. Additional post-processing steps that try to solve for correct topology \cite{Al-Diri2010a} may be used to filter out these errors.

\section{Conclusion}
\label{conclusion}
In this paper we demonstrated that by representing image information in an invertible orientation score, one can exploit the disentanglement of crossing structures to track blood vessels through crossing points. We introduced a new algorithm that tracks vessel edges through the orientation score of an image (ETOS). The ETOS algorithm can generally be used with both invertible and non-invertible orientation scores, which were in this paper constructed with cake wavelets and Gabor wavelets respectively. We demonstrated that best results were obtained using invertible orientation scores. We also introduced a fast alternative method based on vessel centerline tracking through a multi-scale set of non-invertible orientation scores (CTOS). While the CTOS algorithm is very fast, the multi-scale approach makes the algorithm less stable at critical vessel points (crossings and parallel vessels) compared to ETOS.

The ETOS algorithm was used as a basis for our vasculature tracking algorithm, which we used to construct detailed hierarchical models of the retinal vasculature. Within this paper we validated the reliability of the width measurements provided by the models using ground truth data, and showed that our method performs excellent in comparison to other state of the art algorithms. Validation of the topology of the models showed that our method constructs clean topological models of the vasculare tree, i.e. they contain very few false
positive vessels.

Tracking within orientation scores relies on a novel and basic geometrical principle ($\mathcal{V}$-plane optimization) within a sub-Riemanian manifold within SE(2). In Appendix~\ref{app:optimalpaths} we relate probabilistic approaches to optimal curves in SE(2) to this geometric principle. Here we include supporting examples of analytic and numeric computations on completion fields on SE(2). Such completion fields are obtained from collision probabilities on SE(2), while the orientation score provide a whole distribution of these particles. The next step in future work is to apply tracking by geometric control \cite{Duits2013} in enhanced orientation scores, obtained by applying stochastic processes for \cite{Duits2010,DuitsR2006AMS} contour completion and enhancement directly on the score.

\appendix

\section{A mathematical underpinning of optimization in the $\mathbf{e}_\eta - \mathbf{e}_\theta$ tangent planes $\mathcal{V}$\label{ch:motivate}}
\label{app:optimalpaths}

Before we provide the key-geometrical principle behind our ETOS algorithm, we make some comments on the moving frame of reference that lives in the domain of an orientation score.

The domain of an orientation requires a curved (non-Euclidean) geometry as can be seen in Fig.~\ref{fig:OrientationScores}. To this end
we identify the coupled space of positions and orientations $\R^{2}\rtimes S^{1}$
with $SE(2)$. This needs a bit of explanation.
To see whether a local line element $(\ul{x}',\theta') \in \R^{2}\times S^{1}$ with position $\ul{x}'\in \R^{2}$ and orientation $\theta'$
is aligned with its neighbors one applies a rigid body motion $g=(\ul{x},\ul{R}_{\theta}) \in SE(2)$, with counter-clockwise rotation and translation via Eq.~(\ref{action}).
This puts a 1-to-1 correspondence between rigid body motions $(x,y,\ul{R}_{\theta})$ and elements from $\R^{2}\times S^{1}$
as every local line element $(x,y,\theta)$ can be obtained by a local line element $(0,0,0)$ positioned at the origin $\ul{0}=(0,0)$ pointing in say $x$-direction:
\[
(\ul{x},\theta) = (\ul{x},R_{\theta}) (\ul{0},0) \desda (\ul{x},\theta) \equiv (\ul{x},R_{\theta}).
\]
Via this identification we see that the curved domain of an orientation score is due to the non-commutative group structure of the rigid body motion (also known as roto-translation group or Euclidean motion group) $SE(2)$ in the plane. Indeed, a concatenation of two rigid body motions is again a rigid body motion and this induces the group-product given by Eq.~(\ref{product})

It is a \emph{semi-direct} product of rotations $SO(2)$ and translations $\R^{2}$, as the rotation part
affects the position part. As a result this group product is non-commutative, i.e. $g g'$ can be very different from  $g'g$.
Therefore it is conceptually wrong to consider the domain of an orientation scores as the flat Cartesian space $\R^{2} \times S^{1}$ with Euclidean metric and it is wrong to just take derivatives of orientation scores only in $\partial_{x}$ direction or only in $\partial_{y}$ direction. For formal underpinning of this statement see \cite[Thm.21]{Duits2005}, for a short illustration see \cite[Fig.2.6]{Franken2008}. Instead one must differentiate along the moving frame of reference $\{\partial_{\theta},\partial_{\xi}, \partial_{\eta}\}$ with
\[
\partial_{\theta}, \partial_{\xi}=\cos \theta \partial_{x} + \sin \theta \partial_{y}, \partial_{\eta}=
-\sin \theta \partial_{x} +\cos \theta\partial_{y}.
\]
This a priori moving frame of reference coincides with the so-called Lie algebra of left-invariant vector fields on the group $SE(2)$. The curved geometry in the orientation score requires us to apply parallel transport along this moving frame of reference. For instance in Fig.~1(d) we see examples of auto-parallel curves whose tangent vector is always constant to the moving frame of reference. The non-commutative structure of $SE(2)$ (due to the fact that rotations and translations do not commute) is reflected by the commutators in the Lie-algebra
\[
\begin{array}{l}
\, [\partial_{\theta},\partial_{\xi}]=\partial_{\eta}, \
\, [\partial_{\theta},\partial_{\eta}]= -\partial_{\xi}, \
\, [\partial_{\xi},\partial_{\eta}]= 0, \\
\end{array}
\]
where $[A,B]=AB-BA$. As a result first moving in $\xi$ direction and then moving in $\theta$ direction brings us to a different point in $SE(2)\equiv \R^{2} \times S^{1}$ than first moving in $\theta$ direction and then moving in $\xi$ direction (as can be deduced in Fig.~1(b)).

\subsection{The geometrical principle behind the ETOS-algorithm}

The ETOS-algorithm presented in Section~\ref{sec:EdgeTrackingUsingInvertibleOrientationScores} heavily relies on local optimization in each transversal 2D-tangent plane
$
\mathcal{V}=\textrm{span}\{\partial_{\theta}, \partial_{\eta}\}$
spanned by $\partial_{\theta}$ and $\partial_{\eta}=-\sin \theta \partial_{x} + \cos \theta \partial_{y}$ in tangent-bundle
\[
T(SE(2))=\bigcup \limits_{g \in SE(2)} T_{g}(SE(2)),
\]
where
$T_{g}(SE(2))$ denotes the tangent space at $g=(x,y,\theta) \in SE(2)$.

To this end we recall Fig.~\ref{fig:ETOS3D} where one can observe that
each tangent vector to the lifted curve $s \mapsto (x(s),y(s),\theta(s))$ with $\theta(s)=\arg(\dot{x}(s)+i \dot{y}(s))$ (e.g. the curve following the right edge of a blood vessel) in $SE(2)$ is pointing orthogonal to a plane $\mathcal{V}$ (plotted in yellow). Within $\mathcal{V}$ we see that the maximum values of the absolute value of the imaginary part of the score is located at the origin of the yellow plane in the tangent space.
\begin{definition}
A smooth curve $s \mapsto \gamma(s)=(x(s),y(s),\theta(s))$ in $SE(2)$ is called the lifted curve of a smooth planar curve iff for all $s \in [0,\ell]$ we have $\theta(s)=\arg (\dot{x}(s) +i \, \dot{y}(s))$.
\end{definition}
\begin{definition}
If a curve $\gamma$ is equal to the lift of its spatial projection (i.e. if its satisfies Eq.~(\ref{horizontal}))
it is called horizontal.
\end{definition}
Tangent vectors to horizontal curves always lay
in the tangent plane
$H=\textrm{span}\{\partial_{\theta},
\partial_{\xi}:= \cos \theta \partial_{x}+\sin \theta \partial_{y}\}$,
spanned by $\partial_{\xi}$ and $\partial_{\theta}$, since one has
\begin{equation} \label{horizontal}
\begin{array}{l}
\theta(s)=\arg (\dot{x}(s) +i \dot{y}(s)) \desda \\
\dot{\gamma}(s) \in \textrm{span}\{\left.\partial_{\xi}\right|_{\gamma(s)},\partial_{\theta}\},
\end{array}
\end{equation}
In this appendix we will underpin and discuss our fundamental venture point: The most salient curves
in the smooth imaginary part/real part/absolute value $C:SE(2) \to \R$ of the smooth orientation score $U:SE(2)\to \mathbb{C}$ are given by
\[
\gamma \textrm{ is horizontal such that } \partial_{\theta}C(\gamma)=0 \textrm{ and } (\left.\partial_{\eta}\right|_{\gamma}C)(\gamma)=0
\]
The intuitive idea behind this is as follows.
Let $C:SE(2)\mapsto \mathbb{R}^+$ denote an a priori given cost. Now,
if a horizontal curve $s \mapsto \gamma(s):=(\ul{x}(s), \theta(s))$
satisfies
\begin{equation} \label{theeqs}
\begin{split}
\partial_{\eta} C(\ul{x}(s),\theta(s))&= (-\sin \theta(s) \partial_{x}C + \cos \theta \partial_{y} C)( \ul{x}(s),\theta(s))\\
& = \partial_{\theta}C(\ul{x}(s),\theta(s))=0
\end{split}
\end{equation}
with $\partial_{\eta}^{2} C(\ul{x}(s),\theta(s)) <0 $ and $\partial_{\theta}^{2} C(\ul{x}(s),\theta(s)) <0 $, then there is no gain in moving\footnote{In such a way that the perturbed curve is again horizontal.} the curve $s \mapsto \gamma(s)=(\ul{x}(s),\theta(s))$ in directions orthogonal to the spatial propagation vector
$\partial_{\xi} |_{\gamma(s)}=\cos \theta(s) \partial_{x} + \sin \theta(s) \partial_{y}$.

In fact, we expect the curve $\gamma(s)=(\ul{x}(s),\theta(s))$ satisfying (\ref{theeqs})
to be optimal in some sense within the sub-Riemannian manifold
\[
M=(SE(2), \textrm{span}\{\partial_{\xi},\partial_{\theta}\}, G_{\beta})
\]
with metric tensor $G_{\beta}:SE(2) \times T(SE(2)) \times T(SE(2)) \to \R$ is given by
\begin{equation} \label{Gxi}
\begin{array}{ll}
\left.G_{\beta}\right|_{(x,y,\theta)} &= {\rm d}\theta \otimes {\rm d}\theta  + \\
 &\beta^{2}(\cos \theta {\rm d
}x \!+\!\sin \theta {\rm d}y) \otimes (\cos \theta {\rm d
}x \!+\!\sin \theta {\rm d}y),
\end{array}
\end{equation}
where $\otimes$ denotes the tensor product. Note that
\[
\begin{array}{l}
(\cos \theta {\rm d
}x +\sin \theta {\rm d}y) (\dot{\gamma}(s))=
\dot{x}(s) \cos \theta(s) +\dot{y}(s)\sin \theta(s)  \Rightarrow \\
G_{\beta}(\dot{\gamma}(s),\dot{\gamma}(s))=
\beta^{2}|\cos \theta(s)\dot{x}(s) +\sin \theta(s) \dot{y}(s)|^2 +|\dot{\theta}(s)|^{2}\ .
\end{array}
\]
If $s$ equals arclength (i.e. we have $\|\dot{\ul{x}}(s)\|^{2}=1$) of the spatial part $s \mapsto \ul{x}(s)=(x(s),y(s))$ of the horizontal curve $s \mapsto \gamma(s)=(\ul{x}(s),\theta(s))$ we have $\kappa^{2}(s)=|\dot{\theta}(s)|^2$, so that
\begin{equation} \label{needed}
\|\dot{\gamma}(s)\|=
\sqrt{G_{\beta}(\dot{\gamma}(s),\dot{\gamma}(s))}= \sqrt{\kappa^{2}(s) + \beta^{2}}.
\end{equation}

Intuitively, such a sub-Riemannian manifold $M$ equals the group $SE(2)$ where one restricts oneself to horizontal curves
with a constant relative penalty for bending and stretching determined by $\beta>0$ which has physical dimension one over length.

For special cases of $C$ we can show that our geometrical principle indeed produces optimal curves in $SE(2)$, as we will show in the subsequent section.

\subsection{Application of the geometrical principle to completion fields}

In general the real part, imaginary part, or absolute value of an orientation score is a complicated function on $SE(2)$. Therefore
we will consider the case where $C: SE(2) \to \R^{+}$ is a so-called ``completion field'' \cite{Thornber2,Zweck,August,Thornber,August2003}.
This corresponds to collision probability densities of a source particle $g_{1} \in SE(2)$ and a sink particle $g_{2} \in SE(2)$.

There exist remarkable relations \cite{Duits2010a,DuitsR2006AMS} between optimal curves (i.e. curves minimizing an optimal control problem on $SE(2)$) and solutions of Eq.~(\ref{theeqs}) for special cases where $C$ denotes a so-called completion distribution (or ``completion field'').
Given two sources at the origin $g_1=(0,0,0)$ and at $g_{2}=(x_{1},y_{1},\theta_{1})$,
such completion fields are defined as products of resolvent Green's functions of stochastic processes for contour completion \cite{Mumford} and contour enhancement \cite{Duits2010,Citti} in $SE(2)$:
\begin{equation} \label{cost}
C(g):= \lambda^{2} \cdot R_{\lambda}(g_{1}^{-1}g) R_{\lambda,*}(g_{2}^{-1}g),
\end{equation}
where $R_{\lambda}(g)$ denotes the probability density of finding a random walker $g$ in the underlying stochastic process \cite{Duits2010,Mumford} given that it started at $g=(x,y,\theta)=(0,0,0)$ regardless its memoryless traveling time $T$ which is negatively exponentially distributed with expectation $E(T)=\lambda^{-1}$.
\begin{figure}
\includegraphics[width=\hsize]{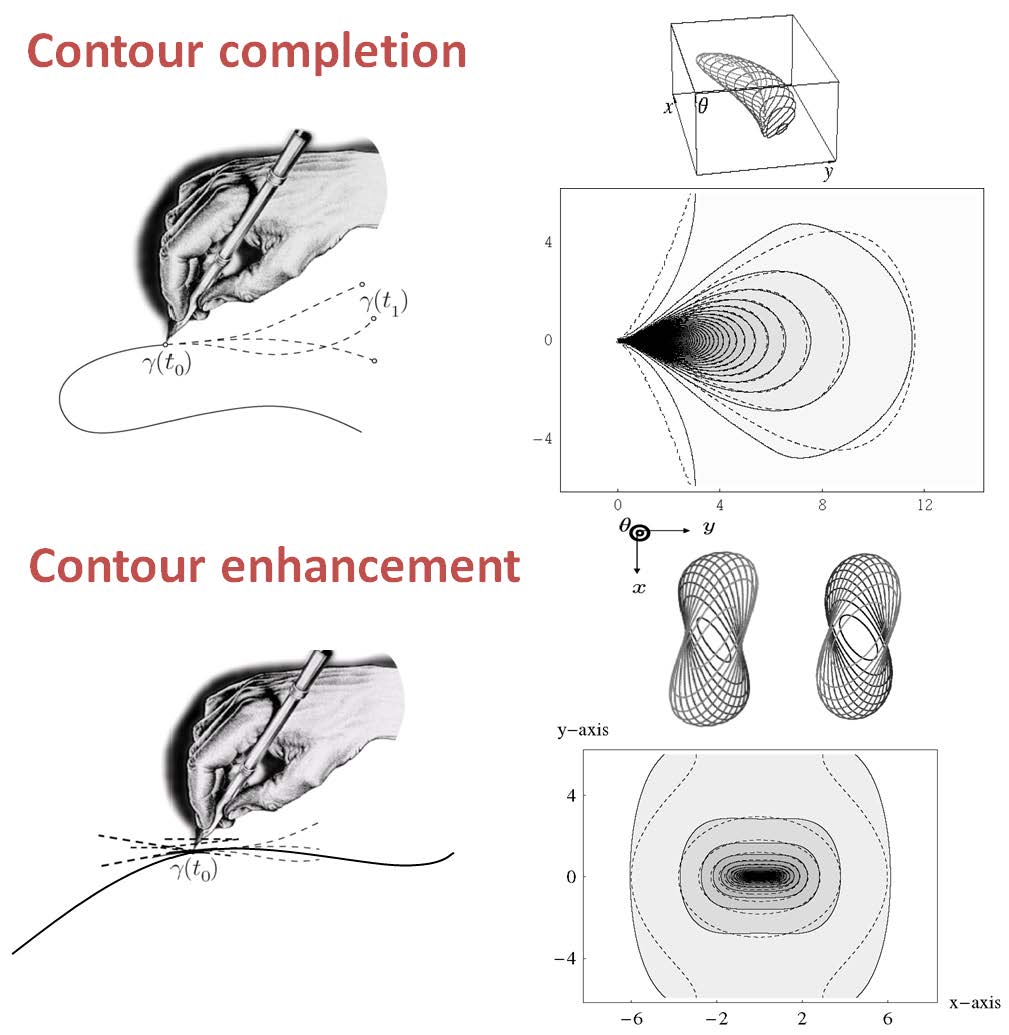}
\caption{Top: A contour completion process and the corresponding Green's function (in $SE(2)$) and its planar $\theta$-integrated version.
Bottom: A contour enhancement process the corresponding Green's function (in $SE(2)$) and its planar $\theta$-integrated version. In the completion process one has randomness in $\theta$ with variance $2D_{11}>0$ and non-random advection in $\xi$. In the enhancement process one has randomness both in $\theta$-direction (with variance $2D_{11}>0$) and in $\xi$-direction (with variance $2D_{22}>0$).
}\label{fig:contourEnhCom}
\end{figure}
Furthermore, $g \mapsto R_{\lambda,*}(g)$ denotes the adjoint resolvent kernel (i.e. the resolvent
that arises by taking the adjoint of the generator \cite[ch:4.4]{DuitsR2006AMS}.

For a concise overview on (Green's functions of ) contour completion and enhancement see \cite{DuitsFranken}.
Exact formulas for the resolvent Green's function $R_{\lambda}$ for contour enhancement can be found in \cite{Duits2010}, whereas exact formulae for resolvent kernels $R_{\lambda}$ for contour completion (direction process) can be found in \cite{DuitsR2006AMS}. In both cases there are representations
involving 4 Mathieu functions. For a visual impression of exact Green's functions see Fig.~\ref{fig:contourEnhCom} (where in dashed lines we have depicted level sets of the corresponding Heisenberg approximations that we will discuss and employ in the next subsection).

These completion fields relate to the well-known Brownian bridges (in probability theory) where the traveling time is integrated out. This relation is relevant, since it is known that such Brownian bridges concentrate on geodesics \cite{wittich}.

If $G_{t}:SE(2) \to \R^{+}$ denotes the time dependent Green's function of the Fokker-Planck equation of the underlying time dependent stochastic process \cite{DuitsFranken,Duits2010,Mumford} at time $t>0$ we have
\[
C(g)= \lambda^2 \int \limits_{0}^{\infty} \int \limits_{0}^{t} G_{t-s}(g_{1}^{-1}g) e^{-\lambda(t-s)} G_{s}(g_{2}^{-1}g)
e^{-\lambda s} \, {\rm d}s {\rm d}t\ .
\]
This identity follows from two facts. Firstly, the resolvent Green's function
follow from the time dependent Green's function
via Laplace transform with respect to time
\[
R_{\lambda}(g)= \lambda \mathcal{L}(t \mapsto G_{t}(g))(\lambda) = \lambda \int_0^\infty G_t(g) e^{-\lambda t}dt.
\]
Secondly, a temporal convolution relates to a product in the Laplace domain.
As Brownian bridge measures concentrate on geodesics when $\lambda \to 0$, cf.~\cite{wittich},\cite[App.B]{DuitsR2007ArXiV},
the completion field for contour enhancement concentrates on sub-Riemannian geodesics (shortest distance curves) within $M$.

Such a sub-Riemannian geodesic $\gamma=(\ul{x},\theta)$ connecting $g_{1}$ and $g_{2}$ (with $g_1$ and $g_{2}$ sufficiently close\footnote{This means $g_2^{-1}g_{1}$ is within the range of the exponential map of the sub-Riemannian control problem, i.e. $g_1$ and $g_{2}$ can be connected via a stationary curve without cusps \cite{CDC,Boscain2,Duits2013}.}) is the lifted curve of the minimizer to the following optimal control problem
\begin{equation} \label{geod}
\begin{array}{l}
\min \limits_{
\begin{array}{c}
\ul{x} \in C^{1}([0,\ell],\R^{2}), \\
\ell>0, \\
(\ul{x}(0),\dot{\ul{x}}(0))=g_1, \\
(\ul{x}(\ell),\dot{\ul{x}}(\ell))=g_2
\end{array}
} \int \limits_{0}^{\ell} \sqrt{\kappa^{2}(s)+\beta^{2}}\, {\rm d}s \\
=
\min \limits_{
\begin{array}{c}
\gamma=(\ul{x},\theta) \in C^{1}([0,\ell], SE(2)),\\
 \ell>0, \\
(\ul{x}(0),\dot{\ul{x}}(0))=g_1, \\
(\ul{x}(\ell),\dot{\ul{x}}(\ell))=g_2, \\
\theta(s)= \arg(\dot{x}(s)+i \dot{y}(s)) \\
\end{array}
} \int \limits_{0}^{\ell} \sqrt{G_{\beta}(\dot{\gamma}(s),\dot{\gamma}(s))}\, {\rm d}s
\end{array}
\end{equation}
with spatial arc-length parameter $s>0$ and
with free total length $\ell>0$ and with curvature $\kappa(s)=\|\ddot{x}(s)\|$ and where metric tensor $G_{\xi}$ is defined by Eq.~(\ref{Gxi}).

On the other hand, in his paper \cite{Mumford} Mumford showed that the modes of the direction process (also known as the contour completion process) coincide with elastica curves which are the solutions to the following optimal control problem
\begin{equation} \label{square}
\inf \limits_{
\begin{array}{c}
\ul{x} \in C^{1}[0,\ell], \ell>0 \\
(\ul{x}(0),\dot{\ul{x}}(0))=g_0, \\
(\ul{x}(l),\dot{\ul{x}}(\ell))=g_1
\end{array}
} \int \limits_{0}^{\ell} \kappa^{2}(s)+\beta^{2}\, {\rm d}s
\end{equation}
Similar to the Onsager-Machlup approach to optimal paths \cite{Takahashi} he obtains these modes by looking at the most probable/likely realization of discretized versions
of the direction process. 

As this cannot be (efficiently) realized in practice, one needs a more tangible description of the mode.
To this end, we will call solution curves of (\ref{theeqs}) the ``modes''. In case of the direction process and in case one uses the completion distribution (Eq.~(\ref{cost})) for the function $C(g)$, the solution curves of (\ref{theeqs}) indeed seem to numerically coincide with the elastica. This experimentally underpins our conjecture in \cite{DuitsR2006AMS}, where we have shown such a result does hold exactly for the corresponding Heisenberg approximations as we will explain next.

\subsection{In the Heisenberg approximation of completion fields our approach produces B-splines}

The Heisenberg group approximation (obtained by contraction \cite{Duits2010}) of the Green's functions and induced completion field arises by replacing the moving frame of left-invariant vector fields
\[
\{\cos \theta \partial x + \sin \theta \partial_y,-\sin \theta \partial_{x} +\cos \theta \partial_{y},\partial_{\theta}\} \]
on $SE(2)$ by the moving frame of reference of left-invariant vector fields on the Heisenberg group \[
\{\partial_{x}+ \theta \partial_{y},\partial_{y},\partial_{\theta}\}\]
and by replacing spatial arc-lengt parametrization  via $s$ by spatial coordinate $x$. Intuitively, such replacement boils down to replacing the space of positions and orientations by the space of positions and velocities.

When contracting (for details on this contraction see \cite[Ch.5.4]{Duits2010}) our fundamental equation (\ref{theeqs}) with cost $C$ given by Eq.~(\ref{cost}) towards the Heisenberg group $H_{3}$ we obtain
\begin{equation} \label{approx}
\frac{d}{dy} \tilde{C}(x,y,\theta) = \frac{d}{d\theta} \tilde{C}(x,y,\theta)  =0,
\end{equation}
where again in the Heisenberg group each tangent plane $\{\partial_{\theta},\partial_{y}\}$ is orthogonal to the propagation direction $\partial_{x} + \theta \partial_{y}$ and where the completion field is the product of two resolvent Green's functions \[
\tilde{C}(x,y,\theta)= R_{\lambda}(x,y,\theta) R_{\lambda}(-x+x_{1},y-y_{1}-\theta_{1}(x-x_{1}),-\theta)
\]
which can be derived in exact form as
\[
R_{\lambda}(x,y,\theta)= \frac{\lambda\sqrt{3}}{2 D_{11} \pi x^2} e^{-\lambda x}e^{-\frac{3(x\theta-2y)^2 +x^2\theta^2 }{4x^3 D_{11}}} {\rm u}(x),
\]
where $D_{11}>0$ stands for the amount of diffusion in $\theta$-direction and ${\rm u}$ for the unit step function,
for details\footnote{Set $\kappa_0=\kappa_{1}=0$ in \cite{DuitsR2006AMS}.} see (cf.~\cite[Thm~4.6]{DuitsR2006AMS}).

Interestingly, the solution of (\ref{approx}) is a third order polynomial $y(x)$ naturally lifted to
(the corresponding sub-Riemannian manifold within) $H(3)$ by setting $\theta(x)=y'(x)$. The solutions of Eq.~(\ref{approx}) are therefore
cubic $B$-splines which are the solutions of the Euler-Lagrange equation \[
y^{(4)}(x)=\theta^{(3)}(x)=0\]
of a curve optimization problem which arises by contracting (\ref{square}) towards the Heisenberg group. This gives the following ``Heisenberg group equivalent'' of control problem (\ref{square}):
\[
\begin{array}{l}
\min \limits_{{\small
\begin{array}{c}
\gamma(\cdot)=(\cdot, y(\cdot),\theta(\cdot)) \in C^{1}(H_{3}),   \\
\gamma(0)=(0,0,0), \\
\gamma(x_{2})=(x_2, y_{2},\theta_{2}), \\
\theta(x)=y'(x),
\end{array}
}} \int \limits_{0}^{x_{2}} \beta^2+|\theta'(x)|^{2} \, {\rm d}x \\[8pt]
= \beta^{2} x_{2}+ \frac{4(3y^2_{2}+3x_{2}y_{2}\theta_{2}+ x_{2}^{2}\theta_{2}^{2})}{x^{3}_2},
\end{array}
\]
which takes the minimum along a lifted cubic-B spline \\
$(x,y(x),y'(x))$ (with $y(x)$ a third order
polynomial matching the boundary conditions). For details
see
\cite[Eq.~(4.\!6.\!2)]{DuitsR2006AMS} and \cite[ch:9.1.1]{DuitsR2007ArXiV}. See Fig.~\ref{fig:approx}.
\begin{figure}
\centerline{
\includegraphics[width=0.6\hsize]{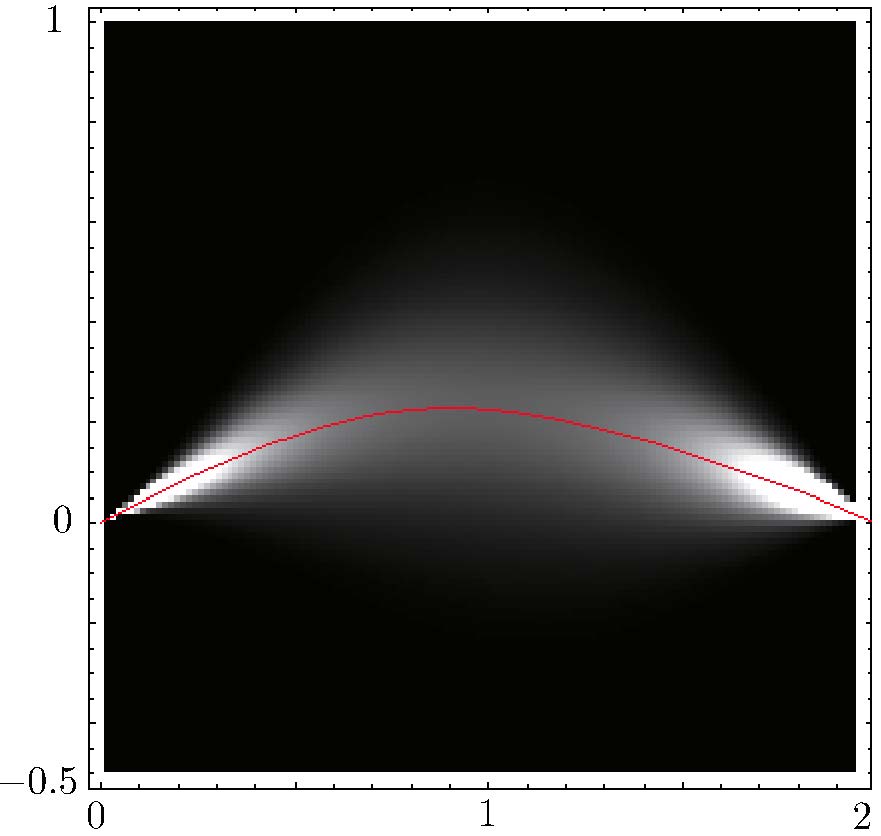}
}
\caption{The intersection of the planes $\{(x,y,\theta) \in \R^{3} \; |\; \partial_{\theta}\tilde{C}(x,y,\theta)=0\}$  and $\{(x,y,\theta) \in \R^{3} \; |\; \partial_{y}\tilde{C}(x,y,\theta)=0\}$ of the Heisenberg approximation
$\tilde{C}(g)$ of the completion field $C(g)$ given by Eq.~(\ref{cost}), produces a cubic B-spline lifted in $H(3)$, i.e.
$(x,y(x),\theta(x)=y'(x))$ with $y^{(4)}(x)=0$. Boundary conditions have been set to
$x_{1}=y_{1}=0$, $\theta_1=0.4$, $x_{2}=2$, $y_{2}=0$, $\theta_{2}=-0.4$, $D_{11}=1/8$.
}\label{fig:approx}
\end{figure}

\begin{figure}
\centerline{
\includegraphics[width=1.1\hsize]{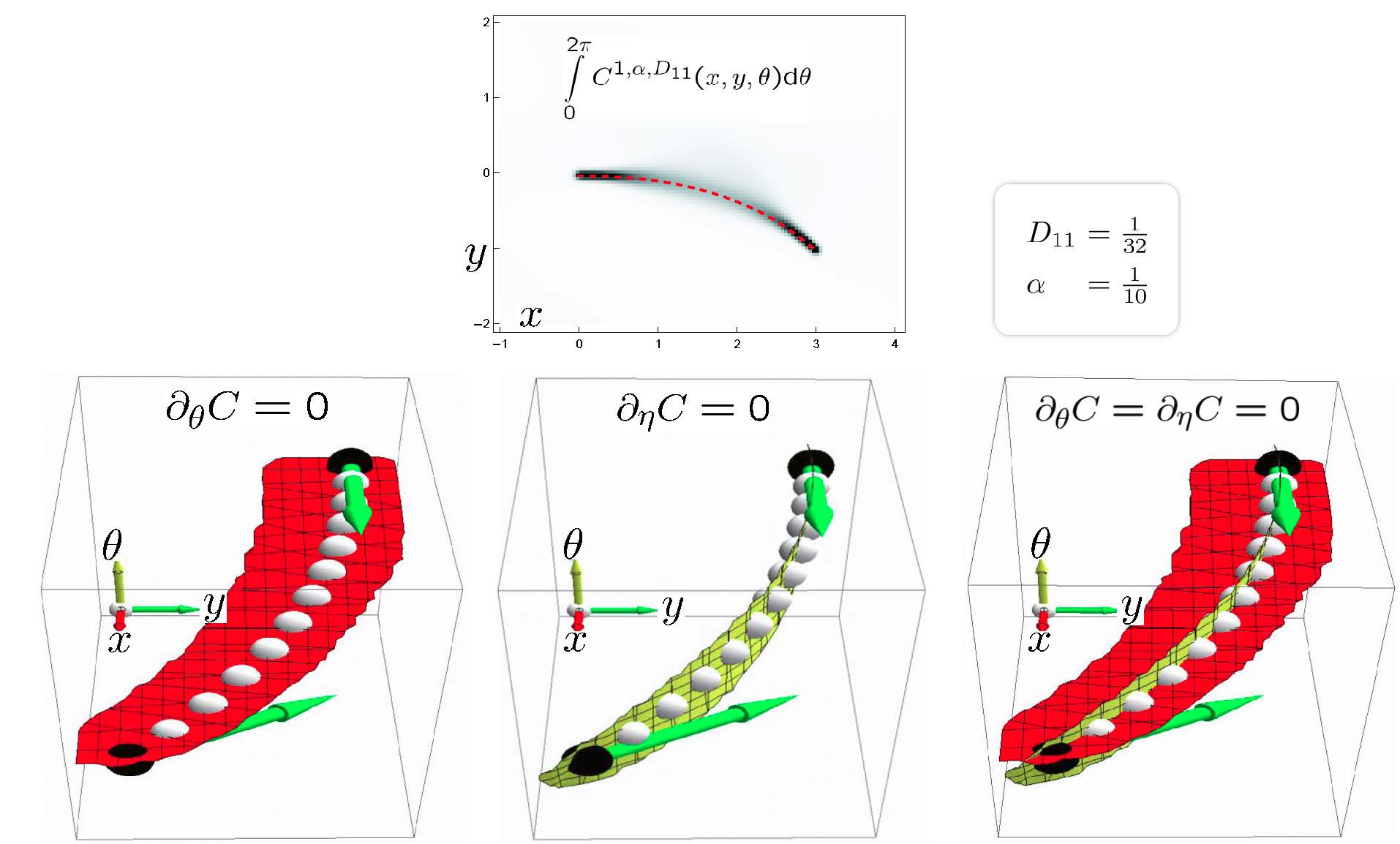}
}
\caption{The intersection of the planes $\{g \in SE(2) \; |\; \partial_{\theta}C(g)=0\}$ (depicted in red) and $\{g \in SE(2)\; |\; \partial_{\eta}C(g)=0 \}$ (depicted in yellow) of the exact completion field $g=(x,y,\theta) \mapsto C(g)$ given by Eq.~(\ref{cost}), with
$x_{1}=y_{1}=\theta_{1}=0$ and
$x_{2}=3$, $y_{2}=-1$, $\theta_{2}=7/4 \pi$ $\lambda=0.1$, $D_{11}=1/32$.
The intersection of these planes seems to coincides with an elastica (with $\beta^2=4\lambda D_{11}$), which
we plotted in dashed red in the top figure and by white balls in the bottom figures }\label{fig:conj}
\end{figure}

\subsection{Concluding remarks}

By the results of the previous section the conjecture rises whether elastica curves coincide with
Eq.~(\ref{cost}) as such relation holds for their counterparts in the Heisenberg group.
Numeric computations seem to provide a confirmation of this conjecture, see Fig.~\ref{fig:conj}.

On the one hand, we expect with respect to the contour enhancement process that our approach produces the sub-Riemannian geodesics (based on the results in \cite{wittich,DuitsR2007ArXiV,Duits2010a}), but this is a point for future investigation.
On the other hand, the conjecture together with the result in \cite{Mumford} (and the result that B-splines solve Eq.~(\ref{approx}))
would underpin our alternative \emph{applicable definition} of modes as solution curves of Eq.~(\ref{theeqs}).

In fact this means that optimization in each $(\eta,\theta)$-plane $\mathcal{V}$, as is done in our ETOS algorithm, produces the most probable curves (in the sense of  \cite{Mumford,Takahashi}) in direction processes.

\section{The closure of the distributional orientation score transform is an $\mathbb{L}_2$-isometry on the whole $\mathbb{L}_2(\mathbb{R}^2)$ space.}
\label{app:distributionalostrafo}

As mentioned in Section \ref{sec:ConstructionOfOrientationScores}, the condition given by Eq.~(\ref{eq:conditionmpsi}) violates the condition $\psi \in \mathbb{L}_1(\mathbb{R}^2) \bigcap \mathbb{L}_2(\mathbb{R}^2)$, as for such $\psi$, the function $M_\psi:\mathbb{R}^2 \rightarrow \mathbb{R}^+$ is a continuous function vanishing at infinity. In fact this follows by \cite[Thm.~7.5]{Rudin1973}, Fubini's theorem and compactness of SO(2).

To avoid technicalities one can restrict oneself a priori to bandlimited/disklimited images $f$, but this imposes a depending on sampling rate and induced Nyquist frequency $\varrho$. However, as we will show in this section such a restriction to band limited images is not crucial.

Akin to the unitary Fourier transform $\mathcal{F}:\mathbb{L}_2(\mathbb{R}^2) \rightarrow \mathbb{L}_2(\mathbb{R}^2)$, whose kernel $k(\www,\mathbf{x}) = e^{-i \www \cdot \mathbf{x}}$ is not square integrable, we can allow non-square integrable kernels and rely on Gelfand-Triples \cite{Wloka1987l} as we will explain next.

To this end we first drop the constraint $\psi \in \mathbb{L}_1(\mathbb{R}^d) \bigcap \mathbb{L}_2(\mathbb{R}^d)$ by imposing $\psi$ to be in a dual Sobolev-space:
\begin{equation}
\psi \in \mathbb{H}_{-I}(\mathbb{R}^2) = \mathbb{H}_I^*(\mathbb{R}^2) = (\mathcal{D}(D_\beta))^*,
\end{equation}
where
\begin{equation}
\mathbb{H}_I(\mathbb{R}^2)
= \mathcal{D}(D_\beta)
= \left\{ f \in \mathbb{L}_2(\mathbb{R}^2) \left| \parbox{10em}{$f$ admits generalized derivatives s.t. $D_\beta f~\in~\mathbb{L}_2(\mathbb{R}^2)$} \right. \right\}
\end{equation}
equipped with inner product
$$
(\cdot,\cdot)_{\mathbb{H}_I(\mathbb{R}^2)}=(D_\beta \cdot, D_\beta \cdot)_{\mathbb{L}_2(\mathbb{R}^2)}
$$
and where $\beta:\mathbb{R}^2 \mapsto \mathbb{R}^+$ is continuous, bounded from below, isotropic, and with differential operator
$$
D_\beta = \mathcal{F}^{-1} \beta \mathcal{F}.
$$
Now $D_\beta$ is an unbounded self adjoint, positive operator with bounded inverse. This means that $D_\beta$ defines a so-called Gelfand-Triple
$$
\mathbb{H}_I(\mathbb{R}^2) \hookrightarrow \mathbb{L}_2(\mathbb{R}^2) \hookrightarrow \mathbb{H}^*_I(\mathbb{R}^2) = \mathbb{H}_{-I}(\mathbb{R}^2),
$$
where $\mathbb{H}_{-I}(\mathbb{R}^2)$ is equipped with inner product
$$
(\cdot,\cdot)_{\mathbb{H}_{-I}(\mathbb{R}^2)} = (D_\beta^{-1} \cdot ,D_\beta^{-1} \cdot )_{\mathbb{L}_2(\mathbb{R}^2)},
$$
and where all embeddings are dense.

Subsequently, we define the \emph{distributional orientation score transform}\footnote{The unitary representation $\mathcal{U}_g$ naturally extends to $\mathbb{H}_I(\mathbb{R}^2)$ as the multiplier $\beta$ in the Fourier domain is isotropic.}
$$
(\gothic{W}_\psi f)(g) = \left< \psi, \mathcal{U}_{g^{-1}} f \right>,
$$
for all $f \in \mathbb{H}_I(\mathbb{R}^2)$ and all $g \in SE(2)$, where we applied the notation $\left<b,a\right>=b(a)$ for functional $b$ acting on vector $a$. Note that our non-distributional orientation score transform can be rewritten as
$$
(\mathcal{W}_\psi f)(g) = (\mathcal{U}_g \psi , f) = (\psi,\mathcal{U}_g^* f) = (\psi, \mathcal{U}_{g^{-1}} f).
$$
Under a certain condition on $\psi$, we show that operator $\gothic{W}$ is an isometry from $\mathbb{H}_I(\mathbb{R}^2)$ (with $\mathbb{L}_2$-norm) into $\mathbb{L}_2(SE(2))$. Therefore this operator is closable and its closure is an isometry. This bring us to the main result.

\begin{theorem}
Let $\psi \in \mathbb{H}_{-I}(\mathbb{R}^2)$. If $M_{D_\beta^{-1}\psi} = \beta^{-2}$ then $\gothic{W}_\psi$ maps $\mathcal{D}(D_\beta)$ isometrically (w.r.t. $\mathbb{L}_2$-norm) onto a closed subspace of $\mathbb{L}_2(SE(2))$. Moreover, this operator is closable and its isometric closure is given by $D_\beta \mathcal{W}_{D_\beta^{-1}\psi}:\mathbb{L}_2(\mathbb{R}^2) \rightarrow \mathbb{L}_2(SE(2))$, where $\mathcal{W}_{D_\beta^{-1}\psi}$ is the normed non-distributional orientation score transform, w.r.t. kernel $D_\beta^{-1} \psi \in \mathbb{L}_2(\mathbb{R}^2)$.
\end{theorem}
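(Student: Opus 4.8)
The plan is to reduce the distributional transform $\gothic{W}_\psi$ to the ordinary ($\mathbb{L}_2$) orientation score transform $\mathcal{W}_\phi$ of the genuine kernel $\phi := D_\beta^{-1}\psi \in \mathbb{L}_2(\mathbb{R}^2)$, and then to read off both the isometry and the closure from the Plancherel identity already available for $\mathcal{W}_\phi$. The starting point is that the Gelfand-triple duality is, by construction, the continuous $\mathbb{L}_2$-extension
\[
\langle \psi, f\rangle = (D_\beta^{-1}\psi,\, D_\beta f)_{\mathbb{L}_2(\mathbb{R}^2)}, \qquad \psi\in\mathbb{H}_{-I}(\mathbb{R}^2),\ f\in\mathcal{D}(D_\beta),
\]
which agrees with $(\psi,f)_{\mathbb{L}_2}$ whenever $\psi\in\mathbb{L}_2$. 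Since $\beta$ is isotropic and $D_\beta=\mathcal{F}^{-1}\beta\mathcal{F}$ is a Fourier multiplier, $D_\beta$ is invariant under rotations and translations and hence commutes with every $\mathcal{U}_g$, $g\in SE(2)$. Substituting $\mathcal{U}_{g^{-1}}f$ into the pairing and moving $D_\beta$ through $\mathcal{U}_{g^{-1}}$ gives, for $f\in\mathcal{D}(D_\beta)$,
\[
(\gothic{W}_\psi f)(g) = (\phi,\, \mathcal{U}_{g^{-1}}(D_\beta f))_{\mathbb{L}_2} = (\mathcal{W}_\phi D_\beta f)(g),
\]
and, applying the same commutation layerwise in the spatial Fourier variable, $\mathcal{W}_\phi D_\beta f = D_\beta\mathcal{W}_\phi f$. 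Thus on the dense domain $\mathcal{D}(D_\beta)$ one has the clean identity $\gothic{W}_\psi = D_\beta\mathcal{W}_\phi$.

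For the isometry I would invoke the reconstruction/Plancherel identity behind Eq.~(\ref{eq:reconstruction}) and \cite[Thm.~20]{Duits2005}, namely $\|\mathcal{W}_\phi h\|_{\mathbb{L}_2(SE(2))}^2 = \int_{\mathbb{R}^2} |\mathcal{F}h(\www)|^2\, M_\phi(\www)\, {\rm d}\www$ for $h\in\mathbb{L}_2(\mathbb{R}^2)$, which is exactly the content of $\mathcal{W}_\phi^*\mathcal{W}_\phi = \mathcal{F}^{-1}M_\phi\mathcal{F}$. Applying it to $h=D_\beta f$ and using $\mathcal{F}[D_\beta f]=\beta\,\mathcal{F}f$ yields
\[
\|\gothic{W}_\psi f\|_{\mathbb{L}_2(SE(2))}^2 = \int_{\mathbb{R}^2} |\mathcal{F}f(\www)|^2\, \beta(\www)^2\, M_\phi(\www)\, {\rm d}\www .
\]
Here the hypothesis $M_\phi = M_{D_\beta^{-1}\psi} = \beta^{-2}$ does all the work: $\beta^2 M_\phi \equiv 1$, so the integral collapses to $\|\mathcal{F}f\|_{\mathbb{L}_2}^2 = \|f\|_{\mathbb{L}_2}^2$. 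Hence $\gothic{W}_\psi$ is an $\mathbb{L}_2$-isometry on $\mathcal{D}(D_\beta)$.

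It remains to treat closability and to identify the closure. The key point is that $D_\beta\mathcal{W}_\phi$, though written with the unbounded operator $D_\beta$, extends to a \emph{bounded} operator on all of $\mathbb{L}_2(\mathbb{R}^2)$: for arbitrary $h\in\mathbb{L}_2$ the layerwise spatial Fourier symbol of $D_\beta\mathcal{W}_\phi h$ is $\beta(\www)\,\overline{\mathcal{F}[\phi_\theta](\www)}\,\mathcal{F}h(\www)$, so
\[
\|D_\beta\mathcal{W}_\phi h\|_{\mathbb{L}_2(SE(2))}^2 = \int_{\mathbb{R}^2} |\mathcal{F}h(\www)|^2\, \beta(\www)^2\, M_\phi(\www)\, {\rm d}\www = \|h\|_{\mathbb{L}_2}^2,
\]
the finiteness being exactly the cancellation of the unbounded multiplier $\beta$ against the decay $M_\phi=\beta^{-2}$. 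Thus $B := D_\beta\mathcal{W}_{D_\beta^{-1}\psi}:\mathbb{L}_2(\mathbb{R}^2)\to\mathbb{L}_2(SE(2))$ is an everywhere-defined bounded isometry; and since $B$ coincides with $\gothic{W}_\psi$ on the dense subspace $\mathcal{D}(D_\beta)$, the operator $\gothic{W}_\psi$ is closable with closure $\overline{\gothic{W}_\psi}=B$. Finally, any isometry of a Hilbert space has closed range, so the image is a closed subspace of $\mathbb{L}_2(SE(2))$ (indeed a reproducing-kernel subspace, as for ordinary orientation scores).

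The main obstacle I anticipate is not a single computation but the careful bookkeeping required because $\psi$ is only a distribution in $\mathbb{H}_{-I}$: fixing the Gelfand-triple duality so that $\langle\psi,f\rangle=(D_\beta^{-1}\psi,D_\beta f)$ is genuinely the canonical pairing, checking that $D_\beta$ commutes with the \emph{extended} representation $\mathcal{U}_g$ on $\mathbb{H}_I$, and—above all—proving that the formally unbounded composite $D_\beta\mathcal{W}_\phi$ is bounded and everywhere-defined. This last step is where the assumption $M_{D_\beta^{-1}\psi}=\beta^{-2}$ is indispensable, and the Fubini/Plancherel interchange in the layerwise norm computation is the one place where measure-theoretic care (and the boundedness of $M_\phi$) must be supplied.
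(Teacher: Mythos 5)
Your proposal is correct and takes essentially the same route as the paper: both establish the factorization $\gothic{W}_\psi = D_\beta\,\mathcal{W}_{D_\beta^{-1}\psi}$ on $\mathcal{D}(D_\beta)$ via the Gelfand-triple pairing and the commutation $D_\beta\,\mathcal{U}_g = \mathcal{U}_g\,D_\beta$ (isotropy of $\beta$), use the cancellation $\beta^2 M_{D_\beta^{-1}\psi}=1$ in the Plancherel-type norm identity to obtain the $\mathbb{L}_2$-isometry, and deduce closability from the everywhere-defined isometric extension. The only differences are presentational---the paper routes the norm computation through the reproducing-kernel norm of $\mathbb{C}_K^{SE(2)}$ citing unitarity from Duits, while you write the same cancellation directly in the spatial Fourier variable, and you make explicit the standard density argument identifying the closure with the bounded extension, which the paper leaves implicit.
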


\begin{proof}
First we provide some preliminaries. Operator $D_\beta$ is an unbounded, self-adjoint (thereby closed) operator that is bounded from below, with bounded inverse. Therefore, $\mathbb{H}_I(\mathbb{R}^2)$ is again a Hilbert space:
\begin{itemize}
   \item[] Let $(f_n)_{n\in\mathbb{N}}$ be Cauchy in $\mathbb{H}_I(\mathbb{R}^2)$. Then $(D_\beta f_n)_{n\in\mathbb{N}}$ is Cauchy in $\mathbb{L}_2(\mathbb{R}^2)$. Because $\mathbb{L}_2(\mathbb{R}^2)$ is complete we have $D_\beta f_n \rightarrow g$ in $\mathbb{L}_2(\mathbb{R}^2)$. But then, since $D_\beta^{-1}$ is bounded, $f_n$ is also Cauchy in $\mathbb{L}_2(\mathbb{R}^2)$, so $f_n \rightarrow f$ in $\mathbb{L}_2(\mathbb{R}^2)$ to some $f \in \mathbb{L}_2(\mathbb{R}^2)$. Now $D_\beta$ is self adjoint and therefore closed so $f \in \mathcal{D}(D_\beta)$ and $D_\beta f = g$. So we have $D_\beta f_n \rightarrow D_\beta f$ in $\mathbb{L}_2(\mathbb{R}^2)$, so $f_n \rightarrow f$ in $\mathbb{H}_I(\mathbb{R}^2)$, and $f \in \mathbb{H}_I(\mathbb{R}^2)$.
\end{itemize}
The space $\mathbb{H}_{-I}(\mathbb{R}^2)$ is defined as the completion of $\mathbb{H}_I(\mathbb{R}^2)$ and is equipped with inverse product $(f,g)_{-I} \break= (D_\beta^{-1} f, D_\beta^{-1} g)_{\mathbb{L}_2(\mathbb{R}^2)}$. This space is isomorphic to the dual space of $\mathbb{H}_I(\mathbb{R}^2)$ under the pairing
\begin{equation}
\label{eq:EqB}
\left<F,f\right> = (R^{-1} F, R f)_{\mathbb{L}_2(\mathbb{R}^2)}
\end{equation}
for all $F \in \mathbb{H}_{-I}(\mathbb{R}^2)$ and $f \in \mathbb{H}_{I}(\mathbb{R}^2)$. In fact, all embeddings in $\mathbb{H}_I(\mathbb{R}^2) \hookrightarrow \mathbb{L}_2(\mathbb{R}^2) \hookrightarrow \mathbb{H}_{-I}(\mathbb{R}^2)$ are dense. Now every $\mathbb{L}_2(\mathbb{R}^2)$ element is the limit of $\mathbb{H}_I(\mathbb{R}^2)$ elements, i.e., $\mathbb{H}_I(\mathbb{R}^2)$ is dense in $\mathbb{L}_2(\mathbb{R}^2)$. Furthermore, since $D_\beta^{-1}$ is bounded we have $\mathbb{H}_I(\mathbb{R}^2) = D_\beta^{-1}(\mathbb{L}_2(\mathbb{R}^2))$.

Now after these preliminaries, let us continue with the proof. Consider the associated normal orientation score transform
$$
\mathcal{W}_{\tilde{\psi}}: \mathbb{L}_2(\mathbb{R}^2) \mapsto \mathbb{C}_K^{SE(2)}
$$
with $\tilde{\psi} = D_\beta^{-1} \psi \in \mathbb{L}_2(\mathbb{R}^2)$ associated to $\psi \in \mathbb{H}_{-I}(\mathbb{R}^2)$. Then by the results in \cite[Thm.~18 and 19]{Duits2005} this transform is unitary.  This transform maps $\mathbb{L}_2(\mathbb{R}^2)$ onto the unique reproducing kernel subspace $\mathbb{C}_K^{SE(2)}$, with reproducing kernel $K(g,h) = (\mathcal{U}_g\tilde{\psi},\mathcal{U}_h\tilde{\psi})$. In fact we have
$$
\begin{array}{rl}
\Vert \mathcal{W}_{\tilde{\psi}} f \Vert_{\mathbb{C}_k^{SE(2)}}^2 &= \int \limits_{\mathbb{R}^2} \int \limits_{S^1} |\mathcal{F} \mathcal{W}_{\tilde{\psi}} f (\www,\theta) |^2 d\theta M_{\tilde{\psi}}^{-1}(\www) d\www \\
&= \int \limits_{\mathbb{R}^2} |f(\mathbf{x})|^2 d\mathbf{x},
\end{array}
$$
for all $f\in\mathbb{L}_2(\mathbf{R}^2)$.
Therefore $D_\beta \mathcal{W}_{\tilde{\psi}} = D_\beta \mathcal{W}_{D_\beta^{-1}\psi}$ is an isometry from $\mathbb{L}_2(\mathbb{R}^2)$ into $\mathbb{L}_2(SE(2))$ if $M_{D_\beta^{-1}\psi}=\beta^{-2}$ (since $\beta^2 M_{D_\beta^{-1}\psi} = 1$), and moreover if $f \in \mathcal{D}(D_\beta)=\mathbb{H}_I(\mathbb{R}^2)$ we have (using Eq.~(\ref{eq:EqB}) and $D_\beta \mathcal{U}_g = \mathcal{U}_g D_\beta$) that
\begin{multline}
(D_\beta \mathcal{W}_{D_\beta^{-1}\psi}f)(g)
=(\mathcal{W}_{D_\beta^{-1}\psi}D_\beta f)(g)
\\=(\mathcal{U}_g D_\beta^{-1} \psi , D_\beta f)_{\mathbb{L}_2(\mathbb{R}^2)}
= (D_\beta^{-1} \psi , D_\beta \mathcal{U}_{g^{-1}}f)_{\mathbb{L}_2(\mathbb{R}^2)}
\\=\left<\psi,\mathcal{U}_{g^{-1}}f\right>
=(\gothic{W}_\psi f)(g),
\end{multline}
for all $g\in SE(2)$ and for all $f \in \mathbb{H}_I(\mathbb{R}^2)$.

\begin{sloppypar}Now Hilbert space $\mathbb{H}_I(\mathbb{R}^2)$ is dense in $\mathbb{L}_2(\mathbb{R}^2)$ and $\left. D_\beta \mathcal{W}_{D_\beta^{-1}\psi}\right|_{\mathbb{H}_I(\mathbb{R}^2)} = \gothic{W}_\psi$ maps $\mathbb{H}_I(\mathbb{R}^2)$ (with $\mathbb{L}_2$-norm) isometrically into $\mathbb{L}_2(SE(2))$. So $\gothic{W}_\psi$ is closable as it admits the closed extension $D_\beta \mathcal{W}_{D_\beta^{-1}\psi}$ as an extension.
\end{sloppypar}
\end{proof}

\paragraph{Concluding Remark}
By the result of the previous theorem, $\psi \in \mathbb{H}_{-I}(\mathbb{R}^2)$ with $M_{D_\beta^{-1}\psi}=\beta^{-2}$ can be called proper \emph{distributional} wavelets. When insisting on an $\mathbb{L}_2$-isometric mapping between image and score one has to fall back on these kind of wavelets. In case of cake wavelets (proper wavelets of class I \cite[Ch.~4.6.1]{Duits2005}), when $\varrho \rightarrow \infty$ such wavelets typically become oriented $\delta$-distributions.

In case of proper wavelets of class II \cite[Ch.~4.6.2]{Duits2005} (including the kernel proposed by Kalitzin \cite{Kalitzin1999}) such wavelets concentrate around and explode along the x-axis when $N\rightarrow\infty$, \cite[Fig.~4.11 and Ch.~7.3]{Duits2005}. In both cases the limits do not exists in $\mathbb{L}_2$-sense, but they do exist both pointwise and in $\mathbb{H}_{-I}$-sense.

We conclude from the results in this section that the orientation score framework does not insist on images to be bandlimited, and remains valid regardless the sampling size/rate.


\bibliographystyle{spmpsci}      
\bibliography{ErikBekkers}   

%
%

\end{document}